\documentclass[10pt,twocolumn,letterpaper]{article}

\PassOptionsToPackage{table}{xcolor}

\usepackage[pagenumbers]{cvpr} 

\usepackage{algorithm}
\usepackage{algorithmic}
\usepackage{upgreek}
\usepackage{comment}
\usepackage{placeins}
\usepackage{threeparttable}
\usepackage{tikz}
\usepackage{pgfplots}
\pgfplotsset{compat=1.17}
\usetikzlibrary{pgfplots.groupplots}
\usepackage{makecell}
\usepackage{mathtools}
\usepackage{bm}
\IfFileExists{dsfont.sty}{\usepackage{dsfont}}{}
\usepackage{multirow}
\usepackage{amsthm}
\numberwithin{equation}{section}
\usepackage{newfloat}
\usepackage{listings}
\usepackage{microtype}

\renewcommand{\paragraph}[1]{\vspace{.5em}\noindent\textbf{#1}}
\newcommand{\ms}[2]{#1{\scriptsize$\pm$#2}}
\newcommand{\msbest}[2]{\textbf{#1}{\scriptsize$\pm$#2}}

\definecolor{augCutout}{HTML}{4E9A57}   
\definecolor{augAutoAug}{HTML}{3B6BA5}  
\definecolor{augRandAug}{HTML}{C00000}  

\definecolor{darkblue}{RGB}{0, 0, 139}

\theoremstyle{definition}

\newtheorem{theorem}{Theorem}
\newtheorem{lemma}{Lemma}
\newtheorem{assumption}{Assumption}
\newtheorem{corollary}{Corollary}
\newtheorem{remark}{Remark}

\def\xx{\mathbf{x}}

\DeclareCaptionStyle{ruled}{labelfont=normalfont,labelsep=colon,strut=off}
\floatstyle{ruled}
\newfloat{listing}{tb}{lst}{}
\floatname{listing}{Listing}
\floatstyle{plain}

\usepackage[]{todonotes}
\definecolor{HaColor}{RGB}{127, 255, 212}

\definecolor{darkblue}{RGB}{0, 0, 139}
\usepackage[pagebackref,breaklinks,colorlinks]{hyperref}
\hypersetup{
    linkcolor=orange,
    citecolor=darkblue,
    urlcolor=blue
}

\def\paperID{*****} 
\def\confName{CVPR}
\def\confYear{2027}

\title{CAVA: Complementary-Label-Aware Data Valuation}

\author{Tan-Ha Mai\\
National Taiwan University
\and
Hsuan-Tien Lin\\
National Taiwan University
\and
Vu Nguyen\\
Amazon
}

\begin{document}
\maketitle
\begin{abstract}
\emph{Complementary-label learning} (CLL) trains a classifier from labels that name a class an instance does \emph{not} belong to. Such labels are cheap to collect from crowdworkers or vision-language models (VLMs), making them attractive for large-scale annotation. However, both sources can mistakenly assign the true class as the complementary label. Such annotation errors are particularly harmful in CLL because the model is explicitly trained to suppress the correct answer. Optimal transport (OT) data valuation identifies harmful samples by comparing training data with a clean validation set, but transferring it directly to CLL creates a semantic mismatch: label $c$ means ``not class $c$'' in the training but ``class $c$'' in the validation. We show that this makes the OT label groups noisy mixtures whose behavior depends on the noise rate. The transferred score is informative only at sufficiently high noise, becomes uninformative at the crossover rate $1/M$ for $M$ classes, and reverses below it, where practical CLL annotation typically operates. We propose \emph{CAVA}, which introduces a \emph{complementary label cost} and a \emph{label margin} that isolates the contribution of each complementary label to the transport potential. The margin keeps a consistent direction across noise rates, detects diagonal noise with a parameter-free threshold of closed-form precision, and allows detected samples to be recycled as ordinary-label supervision. On synthetic and human-annotated benchmarks, CAVA separates diagonal noise almost perfectly where directly transferred OT valuation performs worse than random. With only a fraction of the clean validation data, it also improves every CLL objective we evaluate, showing that OT-based valuation becomes reliable once the semantics of complementary supervision are modeled explicitly.
\end{abstract}

\begin{figure*}[t]
\centering
\includegraphics[width=0.95\textwidth]{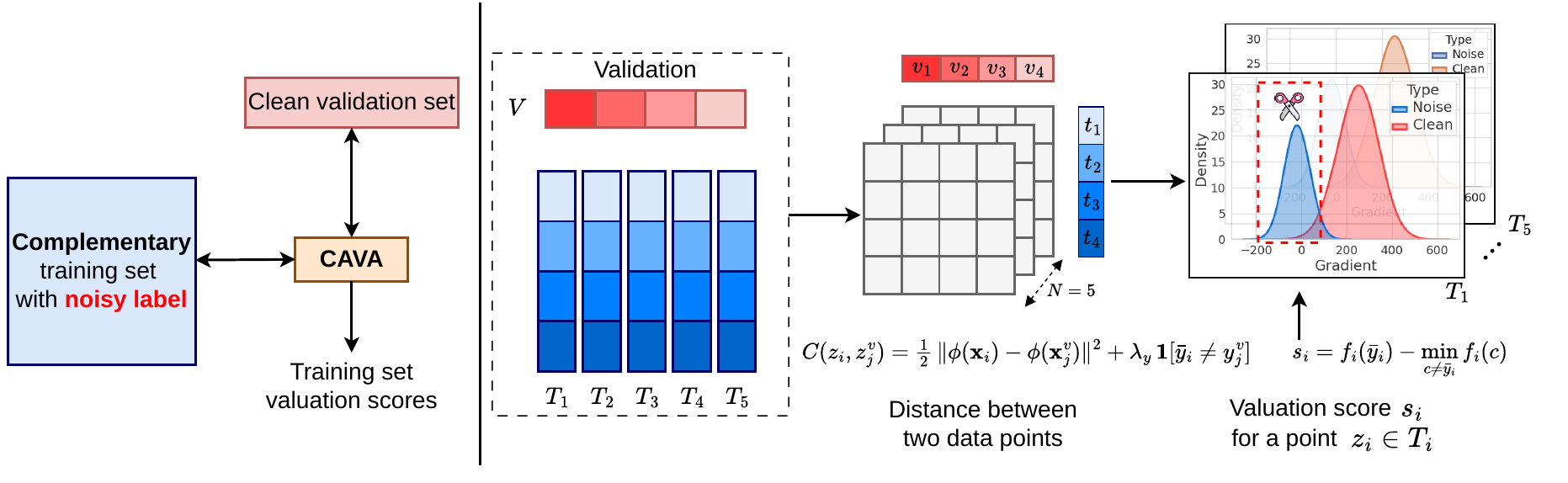}
\caption{\textbf{Overview of CAVA.} The pool is split into tasks, and each task is matched to a clean validation set by entropic OT with the complementary label cost. Each sample is scored by its label margin; samples with a negative margin are flagged as diagonal noise and then removed or recycled.}
\label{fig:cava_overview}
\vspace{-10pt}
\end{figure*}

\section{Introduction}
\label{sec:intro}
Robust vision classifiers depend on large labeled datasets, but annotation remains expensive. Complementary-label learning (CLL) reduces this burden by using a simple form of weak supervision~\citep{sugiyama2022machine}: instead of assigning the correct class, each image is labeled with a class it does \emph{not} belong to~\citep{ishida2017learning,scl2020}. Telling that a photo is \emph{not a truck} may take only a glance, whereas deciding whether an object in fog is a ship or an airplane can require closer inspection. This asymmetry makes complementary labels relatively easy to collect at scale, first from crowdworkers~\citep{wang2024climage} and increasingly from vision-language models (VLMs), which can annotate millions of images at very low cost~\citep{aclimage2025,vlm-2022,vlm-zhu2023minigpt}. CLL is therefore a practical option for large-scale annotation in label spaces of up to about $20$ classes, where current CLL methods are workable~\citep{mai2026cllsurvey}, provided that errors in these labels can be handled reliably.

This condition matters because neither humans nor VLMs are perfect annotators. Both can occasionally assign the \emph{true} class as the class an image does not belong to~\citep{wang2024climage,aclimage2025}. CLL is particularly sensitive to this \emph{diagonal noise} (Figure~\ref{fig1}, Appendix~\ref{diagonal-noise}). A clean complementary label only rules out one incorrect class, whereas a diagonal label explicitly tells the learner to suppress the correct one. Even a small amount of such noise can destabilize training, and common techniques such as augmentation and label sharing can further propagate the error~\citep{mai2025icm,lin2023ls}. A practical CLL pipeline therefore needs a denoising method that works at dataset scale, does not depend on a particular annotator, and provides some indication of how reliable its own decisions are.

Data valuation offers a natural way to do this~\citep{ghorbani2019data}. Among existing approaches, optimal-transport (OT) valuation is attractive because it is scalable and largely independent of the downstream learner~\citep{just2023lava,kesslersava,tarot2025}. OT valuation compares the training set with a small clean validation set using a cost that combines feature distance and label information, then uses the resulting dual potentials to estimate how much each training sample contributes to the transport. In LAVA~\citep{just2023lava}, for example, the label cost measures the distance between feature distributions associated with two label groups, one from training and one from validation. The difficulty in CLL is that these two sets use labels with different meanings: a training label $c$ means ``not class $c$'', while a validation label $c$ means ``class $c$''.

This semantic mismatch changes the valuation in a systematic way. Standard OT label costs implicitly treat samples carrying label $c$ as examples of class $c$, but complementary labels violate this assumption by construction. The training group labeled $c$ is mostly a mixture of the \emph{other} classes; class $c$ appears in that group only through annotation noise. As a result, the transferred label cost acquires an effective weight whose sign depends on the noise rate: it is positive above the crossover rate $1/M$ for $M$ classes, zero at $1/M$, and negative below it (Figure~\ref{fig:transfer_vs_cava}, left). The resulting valuation is therefore informative when noise is frequent, uninformative at the crossover, and reversed when noise is rare, where it can rank corrupted samples as the most trustworthy ones. This distinction is easy to miss because standard synthetic settings often operate above the crossover, where direct transfer appears to work. Real annotation errors are typically much rarer and lie below it, where the transferred score can perform worse than random. The same issue also affects valuation methods that interpret a complementary label as an ordinary class label, including KNN-Shapley~\citep{jia2019efficient} and Data-OOB~\citep{kwon2023dataoob}, which can similarly assign high value to diagonal noise (Figure~\ref{fig:cava_outperform} in Appendix~\ref{app:detect_baselines}).

We address this problem with CAVA, a complementary-label-aware data valuation method for CLL (Figure~\ref{fig:cava_overview}). CAVA keeps the OT formulation but changes the two components through which the semantic mismatch enters. First, the \emph{complementary label cost} is defined between clean classes, so noisy complementary-label groups do not determine the cost itself (Figure~\ref{fig:transfer_vs_cava}, right). Second, the \emph{label margin} measures how much the observed complementary label contributes to the transport potential of a sample, using the same OT solution (Figure~\ref{fig:main_idea}). For a diagonal-noise sample, the observed complementary label is its true class, which the clean validation set matches cheaply; its margin is negative whenever the validation set places the image in that class. Unlike the transferred score, the sign of the margin has the same meaning at every noise rate. It therefore gives an \emph{adaptive cut}: the threshold is fixed at zero, the fraction of flagged samples adapts to the unknown noise rate without a pruning ratio, and the precision of this cut has a closed form in terms of the accuracy of the induced posterior classifier. CAVA uses that precision to decide what to do with flagged samples: they can be removed from the complementary-label pool or, when the detection is sufficiently reliable, recycled as ordinary-label supervision so that the detected error itself becomes useful training signal.

Our contributions are as follows.
\begin{itemize}
\item We identify a \emph{flip law} for transferred data valuation under complementary labels: the effective label-cost weight changes sign with the noise rate, making existing OT valuation blind at $1/M$ and reversed below it, which is the regime relevant to real annotation errors.
\item We propose CAVA, an efficient denoising method for CLL, and show that the sign of its label margin has a fixed meaning across noise rates and numbers of classes. Its zero threshold needs no pruning ratio and has a closed-form precision that can be used to control the risk of recycling flagged samples.
\item On synthetic and human-annotated benchmarks, CAVA separates diagonal noise with AUC above $0.95$ on label-free representations, and almost perfectly on a supervised one, in settings where transferred valuation performs worse than random; it improves a wide range of CLL objectives using only a fraction of the validation data, and scales to a one-million-image augmented pool. We also show that the detection is the agreement of a validation-induced posterior with the given label, which a fitted classifier matches, so the contribution lies in the corrected valuation and the recycling it enables, not in the transport solve.
\end{itemize}

\section{Complementary-Label Learning}
\label{section_cll}

\paragraph{Problem setup.}
Let $\mathcal{X}$ be the input space and $\mathcal{Y}=\{1,\ldots,M\}$ the label space. Each instance $\xx\in\mathcal{X}$ has a true label $y\in\mathcal{Y}$ that is never observed during training. In complementary-label learning (CLL)~\citep{ishida2017learning}, the annotation is a \emph{complementary label} $\bar y\in\mathcal{Y}$ that names a class the instance does \emph{not} belong to, so a correct one satisfies $\bar y\neq y$. The learner receives a pool $\bar{\mathcal{D}}=\{(\xx_i,\bar y_i)\}_{i=1}^{N}$ drawn i.i.d.\ from an unknown distribution and seeks a classifier $g:\mathcal{X}\to\mathbb{R}^M$ whose prediction $\arg\max_{k}g_k(\xx)$ recovers the true label.

\paragraph{Diagonal noise.}
Since an annotation may be wrong, we call $\bar y_i$ the \emph{given label}. It is \emph{valid} if $\bar y_i\neq y_i$, whichever wrong class it names, and \emph{diagonal noise} if $\bar y_i=y_i$, the only case in which the exclusion itself is wrong. We write $\mathcal{N}=\{i:\bar y_i=y_i\}$ and $\eta=\mathbb{P}(\bar y=y)$ for the noise rate. No assumption is made on how valid labels are distributed; the analysis of Section~\ref{cava} uses the uniform model of Assumption~\ref{ass:noise_model}, which is also our synthetic protocol.

\paragraph{Learning objective.}
A CLL method minimizes the empirical counterpart of the complementary risk
\begin{equation}
\label{eq:cll_risk}
R_{\mathrm{comp}}(g)
=
\mathbb{E}_{(\xx,\bar{y})}
\bigl[
\mathcal{L}_{\mathrm{comp}}\bigl(\bar{y}, g(\xx)\bigr)
\bigr],
\end{equation}
where $\mathcal{L}_{\mathrm{comp}}$ is a complementary-label loss such as the unbiased estimator~\citep{ishida2019complementary}, SCL~\citep{scl2020}, forward correction~\citep{fwd2018}, or CPE~\citep{cpe2023}. Each of these losses decreases when predicted probability moves away from the class named by $\bar y$, so a diagonal-noise sample pushes the model away from its \emph{true} class~\citep{wang2024climage,mai2025icm}.

\paragraph{Valuation setting.}
As in OT-based data valuation~\citep{just2023lava,kesslersava}, a small clean validation set $\mathcal{D}_{v}=\{(\xx_j^v,y_j^v)\}_{j=1}^{N_v}$ with ordinary labels is available for valuation only, never for training. The task is to score every sample of $\bar{\mathcal{D}}$ from $\bar{\mathcal{D}}$, $\mathcal{D}_v$, and a fixed feature extractor $\phi$ (Section~\ref{sec:cava_valuation}), so that the ranking separates $\mathcal{N}$ from the valid samples, and then to remove the flagged samples or recycle them with $\bar y_i$ as an ordinary label. The selection observes neither $y_i$ nor $\mathcal{N}$; the only supervised component is the fixed feature extractor $\phi$, whose provenance is stated in Appendix~\ref{sec:full_exp_details}.

\begin{figure*}[t]
\centering
\includegraphics{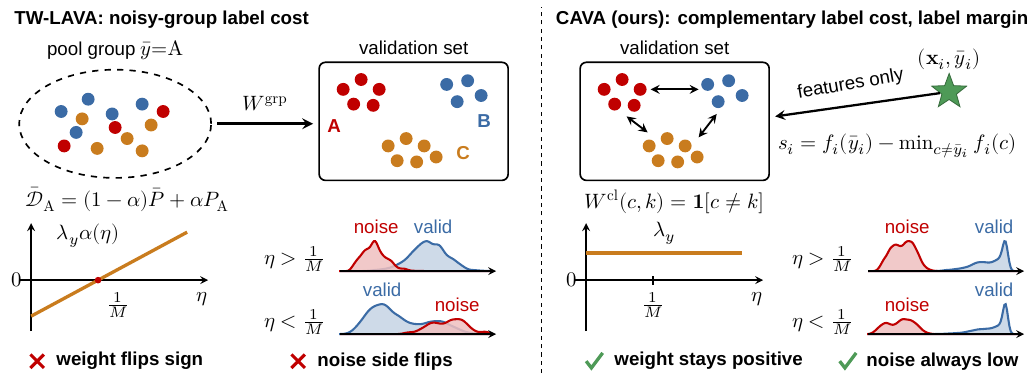}
\caption{\textbf{From TW-LAVA to CAVA.} \emph{Left}: TW-LAVA compares noisy label groups with clean classes, so its label weight $\lambda_y\alpha(\eta)$ changes sign at $\eta=1/M$ and the noise switches sides (Theorem~\ref{thm:flip}). \emph{Right}: CAVA compares clean classes only, so the sign of its margin is fixed: negative exactly when the given label is the most likely class under the transport posterior (Theorem~\ref{thm:margin}). Scores are measured on CIFAR-10 ($\eta=25\%$, top) and CLCIFAR-10 (bottom).}
\label{fig:transfer_vs_cava}
\end{figure*}

\section{CAVA: \underline{C}omplementary-Label \underline{A}ware Data \underline{Va}luation}
\label{cava}
We first describe the direct transfer of OT valuation to CLL, which we call TW-LAVA, and show why it fails (Sections~\ref{sec:cava_valuation} and~\ref{sec:cava_flip}). We then introduce the two components of CAVA, the complementary label cost and the label margin (Section~\ref{sec:cava_refmargin}), and the selection and recycling built on them (Sections~\ref{sec:cava_selection} and~\ref{sec:cava_recycling}). Proofs are in Appendix~\ref{app:refmargin_proofs}.

\subsection{TW-LAVA: OT valuation transferred to CLL}
\label{sec:cava_valuation}
TW-LAVA keeps the label cost and the score of LAVA~\citep{just2023lava} and adds a task-wise decomposition that keeps each transport problem small. The pool $\bar{\mathcal{D}}$ is split into $T$ equal-size random tasks $I_1,\ldots,I_T\subseteq\{1,\ldots,N\}$, and each task is valued against the validation set $\mathcal{D}_v$ independently. For task $t$, let $\mu_t=\sum_{i\in I_t}a_i\delta_{z_i}$ with $z_i=(\xx_i,\bar y_i)$ and $\nu=\sum_{j=1}^{N_v}b_j\delta_{z_j^v}$ with $z_j^v=(\xx_j^v,y_j^v)$, with uniform weights $a_i$ and $b_j$. The ground cost combines a feature term and a label term,
\begin{equation}
C(z_i,z_j^v)=\lambda_\xx\,C_{\mathrm{feat}}(\xx_i,\xx_j^v)+\lambda_y\,C_{\mathrm{label}}(\bar y_i,y_j^v),
\label{eq:cava_cost}
\end{equation}
with the feature cost
\begin{equation}
C_{\mathrm{feat}}(\xx_i,\xx_j^v)=\tfrac12\bigl\|\phi(\xx_i)-\phi(\xx_j^v)\bigr\|_2^2
\label{eq:cava_feat_cost}
\end{equation}
of a fixed feature extractor $\phi$, whose outputs are L2-normalized and rescaled to a median feature cost of one; we then set $\lambda_\xx=1$, so $\lambda_y>0$ is the only cost weight. LAVA compares label groups across the two sets, which under complementary labels gives the \emph{noisy-group label cost}
\begin{equation}
C_{\mathrm{label}}(\bar y_i,y_j^v)=\tfrac12W_2^2\bigl(\bar{\mathcal{D}}_{\bar y_i},\mathcal{D}^v_{y_j^v}\bigr)\eqqcolon W^{\mathrm{grp}}(\bar y_i,y_j^v),
\label{eq:cava_label_cost}
\end{equation}
where $\bar{\mathcal{D}}_{c}$ is the feature distribution of the pool samples with given label $c$ and $\mathcal{D}^v_{k}$ that of the validation samples of class $k$. Each task solves the entropic OT problem
\begin{equation}
W_{C,\varepsilon}(\mu_t,\nu)=\min_{\pi\in\Pi(a,b)}\sum_{i,j}\pi_{ij}C(z_i,z_j^v)+\varepsilon\sum_{i,j}\pi_{ij}\log\pi_{ij},
\label{eq:cava_entropic_ot}
\end{equation}
with $\varepsilon=\varepsilon_{\mathrm{OT}}>0$ and optimal pool-side and validation-side potentials $u^{(t),*}$ and $w^{(t),*}$. TW-LAVA scores sample $i$ by the calibrated potential of LAVA,
\begin{equation}
v_i^{(t)}=u_i^{(t),*}-\frac{1}{|I_t|-1}\sum_{k\in I_t,\,k\neq i}u_k^{(t),*},
\label{eq:cava_score}
\end{equation}
the change of the transport cost when sample $i$ receives more mass.

\subsection{Why TW-LAVA fails: a flip at $\eta=1/M$}
\label{sec:cava_flip}
The noisy-group label cost assumes that the samples carrying label $c$ are instances of class $c$. Under complementary labels they are not. If classes are balanced and a given label equals the true label with probability $\eta$ and is otherwise uniform over the other $M-1$ classes, the group with label $c$ is the mixture
\begin{equation}
\bar{\mathcal{D}}_{c}=(1-\alpha)\,\bar P+\alpha\,P_c,
\qquad
\alpha(\eta)=\frac{M\eta-1}{M-1},
\label{eq:group_mix}
\end{equation}
where $P_k$ is the feature distribution of class $k$ and $\bar P$ the average over classes. The group leans toward its own class when noise is frequent and away from it when noise is rare, and the label cost inherits this lean.

\begin{theorem}[Flip law]
\label{thm:flip}
\textit{Under this noise model, (i) at $\eta=1/M$ the given labels leave the ground cost for any label divergence, so no score computed from the transport can separate diagonal noise from valid samples; (ii) for a squared mean-embedding label divergence,
\begin{equation}
W^{\mathrm{grp}}(c,k)=\alpha(\eta)\,W^{\mathrm{cl}}_{D}(c,k)+r_c+t_k,
\label{eq:flip_law}
\end{equation}
where $W^{\mathrm{cl}}_{D}$ is the same divergence between clean classes and the shifts $r_c,t_k$ do not change the transport plan.}
\end{theorem}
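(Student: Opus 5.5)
The proof starts from the mixture representation \eqref{eq:group_mix}, which I verify first. Under the noise model with balanced classes, a pool sample with given label $c$ has true class $c$ with probability $\eta$. It has each other class $k\neq c$ with probability $(1-\eta)/(M-1)$. The group distribution is therefore $\eta P_c+\frac{1-\eta}{M-1}\sum_{k\neq c}P_k$. Writing $\sum_{k\neq c}P_k=M\bar P-P_c$ gives $\bar{\mathcal{D}}_c=\frac{M(1-\eta)}{M-1}\bar P+\bigl(\eta-\frac{1-\eta}{M-1}\bigr)P_c$. The coefficient of $P_c$ simplifies to $\frac{M\eta-1}{M-1}=\alpha(\eta)$, and the coefficient of $\bar P$ is $1-\alpha$.

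\emph{Part (i).} At $\eta=1/M$ we have $\alpha=0$, so every group satisfies $\bar{\mathcal{D}}_c=\bar P$ regardless of $c$. Hence for any label divergence $D$ the noisy-group label cost is $D(\bar P,\mathcal{D}^v_k)$, which depends on $k$ only. The ground cost \eqref{eq:cava_cost} is then $C(z_i,z_j^v)=\lambda_\xx C_{\mathrm{feat}}(\xx_i,\xx_j^v)+\lambda_y g(y_j^v)$, a function of $(\xx_i,\xx_j^v,y_j^v)$ that no longer involves $\bar y_i$. The entropic problem \eqref{eq:cava_entropic_ot}, its plan, its potentials and the score \eqref{eq:cava_score} are therefore measurable functions of the features and the validation set alone.

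It remains to show that such a score cannot separate $\mathcal N$ from the valid samples. At $\eta=1/M$ the joint law of $(\xx,\bar y)$ factorizes: for every $c$ the conditional law of $\xx$ given $\bar y=c$ is $\bar P$, so $\bar y$ is independent of $\xx$ and uniform. The events $\bar y=y$ and $\bar y\neq y$ then carry the same conditional distribution of $\xx$. Indeed, $\mathbb{P}(\bar y=y\mid \xx)=\sum_k \mathbb{P}(y=k\mid\xx)\cdot\frac1M=\frac1M$ under the noise model. Since $\mathbb{P}(\bar y=y\mid\xx)$ is constant in $\xx$, any score built from features only has the same distribution on noisy and valid samples, which gives AUC $1/2$. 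The step I expect to need care is that the score of sample $i$ also depends on the other samples in its task. I would handle this by conditioning on the whole task's features: noise indicators are i.i.d.\ Bernoulli$(1/M)$ independent of all features, so exchangeability gives the claim.

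\emph{Part (ii).} I take the squared mean-embedding divergence $D(P,Q)=\tfrac12\|m_P-m_Q\|^2$, where $m$ is the (kernel or feature) mean map, which is linear in the measure. Then $m_{\bar{\mathcal{D}}_c}=\bar m+\alpha(m_c-\bar m)$, and I set $\Delta_c=m_c-\bar m$ and $e_k=m^v_k-\bar m$. Expanding,
\begin{equation*}
\tfrac12\|\alpha\Delta_c-e_k\|^2=\tfrac12\alpha^2\|\Delta_c\|^2-\alpha\langle\Delta_c,e_k\rangle+\tfrac12\|e_k\|^2 .
\end{equation*}
Likewise, the clean-class divergence expands as $W^{\mathrm{cl}}_D(c,k)=\tfrac12\|\Delta_c\|^2-\langle\Delta_c,e_k\rangle+\tfrac12\|e_k\|^2$. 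Multiplying this by $\alpha$ and subtracting gives $W^{\mathrm{grp}}(c,k)=\alpha W^{\mathrm{cl}}_D(c,k)+r_c+t_k$ with $r_c=\tfrac12(\alpha^2-\alpha)\|\Delta_c\|^2$ and $t_k=\tfrac12(1-\alpha)\|e_k\|^2$.

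Finally, shifts of the form $r_{\bar y_i}+t_{y^v_j}$ add a function of $i$ alone plus a function of $j$ alone to the cost. Under the marginal constraints $\Pi(a,b)$, such a shift changes the objective of \eqref{eq:cava_entropic_ot} by the constant $\lambda_y\bigl(\sum_i a_i r_{\bar y_i}+\sum_j b_j t_{y^v_j}\bigr)$, so the optimal plan is unchanged. The dual potentials move by $\lambda_y r_{\bar y_i}$ and $\lambda_y t_{y_j^v}$. I would note that this shift does enter the calibrated score \eqref{eq:cava_score}, but this does not contradict the statement, which claims only that the plan is unchanged.
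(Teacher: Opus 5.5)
Your proposal is correct and follows essentially the paper's route: the mixture form with $\alpha(1/M)=0$ removes $\bar y_i$ from the cost, and the noise indicators are i.i.d.\ and independent of the features, which gives (i); linearity of the mean embedding, a square expansion, and invariance of the plan under row/column shifts give (ii). The only differences are minor. You obtain plan invariance from the primal objective changing by a constant on $\Pi(a,b)$, whereas the paper checks that the shifted potentials reproduce the same Gibbs-form plan. Your remark that the row shift enters the calibrated score matches the paper's caveat that the scores coincide only for equal class radii.
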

At the level of the transport plan, TW-LAVA therefore uses a clean-class label cost with the \emph{effective} weight $\lambda_y\alpha(\eta)$: positive above $\eta=1/M$, zero at it, and negative below, where it rewards a given label that agrees with the validation class and reverses the score (the scores themselves coincide when the clean classes have equal radii; Appendix~\ref{app:refmargin_proofs}). Part (ii) is exact for mean-embedding divergences; for the Sinkhorn class divergence of the pipeline we observe the same sign change with a smaller slope (Remark~\ref{rem:refmargin_scope}). Synthetic benchmarks sit above the crossover ($\alpha=1/6$ at $\eta=25\%$, $M=10$), and human annotation errors sit below it ($\alpha<0$ on CLCIFAR-10, whose labels are not uniform, so the law applies there only qualitatively). On CIFAR-10 the TW-LAVA score crosses chance exactly at $\eta=1/M$ and is worse than random below it (Figure~\ref{fig:flip}, Appendix~\ref{app:refmargin_proofs}).

\begin{figure}[t]
\centering
\includegraphics[scale=0.9]{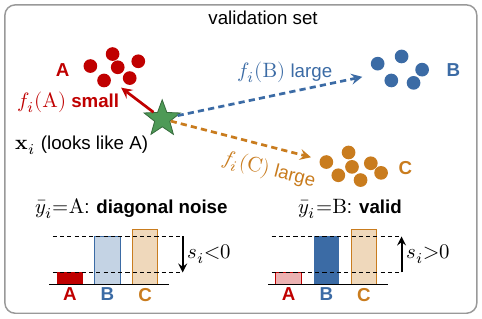}
\caption{\textbf{The label margin.} The image looks like class A, so it is cheap to transport under label A and costly under B or C. The bars show $f_i(c)$, and the margin $s_i$ compares the given label (solid bar) with the cheapest other label. Given A, $s_i<0$ and the sample is flagged as diagonal noise; given B, $s_i>0$ and the label is valid.}
\label{fig:main_idea}
\end{figure}

\subsection{Complementary label cost and label margin}
\label{sec:cava_refmargin}
CAVA keeps the transport problem of TW-LAVA and changes the two ingredients that the flip runs through (Figure~\ref{fig:transfer_vs_cava}, right).

\textbf{Complementary label cost.} The label cost is defined between clean classes, so the noisy groups never enter the cost and the given label is only looked up:
\begin{equation}
C_{\mathrm{label}}(\bar y_i,y_j^v)=W^{\mathrm{cl}}(\bar y_i,y_j^v),
\qquad
W^{\mathrm{cl}}(c,k)=\mathbf{1}[c\neq k].
\label{eq:ref_label_cost}
\end{equation}
The class-distance variant $W^{\mathrm{cl}}_{D}$ of Theorem~\ref{thm:flip} performs within $0.01$ AUC of this $0/1$ cost (Appendix~\ref{app:refmargin_proofs}), and the $0/1$ cost makes the analysis exact.

\textbf{Label margin.} The calibrated score of Eq.~\eqref{eq:cava_score} measures the effect of the \emph{mass} of a sample, so a typical image with a wrong label and an atypical image with a right label can score alike. We measure instead what the \emph{label} contributes. With the validation potential $w=w^{(t),*}$ of the problem under Eq.~\eqref{eq:ref_label_cost}, the potential of sample $i$ under label $c$ is
\begin{equation}
f_i(c)=-\varepsilon\log\sum_j b_j\exp\Bigl(\tfrac{w_j-C_{\mathrm{feat}}(\xx_i,\xx_j^v)-\lambda_y\mathbf{1}[c\neq y^v_j]}{\varepsilon}\Bigr),
\label{eq:cava_fc}
\end{equation}
with $f_i(\bar y_i)=u_i^{(t),*}$, so no further transport problem is solved. The \emph{label margin} compares the given label with its best alternative,
\begin{equation}
s_i=f_i(\bar y_i)-\min_{c\neq\bar y_i}f_i(c).
\label{eq:cava_margin}
\end{equation}
A diagonal-noise sample names its own class, the one label the validation set serves cheaply, so its margin is negative whenever the transport places the image in that class (Figure~\ref{fig:main_idea}).

\begin{theorem}[Direction of the label margin]
\label{thm:margin}
\textit{For the $0/1$ complementary label cost and any validation potential $w$,
\begin{equation}
s_i=\psi\bigl(p_i(\bar y_i)\bigr)-\psi\Bigl(\max_{c\neq\bar y_i}p_i(c)\Bigr),
\label{eq:margin_closed}
\end{equation}
where $p_i(k)\propto\sum_{j:y^v_j=k}b_j\,e^{(w_j-C_{\mathrm{feat}}(\xx_i,\xx_j^v))/\varepsilon}$ is the class posterior induced by the transport and $\psi$ is strictly decreasing. Hence $s_i\le0$ if and only if $\bar y_i$ is the most likely class under $p_i$, for every noise rate and every $M$.}
\end{theorem}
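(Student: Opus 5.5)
The plan is to rewrite $f_i(c)$ in terms of per-class sums and then read off the margin. Define for each class $k$ the nonnegative quantity
\[
A_i(k)=\sum_{j:y^v_j=k} b_j\exp\Bigl(\tfrac{w_j-C_{\mathrm{feat}}(\xx_i,\xx_j^v)}{\varepsilon}\Bigr),
\qquad S_i=\sum_{k}A_i(k),
\]
so that $p_i(k)=A_i(k)/S_i$. Since the $0/1$ label cost depends on $j$ only through whether $y^v_j=c$, the sum in Eq.~\eqref{eq:cava_fc} splits into the terms with $y^v_j=c$ (no penalty) and the rest (penalty $\lambda_y$). Writing $\kappa=e^{-\lambda_y/\varepsilon}\in(0,1)$, this gives
\[
f_i(c)=-\varepsilon\log\bigl(A_i(c)+\kappa\,(S_i-A_i(c))\bigr)
=-\varepsilon\log S_i-\varepsilon\log\bigl(\kappa+(1-\kappa)\,p_i(c)\bigr).
\]
Define $\psi(p)=-\varepsilon\log\bigl(\kappa+(1-\kappa)p\bigr)$ on $[0,1]$. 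Because $\lambda_y>0$ and $\varepsilon>0$, we have $1-\kappa>0$, so $\psi$ is strictly decreasing. This yields $f_i(c)=\psi(p_i(c))-\varepsilon\log S_i$, where the second term does not depend on $c$. Note that $w$ enters only through the $A_i(k)$, so this holds for any validation potential, optimal or not.

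Next, I subtract to get the margin. The common term $-\varepsilon\log S_i$ cancels in Eq.~\eqref{eq:cava_margin}. Because $\psi$ is strictly decreasing, $\min_{c\neq\bar y_i}\psi(p_i(c))=\psi\bigl(\max_{c\neq\bar y_i}p_i(c)\bigr)$, which gives Eq.~\eqref{eq:margin_closed}. Strict monotonicity then gives $s_i\le 0$ if and only if $p_i(\bar y_i)\ge\max_{c\neq\bar y_i}p_i(c)$, that is, exactly when $\bar y_i$ is a most likely class, with ties counted as flagged. The noise rate $\eta$ and the number of classes $M$ never appear in the derivation, since the cost looks up $\bar y_i$ only through $W^{\mathrm{cl}}$ and never uses the noisy groups. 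This gives the "every noise rate and every $M$" clause.

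The main obstacle is technical rather than conceptual: making sure $S_i>0$ so the logarithms and $p_i$ are well defined. Each term is strictly positive as long as some $b_j>0$, which holds for uniform weights. A secondary point is to check the identity $f_i(\bar y_i)=u_i^{(t),*}$, which is the first-order (softmin) optimality condition of the entropic dual at the optimal $w$. This identity is not needed for the margin formula itself, which holds for arbitrary $w$.
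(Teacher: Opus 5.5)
Your proposal is correct and follows the paper's proof essentially step for step. Like the paper, you split the soft-min sum by whether $y^v_j=c$, factor out the normalizer so that $f_i(c)=-\varepsilon\log S_i+\psi(p_i(c))$ with $\psi(p)=-\varepsilon\log(\kappa+(1-\kappa)p)$, cancel the class-independent typicality term in the margin, and conclude from the strict monotonicity of $\psi$. Your $\kappa$ and $S_i$ are the paper's $\rho$ and $Z_i$.
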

The theorem fixes the sign of the margin relative to the posterior $p_i$, so $s_i\le0$ is a cut that needs no pruning ratio; how often diagonal noise is flagged is the accuracy of the posterior classifier on it (Theorem~\ref{thm:detect}), which depends on the features and the validation set, not on $\eta$ or $M$. The closed form also makes the score cheap: one softmax over the validation set gives $p_i$ and then $s_i$. As $\varepsilon\to0$ the margin reduces to a nearest-neighbor label check, while a finite $\varepsilon$ keeps the transport weighting of the whole validation set.

\subsection{Task-wise selection}
\label{sec:cava_selection}
CAVA selects within each task, which keeps every transport problem small and adapts the cut to each task. The \emph{fixed cut} flags the fraction $p$ of each task with the lowest margins, as prior OT valuation methods prune a fixed fraction~\citep{just2023lava,kesslersava}. The \emph{adaptive cut} needs no budget: its threshold is fixed at zero by Theorem~\ref{thm:margin}, so it flags exactly the samples whose given label is the most likely class, and the fraction it removes adapts to the noise rate of each task without knowing $\eta$:
\begin{equation}
I_{t}^{\mathrm{rem}}=\{i\in I_t:\ s_i\le0\},
\qquad
I^{\mathrm{keep}}=\bigcup_{t=1}^{T}\bigl(I_t\setminus I_{t}^{\mathrm{rem}}\bigr).
\label{eq:strategy2_zero}
\end{equation}
TW-LAVA has no such cut, because the sign of its score depends on whether $\eta$ lies above or below $1/M$, which is unknown in practice; it is therefore used with the fixed cut. Without recycling, flagged samples are discarded and the classifier is trained on the retained pool,
\begin{equation}
\mathcal{L}_{\mathrm{sel}}=\frac{1}{|I^{\mathrm{keep}}|}\sum_{i\in I^{\mathrm{keep}}}\mathcal{L}_{\mathrm{comp}}\bigl(\bar y_i,g(\xx_i)\bigr),
\label{eq:cava_selection_objective}
\end{equation}
where $\mathcal{L}_{\mathrm{comp}}$ is any CLL loss; for the fixed cut, $I^{\mathrm{keep}}$ is the complement of the flagged tails.

\subsection{Recycling gated by precision}
\label{sec:cava_recycling}
A diagonal-noise label satisfies $\bar y_i=y_i$, so it is wrong as an exclusion but correct as an ordinary label. CAVA can therefore recycle the flagged set $I^{\mathrm{rec}}=\bigcup_t I_t^{\mathrm{rem}}$ into a cross-entropy branch and train
\begin{equation}
\begin{aligned}
\mathcal{L}_{\mathrm{CAVA}}={}&\frac{\kappa}{|I^{\mathrm{rec}}|}\sum_{i\in I^{\mathrm{rec}}}\mathcal{L}_{\mathrm{CE}}\bigl(\bar y_i,g(\xx_i)\bigr)\\
&+\frac{1-\kappa}{|I^{\mathrm{keep}}|}\sum_{i\in I^{\mathrm{keep}}}\mathcal{L}_{\mathrm{comp}}\bigl(\bar y_i,g(\xx_i)\bigr),
\end{aligned}
\label{eq:cava_mixed_objective}
\end{equation}
with weight $\kappa\in[0,1]$. A falsely recycled valid sample injects a wrong ordinary label, so recycling helps only when the flagged set is pure. The adaptive cut makes this purity predictable: by Theorem~\ref{thm:margin}, a sample is flagged exactly when the posterior classifier $h_i=\arg\max_k p_i(k)$ agrees with its given label.

\begin{theorem}[Precision-gated recycling]
\label{thm:detect}
\textit{Let $F=\{i:s_i\le0\}$ and let $e$ be the error rate of $h$ against the true labels. Under the noise model of Section~\ref{sec:cava_flip}, if the errors of $h$ are independent of the label draw ($h$ depends on the given labels only through the validation potential $w$), the precision of $F$ is
\begin{equation}
\pi=\frac{\eta\,(1-e)}{\eta\,(1-e)+(1-\eta)\,e/(M-1)},
\label{eq:detect_precision}
\end{equation}
which exceeds $\eta$ whenever $e\le1-1/M$. Conditionally on the selection, training the cross-entropy branch on $F$ has excess true-label risk at most $2B(1-\pi)+2B|\pi-(1-e)|+\Delta_{\mathrm{gen}}(|F|,\delta)$ with probability $1-\delta$, where $B$ bounds the loss, the middle term is the distribution shift of the flagged set, and $\Delta_{\mathrm{gen}}$ vanishes as $|F|$ grows (Theorem~\ref{thm:recycle_truelabel}, Remark~\ref{rem:selection}).}
\end{theorem}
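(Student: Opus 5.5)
By Theorem~\ref{thm:margin}, the flagged set is $F=\{i:h_i=\bar y_i\}$, so I would reduce the precision claim to a computation with a two-way table: split samples by whether $\bar y_i=y_i$ (diagonal, probability $\eta$) or not, and by whether $h_i=y_i$. Under the noise model, given $y_i$, the label $\bar y_i$ equals $y_i$ with probability $\eta$ and is otherwise uniform over the $M-1$ other classes. The independence assumption lets me treat $h_i$ as fixed when the label is drawn.

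\textbf{Precision.} For a diagonal sample, $i\in F$ exactly when $h_i=y_i$, which has probability $1-e$. So $\mathbb{P}(i\in F,\ \bar y_i=y_i)=\eta(1-e)$. For a valid sample, $i\in F$ requires $h_i\neq y_i$, which has probability $e$. It also requires the uniformly drawn wrong label to hit $h_i$, which has probability $1/(M-1)$. This gives the mass $(1-\eta)e/(M-1)$. Bayes' rule then yields Eq.~\eqref{eq:detect_precision}.

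\textbf{Comparison with $\eta$.} The claim $\pi\ge\eta$ is equivalent to $(1-e)\ge \eta(1-e)+(1-\eta)e/(M-1)$, i.e.\ $(1-\eta)\bigl[(1-e)-e/(M-1)\bigr]\ge0$. This in turn holds iff $1-e\ge e/(M-1)$, i.e.\ $e\le 1-1/M$. Strict inequality holds when $\eta<1$ and $e<1-1/M$; I would state the boundary case explicitly.

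\textbf{Risk bound.} Let $\ell$ be the cross-entropy loss clipped to $[0,B]$. Let $Q_F$ be the conditional law of $(\xx,\bar y)$ on $F$, and let $Q_F^{\star}$ be the law of $(\xx,y)$ on $F$. These differ only on the falsely recycled fraction $1-\pi$, so for every $g$ the risks under the given and true labels differ by at most $B(1-\pi)$. Applied to both the learned $\hat g$ and the comparator, this gives the first term $2B(1-\pi)$. Next, the flagged set is not distributed like the population: it over-represents points where $h$ is correct. I would write $Q_F^{\star}$ as a mixture of the law on $\{h=y\}$ and its complement, with weights $\pi$-related versus the population weight $1-e$. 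Bounding the total variation between this and the population law by $|\pi-(1-e)|$ gives a risk gap of $B|\pi-(1-e)|$ for each of the two risks, hence $2B|\pi-(1-e)|$. Finally, conditional on $F$ (which depends on $\bar y$ only through $w$, as in Remark~\ref{rem:selection}), the samples in $F$ are i.i.d.\ from $Q_F$. A standard uniform convergence bound (Rademacher or finite class with Hoeffding) then gives $\Delta_{\mathrm{gen}}(|F|,\delta)$ with probability $1-\delta$. Summing the three contributions gives the stated bound, deferring details to Theorem~\ref{thm:recycle_truelabel}.

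\textbf{Main obstacle.} I expect the distribution-shift term to be the hard part. Identifying the total-variation gap exactly with $|\pi-(1-e)|$ needs care. My first attempt would condition on the event $h=y$, assuming the conditional feature law given correctness is the same on $F$ as in the population. I would use the independence assumption to justify this and state it as the modelling condition that Remark~\ref{rem:selection} makes precise.
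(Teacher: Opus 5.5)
Your precision computation and the enrichment condition follow the paper's proof exactly. You reduce $F$ to $\{h_i=\bar y_i\}$ by Theorem~\ref{thm:margin}(iii), count the joint masses $\eta(1-e)$ and $(1-\eta)e/(M-1)$, and rearrange $\pi\ge\eta$ to $e\le 1-1/M$. Your caveat is also right: at $e=1-1/M$ one gets $\pi=\eta$, so ``exceeds'' should read ``is at least''.

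The risk part is where you take a different and more complete route. The paper's proof of this part only inserts the precision into Theorem~\ref{thm:recycle_truelabel}. Its Step~2 compares the clean empirical risk on the recycled set directly with the population risk $R_{\mathrm{true}}$, as if the flagged images were a sample from the population. That yields $2B(1-\pi)+\Delta_{\mathrm{gen}}$ and never produces the middle term, which appears only in the short statement. Your chain is label corruption inside $F$, then the shift from the law of $(\xx,y)$ on $F$ to the population, then uniform convergence under $Q_F$. That chain is what actually justifies $2B|\pi-(1-e)|$. The paper's route buys simplicity and a bound in the realized precision $\pi_{\mathrm{det}}$; yours buys the shift term at the price of using the structure of the uniform model.

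The step you flag as the obstacle is not an extra assumption. With $h$ held fixed, Assumption~\ref{ass:noise_model} gives $\mathbb{P}(i\in F\mid \xx_i,y_i)=\eta$ on $\{h=y\}$ and $(1-\eta)/(M-1)$ on $\{h\neq y\}$, constant on each piece. Let $A$ and $A'$ be the population law conditioned on $\{h=y\}$ and on $\{h\neq y\}$. Then the law of $(\xx,y)$ on $F$ is $\pi A+(1-\pi)A'$, against $(1-e)A+eA'$ for the population. The weight is exactly $\pi$ because the flagged noise samples are those where $h$ is right and the flagged valid ones those where it is wrong. Since $A$ and $A'$ sit on disjoint events, the total variation equals $|\pi-(1-e)|$ exactly, not just at most.

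One phrase needs fixing: $F$ depends on $\bar y_i$ directly through the test $h_i=\bar y_i$, not only through $w$. The i.i.d.\ claim needs $h$ itself to be independent of the pool. That holds exactly for $w=0$, or under the sample-splitting device of Remark~\ref{rem:selection}; the flagged samples are then i.i.d.\ from $Q_F$ given $|F|$. The paper defers this point to the same remark, so it is not a gap relative to its proof. Finally, as in the paper, your chain gives $2\Delta_{\mathrm{gen}}$, which is absorbed into the constant of $\Delta_{\mathrm{gen}}$.
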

A valid sample is flagged only when $h$ errs \emph{and} its wrong guess equals the given label, an extra factor $1/(M-1)$, which is why the flagged set is far purer than the base rate. Neither the bias term $2B(1-\pi)$ nor the shift term, which is about $e$ when the flagged set is pure, shrinks with more data, so the precision and the recall of the posterior classifier decide the mode: recycling when it is high, selection only (Eq.~\eqref{eq:cava_selection_objective}) otherwise. Eq.~\eqref{eq:detect_precision} needs only $e$, which can be estimated from the validation set; in our experiments we report the precision measured with the true labels after selection (Appendix~\ref{app:branch_metrics}).

\section{Experiments}
\label{experiments}
\subsection{Experimental setup}
\label{sec:experiment_setup}

\begin{table*}[t]
\centering
\caption{
\textbf{Main results: test accuracy (\%) with AutoAugment.} All methods score the pool against the same $2{,}000$ validation images ($20$ per class on CLMIN), and accuracy is measured after the final epoch on the remaining test images. The fixed cut removes $p=25\%$ of each task on the synthetic datasets and $5\%$ ($3\%$ on CLCIFAR-20) on the human-labeled ones; the adaptive cut removes every sample with $s_i\le0$ and needs no ratio. Mean$\pm$std over three seeds; best per column and block in \textbf{bold}.
}
\label{tab:main_results}
\small
\setlength{\tabcolsep}{5.5pt}
\begin{tabular}{@{}ll ccc cccc@{}}
\toprule
& & \multicolumn{3}{c}{\textbf{Synthetic} ($\eta=25\%$)} & \multicolumn{4}{c}{\textbf{Real-world}} \\
\cmidrule(lr){3-5} \cmidrule(l){6-9}
\textbf{Method} & \textbf{Selection} & MNIST & CIFAR-10 & CIFAR-20 & CLCIFAR-10 & CLCIFAR-20 & CLMIN-10 & CLMIN-20 \\
\midrule
\multicolumn{9}{@{}l}{\textit{(a) Selection only: SCL-NL on the retained pool, Eq.~\eqref{eq:cava_selection_objective}}} \\
\midrule
No pruning & -- & 0.14{\scriptsize$\pm$0.09} & 1.19{\scriptsize$\pm$0.02} & 0.49{\scriptsize$\pm$0.10} & 42.36{\scriptsize$\pm$0.91} & 7.71{\scriptsize$\pm$0.08} & 14.89{\scriptsize$\pm$1.02} & 4.55{\scriptsize$\pm$1.18} \\
TW-LAVA & Fixed $p$ & 98.21{\scriptsize$\pm$0.29} & 76.04{\scriptsize$\pm$0.25} & 14.43{\scriptsize$\pm$1.00} & 47.46{\scriptsize$\pm$0.17} & 6.90{\scriptsize$\pm$0.11} & 12.56{\scriptsize$\pm$0.84} & 4.28{\scriptsize$\pm$0.35} \\
\rowcolor{green!12} CAVA (ours) & Fixed $p$ & 99.20{\scriptsize$\pm$0.10} & \textbf{79.26}{\scriptsize$\pm$0.25} & \textbf{17.70}{\scriptsize$\pm$1.38} & 79.63{\scriptsize$\pm$0.33} & \textbf{10.78}{\scriptsize$\pm$1.25} & \textbf{27.44}{\scriptsize$\pm$2.69} & \textbf{10.39}{\scriptsize$\pm$0.67} \\
\rowcolor{green!12} CAVA (ours) & Adaptive cut & \textbf{99.47}{\scriptsize$\pm$0.04} & 79.12{\scriptsize$\pm$0.13} & 12.54{\scriptsize$\pm$2.68} & \textbf{80.00}{\scriptsize$\pm$0.59} & 10.27{\scriptsize$\pm$1.15} & \textbf{27.44}{\scriptsize$\pm$1.84} & 9.56{\scriptsize$\pm$1.13} \\
\midrule
\multicolumn{9}{@{}l}{\textit{(b) Selection with recycling: CE+SCL-NL, Eq.~\eqref{eq:cava_mixed_objective}}} \\
\midrule
TW-LAVA & Fixed $p$ & 97.68{\scriptsize$\pm$0.14} & 88.75{\scriptsize$\pm$0.40} & 59.03{\scriptsize$\pm$0.77} & 38.71{\scriptsize$\pm$0.85} & 10.24{\scriptsize$\pm$0.44} & 23.33{\scriptsize$\pm$0.67} & 5.33{\scriptsize$\pm$0.73} \\
\rowcolor{green!12} CAVA (ours) & Fixed $p$ & 99.36{\scriptsize$\pm$0.07} & 91.57{\scriptsize$\pm$0.14} & \textbf{75.34}{\scriptsize$\pm$0.17} & 81.84{\scriptsize$\pm$0.15} & 47.71{\scriptsize$\pm$0.38} & \textbf{52.78}{\scriptsize$\pm$0.19} & 35.22{\scriptsize$\pm$1.11} \\
\rowcolor{green!12} CAVA (ours) & Adaptive cut & \textbf{99.66}{\scriptsize$\pm$0.03} & \textbf{91.97}{\scriptsize$\pm$0.14} & 73.05{\scriptsize$\pm$0.32} & \textbf{85.69}{\scriptsize$\pm$0.29} & \textbf{48.86}{\scriptsize$\pm$0.66} & 52.33{\scriptsize$\pm$1.45} & \textbf{36.11}{\scriptsize$\pm$1.19} \\
\bottomrule
\end{tabular}
\end{table*}

We evaluate CAVA on MNIST~\citep{mnist}, CIFAR-10, CIFAR-20~\citep{cifar10}, the real-world CLImage datasets CLCIFAR-10/20 and CLMIN-10/20~\citep{wang2024climage}, an enlarged CIFAR-10-1M setting, and three further domains, FOOD15, RESISC45-20, and Places365-10, built from Food-101~\citep{bossard2014food}, NWPU-RESISC45~\citep{cheng2017remote}, and Places365~\citep{zhou2018places} (Appendix~\ref{clarifying_cifar10}). All label spaces have at most $20$ classes, the scale at which current CLL methods are workable: a complementary label excludes a single class, so its supervision weakens as the number of classes grows~\citep{mai2026cllsurvey}. 
For synthetic experiments, each training example receives an observed complementary label $\bar y_i$: with probability $\eta$, we set $\bar y_i=y_i$ to create diagonal complementary-label noise; otherwise, we sample a valid complementary label uniformly from $\mathcal{Y}\setminus\{y_i\}$. 
The test set always uses clean ground-truth labels. Unless otherwise stated, every valuation method uses the same validation set of $N_v=2{,}000$ clean images, a class-stratified subset of the test split. The validation images are used only to score the training samples: they never enter model training, model selection, or the training objective. Test accuracy is measured after the final training epoch on the remaining $8{,}000$ test images, which the valuation never sees, so no quantity is selected on the evaluation data. CLMIN-10 and CLMIN-20 contain $64\times64$ ImageNet images, which we resize to $32\times32$; their test splits hold only $50$ images per class, so $20$ per class form the validation set and the other $30$ per class the evaluation set. This assumption is shared by the valuation baselines and is examined in Appendix~\ref{app:reference_assumption}. All valuation methods share one fixed feature extractor. The main tables use the supervised embedder of LAVA, which was trained with the clean labels of the training split and is therefore an upper bound on feature quality (Appendix~\ref{sec:full_exp_details}); Appendix~\ref{app:representations} repeats the detection and training experiments with two extractors that have seen no label of the pool, CLIP features and a network trained on the validation images only.

We compare against No Pruning and the transferred OT valuation TW-LAVA, and at small noise rates also against Random Pruning, SAVA~\citep{kesslersava}, KNN-Shapley~\citep{jia2019efficient}, Data-OOB~\citep{kwon2023dataoob}, and small-loss selection~\citep{han2018co}; LAVA itself enters the runtime and memory comparison (Appendix~\ref{app:computational_efficiency}). For CLL objectives, we evaluate FWD~\citep{fwd2018}, CPE~\citep{cpe2023}, OP~\citep{orderpreserving2023}, SCARCE~\citep{conu2023}, SCL-NL, and SCL-EXP~\citep{scl2020}. All models use ResNet-18~\citep{he2015deep} and are trained under the same protocol across compared methods. Full hyperparameters, seeds, single-run settings for expensive scalability experiments, and additional dataset details are provided in Appendix~\ref{Appendix_C}. Table~\ref{tab:method_names} fixes the names of the three OT valuation methods: LAVA is the original full-set method, TW-LAVA is its task-wise transfer to complementary labels (Section~\ref{sec:cava_valuation}), and CAVA replaces the label cost and the score (Section~\ref{sec:cava_refmargin}).

\begin{table}[t]
\centering
\caption{The three OT valuation methods compared in this paper. TW-LAVA differs from LAVA only in the task-wise decomposition; CAVA differs from TW-LAVA only in the label cost and the score.}
\label{tab:method_names}
\small
\setlength{\tabcolsep}{3.5pt}
\begin{tabular}{@{}llll@{}}
\toprule
Method & Valuation scope & Label cost & Score \\
\midrule
LAVA~\citep{just2023lava} & full set ($\le$20K) & noisy-group & calibrated \\
TW-LAVA & task-wise, all data & noisy-group & calibrated \\
CAVA (ours) & task-wise, all data & complementary & label margin \\
\bottomrule
\end{tabular}
\end{table}

\subsection{Results and analysis}

\begin{figure*}[t]
\centering
\includegraphics[scale=0.9]{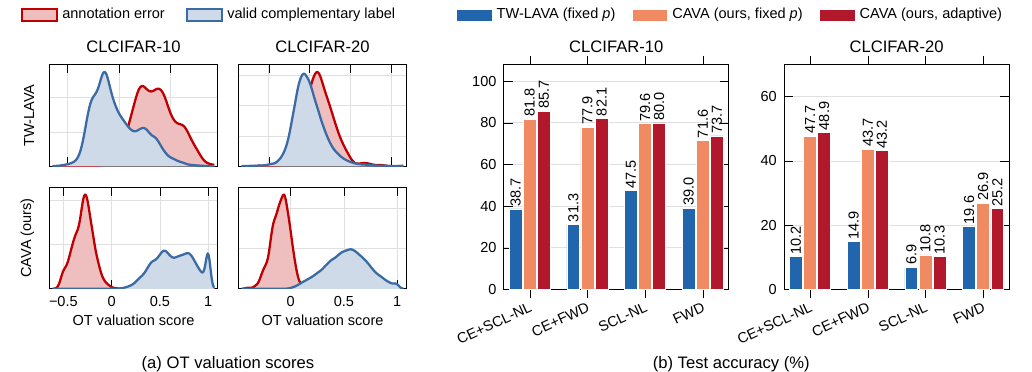}
\caption{\textbf{Real-world complementary labels.} (a) Score distributions: under TW-LAVA (top), annotation errors overlap with valid labels and sit on the wrong side; under CAVA (bottom), they separate on the low side. (b) Test accuracy of TW-LAVA and CAVA at the fixed cut of Table~\ref{tab:main_results} ($5\%$ on CLCIFAR-10, $3\%$ on CLCIFAR-20) and of CAVA with the adaptive cut, mean of three seeds; all use the same $2{,}000$ validation images.}
\label{fig:realworld_compare}
\end{figure*}

\begin{figure*}[t]
\centering
\includegraphics[scale=0.9]{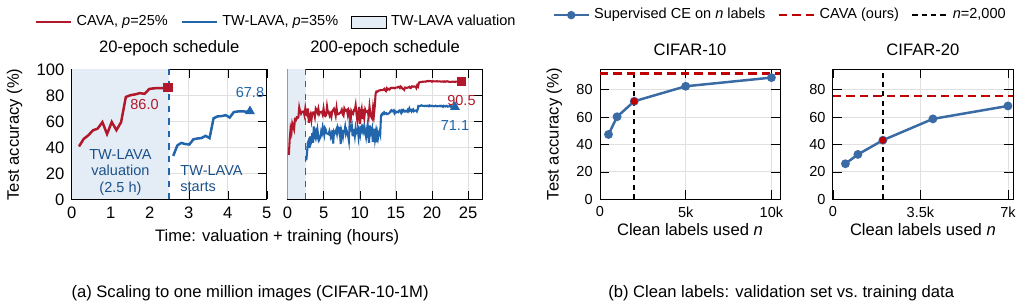}
\caption{\textbf{Scale and the value of clean labels.} (a) CIFAR-10-1M ($\eta=25\%$, SCL-NL): test accuracy after every epoch against time (valuation plus training, measured on one dedicated GPU), each method at its best cut (CAVA $p=25\%$, TW-LAVA $p=35\%$), for the 20-epoch and the 200-epoch schedule. Shaded: the valuation of TW-LAVA, so its training starts only after $2.5$ hours (dashed line); CAVA needs about 4 minutes. All cuts are in Table~\ref{tab:cifar10_1m_scalability}. (b) Supervised CE trained from scratch on $n$ clean labels (three seeds) against CAVA, which uses $2{,}000$ clean labels only as its validation set ($\eta=25\%$); CAVA uses the supervised embedder of the main tables, and results on other fixed representations are in Table~\ref{tab:reference_budget}.}
\label{fig:scale_budget}
\end{figure*}

\begin{figure}[t]
\centering
\includegraphics{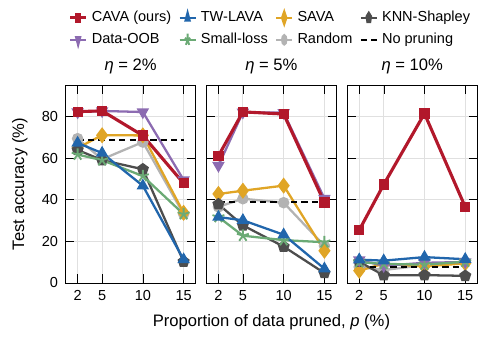}
\caption{\textbf{Small noise ratios.} Final-epoch test accuracy (held-out test images) of SCL-NL on the retained pool against the pruning ratio $p$, on CIFAR-10 with small diagonal-noise ratios $\eta$; CAVA, TW-LAVA, and no pruning are means over three seeds, the other baselines single runs (seed 1126); same validation set for all methods; TW-LAVA keeps its original OT settings here (Appendix~\ref{sec:full_exp_details}). Each baseline is used as published, keeping its highest-valued or lowest-loss samples. The dashed line trains on the full pool.}
\label{fig:smallnoise}
\end{figure}

\begin{figure}[t]
\centering
\includegraphics{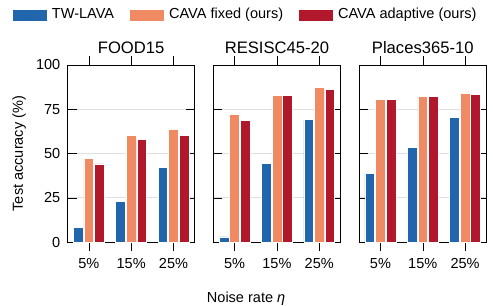}
\caption{\textbf{Beyond CIFAR.} Test accuracy with recycling (CE+SCL-NL) on FOOD15 (Food-101~\citep{bossard2014food}), RESISC45-20 (NWPU-RESISC45~\citep{cheng2017remote}), and Places365-10 (Places365~\citep{zhou2018places}) with synthetic diagonal noise; protocol of Table~\ref{tab:main_results}, fixed cut $p=\eta$, seed 1126. All rows, including selection only, are in Table~\ref{tab:newds}.}
\label{fig:newds}
\end{figure}

\paragraph{Main results.}
Table~\ref{tab:main_results} compares CAVA with the TW-LAVA baseline under one validation set. With the same fixed cut, CAVA is higher than TW-LAVA in all $14$ cells, and without selection SCL-NL collapses on the synthetic datasets (at most $1.19\%$). The gap is largest on the human-annotated datasets, whose error rates lie below $1/M$: there, TW-LAVA does not beat training on the full pool on CLCIFAR-20 and both CLMIN datasets, as the flip law predicts, while CAVA with recycling reaches $85.69\%$, $48.86\%$, $52.78\%$, and $36.11\%$ on CLCIFAR-10, CLCIFAR-20, CLMIN-10, and CLMIN-20, against at most $38.71\%$, $10.24\%$, $23.33\%$, and $5.33\%$ for TW-LAVA with recycling. The adaptive cut, which needs no pruning ratio, is best or tied in eight of the $14$ CAVA comparisons and within one point of the fixed cut except on CIFAR-20, where it flags fewer noisy samples.

\paragraph{Beyond CIFAR.}
Figure~\ref{fig:newds} repeats the comparison on three further domains, food, aerial scenes, and places, at three noise rates (Table~\ref{tab:newds} in Appendix~\ref{app:newds} lists all rows). With recycling, CAVA is far above TW-LAVA at every noise rate, with either cut, and the gap is largest at $\eta=5\%$, near the crossover $1/M$, where TW-LAVA reaches only $8.27\%$, $3.00\%$, and $38.95\%$ against $47.33\%$, $71.93\%$, and $80.49\%$ for CAVA. Without recycling, CAVA with the fixed cut is above TW-LAVA in every setting, by $0.4$ points on RESISC45-20 at $\eta=25\%$ and by more elsewhere, while its adaptive cut falls below TW-LAVA on FOOD15 and RESISC45-20 at $\eta=25\%$. At $\eta=5\%$, the selection of TW-LAVA stays within $4$ points of training on the full pool, whereas CAVA gains $14$ to $30$ points.

\paragraph{Small noise ratios.}
Real complementary labels are mostly correct: both CLCIFAR datasets have noise rates below $1/M$, the regime in which the noisy-group label cost of TW-LAVA reverses. Figure~\ref{fig:smallnoise} reproduces this regime on CIFAR-10 with $\eta\in\{2\%,5\%,10\%\}$ and trains SCL-NL on the retained pool. All methods use the same feature extractor, the same $2{,}000$ validation images, and the same task pools; each baseline is used as published, keeping the samples it values most (SAVA~\citep{kesslersava}, KNN-Shapley~\citep{jia2019efficient}, Data-OOB~\citep{kwon2023dataoob}) or the samples with the smallest loss~\citep{han2018co}. CAVA peaks at the cut $p=\eta$ at $\eta=5\%$ and $10\%$ ($82.2\%$ and $81.4\%$) and at $p=5\%$ at $\eta=2\%$ ($82.6\%$; $82.3\%$ at $p=2\%$), against $68.5\%$, $38.7\%$, and $7.9\%$ without pruning. TW-LAVA removes valid samples first, so pruning lowers its accuracy at $\eta=2\%$ and $5\%$ and adds at most $4.6$ points at $\eta=10\%$. Among the baselines, only Data-OOB keeps up with CAVA, and only below the crossover: it reaches $82.6\%$ and $82.0\%$ at $\eta=2\%$ and $5\%$, where its natural direction is the correct one (Figure~\ref{fig:cava_outperform} in Appendix~\ref{app:detect_baselines}), but $11.0\%$ at $\eta=10\%=1/M$, where its score is blind. Random pruning, KNN-Shapley, and small-loss selection stay near or below no pruning at every noise rate, SAVA gains at most $8.1$ points at $\eta=5\%$, and at $\eta=10\%$ no baseline exceeds $12.4\%$, while CAVA reaches $81.4\%$.


\paragraph{Selection without a pruning ratio.}
In practice the noise rate is unknown, so a fixed ratio $p$ is hard to tune. The adaptive cut of Eq.~\eqref{eq:strategy2_zero} needs no ratio: it flags a different fraction in each task, and on MNIST, CIFAR-10, CIFAR-20, and the two CLCIFAR datasets its flagged set reaches $96.0\%$ to $99.98\%$ precision and $87.2\%$ to $99.9\%$ recall (Tables~\ref{tab:representations} and~\ref{tab:mnist-ablation}), with a single OT solve per task. A score whose direction depends on $\eta$, such as that of TW-LAVA, admits no such threshold. In training, the adaptive cut performs on par with the best fixed cut without choosing a ratio, except on CIFAR-20, and with recycling it is the best setting on five of the seven datasets (Table~\ref{tab:main_results}).

\paragraph{Why existing valuation fails, and what fixes it.}
Figure~\ref{fig:cava_outperform} in Appendix~\ref{app:detect_baselines} scores every method on the same pool and tasks, at the small noise rates of real complementary labels. KNN-Shapley~\citep{jia2019efficient} credits a sample whose label agrees with its validation neighbors, and a diagonal label agrees by construction, so it ranks the noise as the most valuable data on every dataset. Data-OOB~\citep{kwon2023dataoob} and small-loss selection~\citep{han2018co} learn from the given labels themselves, so their direction follows the flip law: they are correct below the crossover $1/M$ (CIFAR-10 at $\eta=5\%$ and both CLCIFAR datasets), near chance on CIFAR-20 at $\eta=5\%=1/M$, and reversed above it (both datasets at $\eta=15\%$). TW-LAVA follows the same law with the opposite sign: it is reversed on CIFAR-10 at $\eta=5\%$ and on both human-annotated datasets, near chance on CIFAR-20 at $\eta=5\%$, and correct only above $1/M$, at $\eta=15\%$, as Theorem~\ref{thm:flip} predicts for the synthetic data and consistently with it on the human labels. Every score that reads a complementary label as an ordinary one therefore has a direction that depends on whether $\eta$ lies above or below $1/M$, which is unknown in practice. CAVA reaches the same detection inside the OT valuation framework, where the cut at zero and the precision of Theorem~\ref{thm:detect} come with the score at no extra cost. The transport solve itself is not what detects the noise: Theorem~\ref{thm:margin} holds for any validation potential, and on CLIP features the margin with the potential set to zero is within $0.006$ AUC of CAVA on each of the five datasets where we computed it, and the marginal constraint of the transport lowers the AUC when the class marginal of the pool and of the validation set differ (Appendix~\ref{app:representations}). The gain over existing valuation comes from the complementary label cost, which removes the flip, and from the margin, which isolates the label; the transport problem is the framework in which both are defined, not a better detector.


Recycling detected diagonal-noise samples further improves both methods on CIFAR-10 and CIFAR-20 and CAVA on all three synthetic datasets (Table~\ref{tab:main_results}, block (b) against block (a)); for TW-LAVA with the fixed cut on CIFAR-10, accuracy rises from $76.04\%$ to $88.75\%$. Detected diagonal-noise samples thus provide useful ordinary-label supervision when they are recycled with sufficient precision. 

\paragraph{Real-world complementary labels.}
CLCIFAR-10 and CLCIFAR-20 contain $3.98\%$ and $2.32\%$ annotation errors, both below $1/M$, which is the regime where the label cost of TW-LAVA reverses (Theorem~\ref{thm:flip}). Figure~\ref{fig:realworld_compare}(a) shows the consequence. With the noisy-group label cost, annotation errors and valid labels overlap, and at $p=3\%$ only $43.5\%$ (CLCIFAR-10) and $9.9\%$ (CLCIFAR-20) of the recycled samples are true errors (Tables~\ref{tab:clcifar10-ablation} and~\ref{tab:clcifar20-ablation}). With the complementary label cost and the label margin, the two groups separate almost completely under the same feature extractor and validation set: the recycled branch at $p=3\%$ is $100\%$ true errors on CLCIFAR-10 and $75.9\%$ on CLCIFAR-20, where it already contains $98.0\%$ of all errors, and the adaptive cut on CLCIFAR-10 recycles $1{,}990$ samples at $99.7\%$ precision. This selection quality carries over to training (Figure~\ref{fig:realworld_compare}(b)). CE+FWD rises from $31.28\%$ to $77.87\%$ on CLCIFAR-10 ($82.14\%$ with the adaptive cut) and from $14.89\%$ to $43.67\%$ on CLCIFAR-20, and every objective improves on both datasets, including SCL-NL, which needs no transition matrix and reaches $80.00\%$ on CLCIFAR-10 with the adaptive cut, above known-$Q$ FWD. The gain is largest for the recycling objectives, whose CE branch depends directly on detector precision (Theorem~\ref{thm:detect}). The margin adds no OT solve: it reuses the validation potential of the same transport problem and evaluates $M$ soft-minima per sample. Full results are given in Appendix~\ref{sec:ablation-clcifar}.

\paragraph{Is the validation set better spent on training?}
Trained from scratch with supervised CE, the $2{,}000$ clean labels reach $71.52\%$ on CIFAR-10 and $43.08\%$ on CIFAR-20, against $91.57\%$ and $75.34\%$ for CAVA, which uses them only for scoring (Figure~\ref{fig:scale_budget}(b)). CAVA keeps this advantage on a fixed representation that has seen no label of the pool: with CLIP features it reaches $87.91\%$ and $67.37\%$ (Appendix~\ref{app:reference_assumption}), and the corresponding results on CLIP features for every dataset are in Appendix~\ref{app:representations}.

\paragraph{Scalability on the large-scale CIFAR-10-1M benchmark.}
To evaluate CAVA beyond the regime where full-set OT valuation is feasible, we enlarge CIFAR-10 to one million training samples (CIFAR-10-1M) via label-preserving augmentation under the same diagonal-noise protocol ($\eta=25\%$). Task-wise decomposition splits the valuation into five tasks of 200{,}000 samples, and CAVA scores all one million samples in about 4 minutes. CAVA is best at the cut $p=25\%$ that equals the noise rate and reaches $86.04\%$ with SCL-NL after 20 epochs, against $0.12\%$ without pruning. As Figure~\ref{fig:scale_budget}(a) shows, it finishes this training before the valuation of TW-LAVA does, and TW-LAVA at its best cut $p=35\%$ reaches $67.76\%$ under the same 20-epoch schedule and $71.05\%$ after 200 epochs, against $90.51\%$ for CAVA. Full results are in Appendix~\ref{large-scale_dataset}.

\section{Conclusion}
\label{conclusion}
We have introduced CAVA, an OT-based data valuation framework for CLL. Starting from the task-wise transfer of LAVA, we showed that its noisy-group label cost carries an effective label weight $\lambda_y\alpha(\eta)$ that changes sign at the noise rate $\eta=1/M$: the direction of its score depends on the noise rate, and the method is blind at the crossover, which is the regime of human-annotated complementary labels. CAVA replaces this cost by a complementary label cost defined between clean classes and scores each sample by a label margin, the part of its transport potential that the given label contributes. The sign of the margin has the same meaning on every dataset, flags diagonal noise with an adaptive cut that needs no pruning ratio, and comes with a closed-form precision that gates the recycling of detected samples as ordinary-label supervision. On synthetic and real-world benchmarks, CAVA with recycling is the best of the compared valuation methods on every dataset, and its gains are largest on real complementary labels, where the transferred method fails. The detection itself is matched by a classifier fit on the validation set, and the transport potential adds nothing to it; what the analysis contributes is the reason transferred valuation fails, a cost under which it does not, and a recycling rule gated by a precision that can be estimated from the validation set.


\section{Limitations}
\label{limitations}
Our work has several limitations. First, we evaluate CAVA primarily on relatively small-scale CLL benchmarks; extending OT-based valuation to truly large-scale complementary-label datasets remains an open challenge~\citep{libcll_2024,aclimage2025}. Second, our framework assumes access to a clean validation set representative of the downstream task, a common assumption in data valuation research that may not always hold in practice; Appendix~\ref{app:reference_assumption} quantifies it. Third, recycling is most directly applicable to diagonal complementary-label noise, where a violated exclusion label reveals ordinary-label information. Addressing scalability, reducing reliance on clean validation sets, and generalizing recycling beyond diagonal noise are important future directions.

{
    \small
    \bibliographystyle{ieeenat_fullname}
    \bibliography{main}
}

\clearpage
\setcounter{page}{1}
\appendix
\setcounter{section}{0}
\setcounter{subsection}{0}
\setcounter{subsubsection}{0}
\renewcommand{\thesection}{\Alph{section}}
\renewcommand{\thesubsection}{\thesection.\arabic{subsection}}
\renewcommand{\thesubsubsection}{\thesubsection.\arabic{subsubsection}}

\onecolumn
\begin{center}
    {\Large \bf \thetitle \\[0.5em] Supplementary Material \par}
    \vskip 1.0em
\end{center}
\section{Background}
\label{app:background}
\subsection{Optimal Transport for Data Valuation}
\label{Appendix_A}

\label{sec:prelim_ot}
OT provides a principled framework for data valuation by quantifying how individual data points contribute to aligning empirical distributions in a task-aware and geometry-aware manner \citep{villani2008optimal, peyre2019computational,data-lin24t}. Let
\begin{equation} 
    \mu = \sum_{i=1}^{n} a_i \delta_{z_i}, \qquad 
    \nu = \sum_{j=1}^{m} b_j \delta_{z'_j}, 
\end{equation}
denote empirical source and target distributions, where $a\in\Delta_n$ and $b\in\Delta_m$ are probability weights on the simplex. Here, $z_i$ and $z'_j$ denote generic source and target samples. In CAVA, these correspond to complementary-labeled training samples and ordinary-labeled validation samples, respectively.
Given a ground cost $C(z_i,z'_j)$ that encodes task-relevant dissimilarity, the OT problem is defined as 
\begin{equation}
\label{eq:ot} 
    W_{C}(\mu,\nu) = \min_{\pi\in\Pi(a,b)} \sum_{i=1}^{n} \sum_{j=1}^{m} \pi_{ij} C(z_i,z'_j), 
\end{equation}
where the transport polytope is 
\begin{equation} 
    \Pi(a,b) := \Bigl\{ \pi\in\mathbb{R}_{+}^{n\times m} \;:\; \pi\mathbf{1}=a,\; \pi^{\top}\mathbf{1}=b \Bigr\}. 
\end{equation}
Each entry $\pi_{ij}$ represents how much probability mass from source sample $z_i$ is transported to target sample $z'_j$, providing a natural mechanism for attributing importance to individual data points.

To improve scalability and numerical stability, entropic regularization is commonly used, yielding the Sinkhorn-regularized OT objective \citep{cuturi2013sinkhorn}: 
\begin{equation} 
\label{eq:sinkhorn} 
    W_{C,\varepsilon_{\mathrm{OT}}}(\mu,\nu) 
    = \min_{\pi\in\Pi(a,b)} \sum_{i=1}^{n} \sum_{j=1}^{m} \pi_{ij} C(z_i,z'_j) 
    + \varepsilon_{\mathrm{OT}} \sum_{i=1}^{n} \sum_{j=1}^{m} \pi_{ij}\log \pi_{ij}, 
\end{equation} 
where $\varepsilon_{\mathrm{OT}}>0$ controls the smoothness of the transport plan. This formulation can be computed efficiently via Sinkhorn iterations and admits dual potentials whose derivatives with respect to the probability weights provide interpretable data values \citep{genevay2018learning}.

Let $u^*\in\mathbb{R}^{n}$ and $w^*\in\mathbb{R}^{m}$ denote the optimal source-side and target-side dual potentials, respectively. By sensitivity analysis of the OT objective, the derivative of the regularized OT value with respect to the source weights is given by the source-side dual potential: 
\begin{equation} 
    \nabla_{a} W_{C,\varepsilon_{\mathrm{OT}}}(\mu,\nu) = u^*. 
    \label{dual_potential}
\end{equation} 
Thus, $u_i^*$ measures how the OT discrepancy changes when the probability mass of source sample $z_i$ is perturbed.

However, OT dual potentials are unique only up to an additive constant because one of the marginal constraints in the transport polytope is redundant. Following the calibrated OT valuation convention used in LAVA \citep{just2023lava}, we remove this degree of freedom by using a zero-sum calibrated score. For source sample $z_i$, the calibrated OT sensitivity score is defined as 
\begin{equation} 
\label{eq_lava_calibrated_gradient} 
v_i = u_i^* - \frac{1}{n-1} \sum_{\substack{k=1 \\ k\neq i}}^{n} u_k^*. \end{equation} 
This calibrated score estimates how much source sample $z_i$ contributes to the OT discrepancy between the source and target distributions. A large positive score indicates that increasing the mass of $z_i$ would increase the OT discrepancy, suggesting that the sample is poorly aligned with the target distribution. Conversely, a smaller score indicates better alignment. 

In data valuation, these calibrated OT sensitivity scores can be used to identify samples that increase the mismatch between training and validation distributions. Removing or downweighting samples with large positive scores can reduce the OT discrepancy and improve alignment with the validation distribution, as demonstrated in LAVA \citep{just2023lava}. CAVA keeps this transport problem and replaces two of its ingredients for complementary-label learning, the label cost and the score (Section~\ref{sec:cava_refmargin}); the calibrated score $v_i$ is the score of the TW-LAVA baseline (Eq.~\eqref{eq:cava_score}). Specifically, the generic source sample $z_i$ becomes a complementary-labeled training sample $(\xx_i,\bar y_i)$, the target sample $z'_j$ becomes an ordinary-labeled validation sample $(\xx_j^v,y_j^v)$, and the generic cost $C$ is the ground cost of Eq.~\eqref{eq:cava_cost}. 

Despite its effectiveness, Sinkhorn-based OT still requires storing and operating on the pairwise cost matrix, leading to $\mathcal{O}(nm)$ memory complexity, or $\mathcal{O}(N^2)$ when both empirical distributions have size $N$. Recent works have addressed this limitation through minibatch or hierarchical OT approximations \citep{kesslersava,zhu2025kairos}. CAVA instead uses a task-wise decomposition, solving smaller OT problems between each complementary-label task and the clean validation set, which improves computational feasibility while preserving the distributional interpretation needed for data valuation. 

Compared to Shapley-value-based data valuation methods \citep{ghorbani2019data, jia2019efficient}, OT-based approaches avoid combinatorial subset enumeration and instead leverage distributional geometry to implicitly capture higher-order interactions. This makes OT particularly suitable for large-scale and continuous data settings, where valuation must account for domain shift, covariate imbalance, and sample diversity. Recent work has demonstrated the effectiveness of OT-based data valuation for dataset pruning, reweighting, curriculum learning, and active data acquisition \citep{wang2023data, zhao2024optimal,just2023lava,kesslersava,chi2025shapleyguided}, positioning OT as a scalable and principled foundation for data-centric machine learning.

\subsection{Diagonal Complementary-Label Noise}
\label{diagonal-noise}
Figure~\ref{fig1} shows how diagonal noise appears in the empirical transition matrix $P(\bar Y\mid Y)$: valid complementary labels fill the off-diagonal entries, whereas a label that names the true class places mass on the diagonal and asks the learner to suppress the correct class.

\begin{figure}[htb]
\centering
\includegraphics[width=0.5\linewidth]{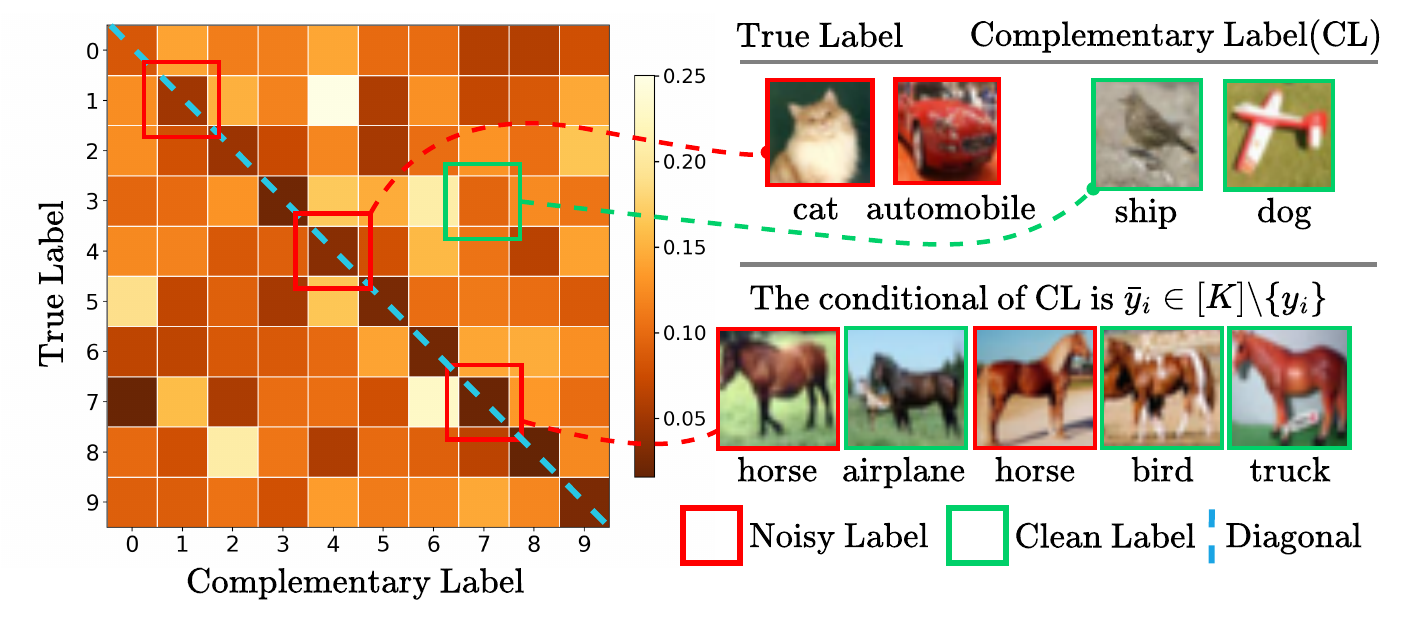}
\caption{
Complementary-label noise in CLL. When the complementary label coincides with the true label, i.e., $\bar{y}_i=y_i$, the empirical transition matrix contains diagonal mass, creating contradictory exclusion constraints.}
\label{fig1}
\end{figure}

\subsection{Notation}
\label{app:notation}
Table~\ref{tab:cava_notation} collects the main symbols.

\begin{table}[ht]
\centering
\caption{Notation.}
\label{tab:cava_notation}
\small
\begin{tabular}{@{}p{0.36\textwidth}p{0.60\textwidth}@{}}
\toprule
\textbf{Symbol} & \textbf{Meaning} \\
\midrule
$\mathcal{X},\ \mathcal{Y}=\{1,\ldots,M\}$ & Input space and label space of $M$ classes \\
$\bar{\mathcal{D}}=\{(\xx_i,\bar y_i)\}_{i=1}^{N}$, $y_i$ & Pool with given labels, and the unobserved true labels \\
$\mathcal{N}=\{i:\bar y_i=y_i\}$, $\eta=\mathbb{P}(\bar y=y)$ & Diagonal-noise set and noise rate \\
$\mathcal{D}_v=\{(\xx^v_j,y^v_j)\}_{j=1}^{N_v}$ & Clean validation set with ordinary labels \\
$I_t$, $T$ & Task $t$ (a subset of the pool) and the number of tasks \\
$\phi$, $C_{\mathrm{feat}}$ & Fixed feature extractor and feature cost (Eq.~\eqref{eq:cava_feat_cost}) \\
$\lambda_y$, $\varepsilon=\varepsilon_{\mathrm{OT}}$ & Label weight and entropic regularization \\
$W^{\mathrm{grp}}$ & Noisy-group label cost of TW-LAVA (Eq.~\eqref{eq:cava_label_cost}) \\
$W^{\mathrm{cl}}$, $W^{\mathrm{cl}}_{D}$ & Complementary label cost ($0/1$) and its class-distance variant \\
$\alpha(\eta)=(M\eta-1)/(M-1)$ & Lean of a noisy label group; effective label weight $\lambda_y\alpha(\eta)$ (Theorem~\ref{thm:flip}) \\
$u^{(t),*}$, $w^{(t),*}$ & Optimal pool-side and validation-side potentials of task $t$ \\
$v_i^{(t)}$ & Calibrated score of TW-LAVA (Eq.~\eqref{eq:cava_score}) \\
$f_i(c)$, $p_i(k)$, $s_i$ & Potential under label $c$, transport posterior, label margin (Eqs.~\eqref{eq:cava_fc}--\eqref{eq:cava_margin}) \\
$F=\{i:s_i\le0\}$, $h_i=\arg\max_k p_i(k)$ & Adaptive-cut flagged set and posterior classifier \\
$p$ & Pruning ratio of the fixed cut \\
$I^{\mathrm{rec}}$, $I^{\mathrm{keep}}$, $\kappa$ & Recycled and retained sets, recycling weight (Eq.~\eqref{eq:cava_mixed_objective}) \\
$\pi_{\mathrm{det}}$, $B$, $\Delta_{\mathrm{gen}}$ & Precision of the recycled set, loss bound, generalization term (Theorem~\ref{thm:recycle_truelabel}) \\
\bottomrule
\end{tabular}
\end{table}

\FloatBarrier
\section{Theory}
\label{Appendix_B}
This section proves the results of Section~\ref{cava}. Appendix~\ref{app:refmargin_proofs} states and proves the flip law, the direction of the label margin, and the precision of the adaptive cut; Appendix~\ref{app:recycle} proves the recycling bound that turns this precision into a risk guarantee.


\subsection{Analysis of the Complementary Label Cost and the Label Margin}
\label{app:refmargin_proofs}

This subsection gives the full statements and the proofs of the three results stated in short form with the method: the flip law (Theorem~\ref{thm:flip}), the decomposition and direction of the margin (Theorem~\ref{thm:margin}), and the detection-to-recycling bound (Theorem~\ref{thm:detect}). Throughout, $d_{ij}=C_{\mathrm{feat}}(\xx_i,\xx^v_j)$ denotes the feature cost of Eq.~\eqref{eq:cava_feat_cost}, the validation weights are $b_j=1/N_v$, the feature weight is $\lambda_\xx=1$ (Section~\ref{sec:cava_valuation}), and the entropic problem of Eq.~\eqref{eq:cava_entropic_ot} with ground cost $C_{ij}$ has optimal potentials $(u,w)$, written $u^{(t),*}$ and $w^{(t),*}$ in the main text, satisfying
\begin{equation}
u_i=-\varepsilon_{\mathrm{OT}}\log\sum_j b_j\exp\bigl((w_j-C_{ij})/\varepsilon_{\mathrm{OT}}\bigr),
\label{eq:app_f_softmin}
\end{equation}
so the counterfactual potentials of Eq.~\eqref{eq:cava_fc} satisfy $f_i(\bar y_i)=u_i$.

\begin{assumption}[Uniform diagonal-noise model]
\label{ass:noise_model}
Classes are balanced. Given the true label $y$, the observed label is $\bar y=y$ with probability $\eta$ and otherwise uniform on the other $M-1$ labels, independently of $\xx$. This is the synthetic protocol of Section~\ref{experiments}. Write $P_k$ for the feature distribution of class $k$, $\bar P=\frac1M\sum_kP_k$, and $\alpha(\eta)=(M\eta-1)/(M-1)$.
\end{assumption}

\begin{lemma}[Group composition and shifts]
\label{lem:groupshift}
\textit{(a) Under Assumption~\ref{ass:noise_model}, the feature distribution of the pool samples with given label $c$ is $\bar{\mathcal{D}}_{c}=(1-\alpha)\bar P+\alpha P_c$ with $\alpha=\alpha(\eta)$.
(b) If $C'_{ij}=C_{ij}+r_i+t_j$ for a row term $r$ and a column term $t$, the entropic problems with costs $C$ and $C'$ have the same optimal plan, and $(u_i+r_i,\,w_j+t_j)$ are optimal potentials for $C'$. If $r$ is constant, the calibrated score of Eq.~\eqref{eq:cava_score} and the margin of Eq.~\eqref{eq:cava_margin} are unchanged.}
\end{lemma}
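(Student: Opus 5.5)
For part (a) I will compute the class composition of the group with given label $c$ by Bayes' rule under Assumption~\ref{ass:noise_model}. Balanced classes mean $\mathbb{P}(y=k)=1/M$. The observed label satisfies $\mathbb{P}(\bar y=c\mid y=c)=\eta$ and $\mathbb{P}(\bar y=c\mid y=k)=(1-\eta)/(M-1)$ for $k\neq c$. Summing over $k$ gives $\mathbb{P}(\bar y=c)=\frac1M\bigl(\eta+(1-\eta)\bigr)=1/M$. Hence $\mathbb{P}(y=c\mid\bar y=c)=\eta$ and $\mathbb{P}(y=k\mid\bar y=c)=(1-\eta)/(M-1)$ for $k\neq c$. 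The noise is independent of $\xx$ given $y$, so the feature law of the group is $\eta P_c+\frac{1-\eta}{M-1}\sum_{k\neq c}P_k$.

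I then rewrite this as a combination of $\bar P$ and $P_c$, using $\sum_{k\neq c}P_k=M\bar P-P_c$. The coefficient of $\bar P$ is $M(1-\eta)/(M-1)=1-\alpha$. The coefficient of $P_c$ is $\eta-\frac{1-\eta}{M-1}=\frac{M\eta-1}{M-1}=\alpha$. This proves (a). It is only a linear rearrangement, so there is no real obstacle here.

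For part (b) I will work with the entropic objective directly. For any $\pi\in\Pi(a,b)$ we have $\sum_{ij}\pi_{ij}(r_i+t_j)=\sum_i a_ir_i+\sum_j b_jt_j$, which is the same constant for every feasible plan. The two objectives therefore differ by a constant on the feasible set, and since the entropic objective is strictly convex, they share the same unique minimizer. For the potentials I will use the first-order characterization $\pi_{ij}=\exp\bigl((u_i+w_j-C_{ij})/\varepsilon\bigr)$ together with the marginal constraints. The exponent is unchanged when $(u_i,w_j,C_{ij})$ is replaced by $(u_i+r_i,w_j+t_j,C_{ij}+r_i+t_j)$. The shifted pair therefore produces the same plan, satisfies the same marginals, and is optimal for $C'$.

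Equivalently, I can check the soft-min fixed point Eq.~\eqref{eq:app_f_softmin}. The term $r_i$ factors out of the log-sum-exp, and the term $t_j$ cancels against the shift of $w_j$. This second route is worth keeping because it applies directly to $f_i(c)$.

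For the invariance claims with $r_i\equiv r$ constant, the calibrated score Eq.~\eqref{eq:cava_score} subtracts the mean of the other potentials, so a common shift $r$ cancels. For the margin, Eq.~\eqref{eq:cava_fc} uses $w$ and the cost. The shift $t_j$ cancels inside the exponent, and $r$ shifts every $f_i(c)$ by the same amount $r$. The difference in Eq.~\eqref{eq:cava_margin} is then unchanged.

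The one subtle point is non-uniqueness of the potentials: the optimal pair is only defined up to $(u+\kappa,w-\kappa)$. Both scores are already invariant under this, since the calibrated score is centered and the margin is invariant because $f_i$ shifts uniformly. So I will state that the invariance holds for any choice of optimal potentials.
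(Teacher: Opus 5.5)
Your proof is correct. Part (a) is the paper's Steps 1--3: the marginal $1/M$, the Bayes posterior, and the regrouping with $\sum_{k\neq c}P_k=M\bar P-P_c$. Your invariance argument for the calibrated score and the margin is its Step 5.

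Where you differ is the equality of plans in (b). The paper rewrites the $C$-optimal plan as $a_ib_j\exp\bigl(((u_i+r_i)+(w_j+t_j)-C'_{ij})/\varepsilon\bigr)$ and appeals to uniqueness of the Gibbs-form plan with marginals $a,b$. That delivers the plan and the shifted potentials in a single step. You instead observe that $\sum_{ij}\pi_{ij}(r_i+t_j)=\sum_ia_ir_i+\sum_jb_jt_j$ is the same on all of $\Pi(a,b)$, so the two objectives differ by a constant.

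Your route is more elementary. It shows the two problems have the same minimizers for any regularizer, with strict convexity needed only for uniqueness. It also does not care whether the entropy in \eqref{eq:cava_entropic_ot} is plain or relative to $a\otimes b$.

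It does not by itself give the potentials, so you still need the exponent invariance or the soft-min fixed point \eqref{eq:app_f_softmin}; there the two proofs coincide. Your Gibbs form omits the factor $a_ib_j$ under which \eqref{eq:app_f_softmin} holds. This is harmless, because only the exponent enters the shift.

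Your closing remark is a point the paper's proof leaves implicit: both scores are invariant under $(u+\kappa,w-\kappa)$, so the conclusion holds for every choice of optimal potentials.
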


\begin{proof}
We compute the law of the given label, the posterior of the true label, and assemble the mixture; part (b) is a direct check of the optimality conditions.

\textbf{Step 1. Marginal of the given label.}
By Assumption~\ref{ass:noise_model} and balanced classes,
\begin{equation}
\mathbb{P}(\bar y=c)=\frac1M\,\eta+\sum_{k\neq c}\frac1M\,\frac{1-\eta}{M-1}=\frac1M .
\label{eq:app_marginal}
\end{equation}

\textbf{Step 2. Posterior of the true label.}
By the Bayes rule and Eq.~\eqref{eq:app_marginal},
\begin{equation}
\mathbb{P}(y=c\mid\bar y=c)=\eta,\qquad \mathbb{P}(y=k\mid\bar y=c)=\frac{1-\eta}{M-1}\quad(k\neq c).
\label{eq:app_posterior_y}
\end{equation}

\textbf{Step 3. Mixture form.}
The label draw is independent of $\xx$ given $y$, so by Eq.~\eqref{eq:app_posterior_y}
\[
\bar{\mathcal{D}}_{c}=\eta\,P_c+\frac{1-\eta}{M-1}\sum_{k\neq c}P_k
=\Bigl[\eta-\frac{1-\eta}{M-1}\Bigr]P_c+\frac{M(1-\eta)}{M-1}\,\bar P .
\]
The coefficient of $P_c$ is $(M\eta-1)/(M-1)=\alpha$ and the coefficient of $\bar P$ is $1-\alpha$, which proves (a).

\textbf{Step 4. Same plan under shifts.}
The optimal plan for $C$ can be rewritten with the shifted potentials:
\[
\pi_{ij}=a_ib_j\exp\!\Bigl(\tfrac{u_i+w_j-C_{ij}}{\varepsilon_{\mathrm{OT}}}\Bigr)
=a_ib_j\exp\!\Bigl(\tfrac{(u_i+r_i)+(w_j+t_j)-C'_{ij}}{\varepsilon_{\mathrm{OT}}}\Bigr).
\]
It has marginals $a$ and $b$, and the entropic optimum for $C'$ is the unique plan of this Gibbs form with these marginals, so $\pi$ is also optimal for $C'$ with potentials $(u+r,\,w+t)$.

\textbf{Step 5. Scores under shifts.}
The calibrated score changes by $r_i-\frac{1}{|I_t|-1}\sum_{k\neq i}r_k$, which vanishes when $r$ is constant. For the margin, the shifted potential $w'_j=w_j+t_j$ absorbs the column term, so the counterfactual potentials satisfy $f'_i(c)=f_i(c)+r_i$ for every $c$, and the difference in Eq.~\eqref{eq:cava_margin} is unchanged. This proves (b).
\end{proof}

\paragraph{Full statement of Theorem~\ref{thm:flip}.}
{\itshape Consider population groups under Assumption~\ref{ass:noise_model}.
\begin{enumerate}[label=(\roman*), leftmargin=2.2em, itemsep=2pt, topsep=3pt]
\item For any label divergence: at $\eta=1/M$, $\bar{\mathcal{D}}_{c}=\bar P$ for every $c$, the ground cost no longer involves $\bar y_i$, and no score computed from the transport problem can distinguish diagonal noise from valid samples.
\item If the label divergence is a squared mean-embedding distance, then
\[
W^{\mathrm{grp}}(c,k)=\alpha(\eta)\,W^{\mathrm{cl}}_{D}(c,k)+r_c+t_k
\]
for row and column shifts $r_c,t_k$, where $W^{\mathrm{cl}}_{D}(c,k)$ is the same divergence computed between clean classes, the class-distance variant of the complementary label cost of Section~\ref{sec:cava_refmargin}. The TW-LAVA problem therefore has the same optimal plan as the problem under the complementary label cost with label weight $\lambda_y\,\alpha(\eta)$, and with equal class radii the same calibrated score and label margin.
\end{enumerate}}

\begin{proof}[\textbf{Proof of Theorem \ref{thm:flip}}]
We prove part (i) from Lemma~\ref{lem:groupshift}(a), obtain the shift identity of part (ii) by expanding two squared norms, and conclude with Lemma~\ref{lem:groupshift}(b).

\textbf{Step 1. Part (i): the given label leaves the cost.}
By Lemma~\ref{lem:groupshift}(a) with $\alpha=0$, $\bar{\mathcal{D}}_{c}=\bar P$ for every $c$. Hence
\[
W^{\mathrm{grp}}(c,k)=D(\bar P,P_k),\qquad C_{ij}=d_{ij}+\lambda_y\,D(\bar P,P_{y^v_j}),
\]
and $\bar y_i$ does not appear in the ground cost. The plan, the potentials, and every score computed from the transport problem are therefore functions of the features and of the validation set only. Under Assumption~\ref{ass:noise_model} the indicators $\mathbf{1}[\bar y_i=y_i]$ are i.i.d.\ Bernoulli$(\eta)$ and independent of the features given the true labels, so every score has the same distribution on diagonal-noise and on valid samples.

\textbf{Step 2. Mean embedding of the group.}
The mean embedding is linear in the distribution. Applying it to Lemma~\ref{lem:groupshift}(a), with $u_k=\mu_{P_k}-\mu_{\bar P}$,
\begin{equation}
\mu_{\bar{\mathcal{D}}_{c}}-\mu_{\bar P}=\alpha\,(\mu_{P_c}-\mu_{\bar P})=\alpha\,u_c .
\label{eq:app_mean_group}
\end{equation}

\textbf{Step 3. Expanding the squares.}
Writing $\mu_{\bar{\mathcal{D}}_{c}}-\mu_{P_k}=(\mu_{\bar{\mathcal{D}}_{c}}-\mu_{\bar P})-(\mu_{P_k}-\mu_{\bar P})$ and using Eq.~\eqref{eq:app_mean_group},
\begin{align}
W^{\mathrm{grp}}(c,k)&=\|\alpha u_c-u_k\|^2=\alpha^2\|u_c\|^2-2\alpha\langle u_c,u_k\rangle+\|u_k\|^2,
\label{eq:app_sq1}\\
\alpha\,W^{\mathrm{cl}}_{D}(c,k)&=\alpha\|u_c-u_k\|^2=\alpha\|u_c\|^2-2\alpha\langle u_c,u_k\rangle+\alpha\|u_k\|^2 .
\label{eq:app_sq2}
\end{align}
Subtracting Eq.~\eqref{eq:app_sq2} from Eq.~\eqref{eq:app_sq1},
\[
W^{\mathrm{grp}}(c,k)-\alpha\,W^{\mathrm{cl}}_{D}(c,k)=-\alpha(1-\alpha)\|u_c\|^2+(1-\alpha)\|u_k\|^2,
\]
a term depending on the row only plus a term depending on the column only.

\textbf{Step 4. Plans and scores.}
In the ground cost the two extra terms become $r_i=-\lambda_y\alpha(1-\alpha)\|u_{\bar y_i}\|^2$ and $t_j=\lambda_y(1-\alpha)\|u_{y^v_j}\|^2$. By Lemma~\ref{lem:groupshift}(b) the TW-LAVA problem has the same optimal plan as the cost $d_{ij}+\lambda_y\alpha\,W^{\mathrm{cl}}_{D}(\bar y_i,y^v_j)$, which is the problem under the complementary label cost with label weight $\lambda_y\alpha(\eta)$. If $\|u_k\|=r$ for all $k$ (equal class radii), then $r_i$ is constant and the second part of Lemma~\ref{lem:groupshift}(b) gives equality of the calibrated scores and of the margins.
\end{proof}

Figure~\ref{fig:flip} shows the flip of Theorem~\ref{thm:flip} as measured on CIFAR-10.

\begin{figure}[htb]
\centering
\includegraphics{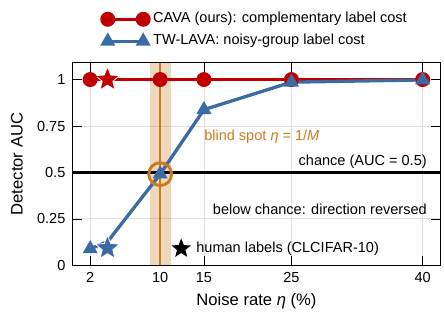}
\caption{\textbf{The direction flip, measured.} Detector AUC against $\eta$ on CIFAR-10; both scores use the same features, $\varepsilon_{\mathrm{OT}}=0.01$, $\lambda_y=1$, and the same $N_v=2{,}000$ validation images. An AUC of $0.5$ is chance. The noisy-group label cost is at chance exactly at the blind spot $\eta=1/M$ (orange) predicted by Theorem~\ref{thm:flip}, and below chance for smaller $\eta$, where it ranks the noise in the wrong direction. The complementary label cost stays at the top for every $\eta$. Stars: human-labeled CLCIFAR-10 ($3.98\%$ errors).}
\label{fig:flip}
\end{figure}

\paragraph{Full statement of Theorem~\ref{thm:margin}.}
{\itshape For the $0/1$ complementary label cost, any validation potential $w$, any pool, any noise rate and any $M$, define the OT posterior and the function $\psi$ by
\[
p_i(k)\propto\sum_{j:\,y^v_j=k}b_j\,e^{(w_j-C_{\mathrm{feat}}(\xx_i,\xx_j^v))/\varepsilon_{\mathrm{OT}}},
\qquad
\psi(p)=-\varepsilon_{\mathrm{OT}}\log\bigl(\rho+(1-\rho)p\bigr),
\quad \rho=e^{-\lambda_y/\varepsilon_{\mathrm{OT}}}.
\]
Then:
\begin{enumerate}[label=(\roman*), leftmargin=2.2em, itemsep=2pt, topsep=3pt]
\item $f_i(c)=T_i+\psi(p_i(c))$, where the typicality term $T_i$ does not depend on $c$.
\item $s_i=\psi(p_i(\bar y_i))-\psi(\max_{c\neq\bar y_i}p_i(c))$ and $|s_i|\le\lambda_y$.
\item $s_i\le0$ if and only if $\bar y_i\in\arg\max_k p_i(k)$.
\item If the label weight is replaced by any real $\lambda'$, the label part of $f_i(\bar y_i)$ is strictly decreasing in $p_i(\bar y_i)$ for $\lambda'>0$, constant at $\lambda'=0$, and strictly increasing for $\lambda'<0$.
\end{enumerate}}

\begin{proof}[\textbf{Proof of Theorem \ref{thm:margin}}]
We split the sum that defines $f_i(c)$ by validation class, factor out the normalizer, and read off the four statements. Recall $p_i(k)=\frac{1}{Z_i}\sum_{j:y^v_j=k}b_j e^{(w_j-d_{ij})/\varepsilon_{\mathrm{OT}}}$ with $Z_i=\sum_j b_j e^{(w_j-d_{ij})/\varepsilon_{\mathrm{OT}}}$, $T_i=-\varepsilon_{\mathrm{OT}}\log Z_i$, and $\rho=e^{-\lambda_y/\varepsilon_{\mathrm{OT}}}$.

\textbf{Step 1. Splitting the sum.}
With $W^{\mathrm{cl}}(c,k)=\mathbf{1}[c\neq k]$, the validation samples of class $c$ pay no label cost and all others acquire the factor $\rho$:
\begin{equation}
\sum_j b_j\,e^{(w_j-d_{ij}-\lambda_y\mathbf{1}[c\neq y^v_j])/\varepsilon_{\mathrm{OT}}}
=\sum_{j:\,y^v_j=c}b_j\,e^{(w_j-d_{ij})/\varepsilon_{\mathrm{OT}}}
+\rho\sum_{j:\,y^v_j\neq c}b_j\,e^{(w_j-d_{ij})/\varepsilon_{\mathrm{OT}}}.
\label{eq:app_split}
\end{equation}

\textbf{Step 2. Factoring the normalizer.}
The first sum in Eq.~\eqref{eq:app_split} equals $Z_i\,p_i(c)$ and the second equals $Z_i\,(1-p_i(c))$, so
\begin{equation}
\sum_j b_j\,e^{(w_j-d_{ij}-\lambda_y\mathbf{1}[c\neq y^v_j])/\varepsilon_{\mathrm{OT}}}=Z_i\bigl[\rho+(1-\rho)\,p_i(c)\bigr].
\label{eq:app_factored}
\end{equation}
Taking $-\varepsilon_{\mathrm{OT}}\log$ of Eq.~\eqref{eq:app_factored} gives $f_i(c)=T_i+\psi(p_i(c))$, which is (i).

\textbf{Step 3. The margin.}
Since $\psi$ is strictly decreasing, the cheapest alternative label is the one with the largest posterior:
\[
\min_{c\neq\bar y_i}f_i(c)=T_i+\psi\Bigl(\max_{c\neq\bar y_i}p_i(c)\Bigr).
\]
Subtracting this from $f_i(\bar y_i)=T_i+\psi(p_i(\bar y_i))$ cancels $T_i$ and yields the closed form of (ii). Because $0\le\psi\le\lambda_y$ on $[0,1]$, $|s_i|\le\lambda_y$.

\textbf{Step 4. Direction.}
By monotonicity of $\psi$,
\[
s_i\le0
\iff \psi\bigl(p_i(\bar y_i)\bigr)\le\psi\Bigl(\max_{c\neq\bar y_i}p_i(c)\Bigr)
\iff p_i(\bar y_i)\ge\max_{c\neq\bar y_i}p_i(c),
\]
which is (iii). No property of the pool, of $\eta$, or of $M$ was used.

\textbf{Step 5. Sign of the weight.}
The computation leading to Eq.~\eqref{eq:app_factored} does not use the sign of the weight. With $\rho'=e^{-\lambda'/\varepsilon_{\mathrm{OT}}}$,
\[
f_i(\bar y_i)=T_i-\varepsilon_{\mathrm{OT}}\log\bigl(\rho'+(1-\rho')\,p_i(\bar y_i)\bigr).
\]
The map $p\mapsto\rho'+(1-\rho')p$ is increasing when $\rho'<1$, constant when $\rho'=1$, and decreasing when $\rho'>1$, and $\rho'<1$ exactly when $\lambda'>0$. Composing with $-\varepsilon_{\mathrm{OT}}\log$ reverses the monotonicity, which gives (iv).
\end{proof}

\paragraph{Full statement of Theorem~\ref{thm:detect}.}
{\itshape Let $h_i=\arg\max_k p_i(k)$ be the classifier induced by the OT posterior of Theorem~\ref{thm:margin} (assumed unique). Let $\mathcal N=\{i:\bar y_i=y_i\}$ be the diagonal-noise set of the pool of size $N$, $\hat\eta=|\mathcal N|/N$, $F=\{i:s_i\le0\}$, $e_{\mathcal N}$ the error rate of $h$ on $\mathcal N$, and $q_{V}$ the fraction of valid samples with $h_i=\bar y_i$. With the $0/1$ complementary label cost:
\begin{enumerate}[label=(\roman*), leftmargin=2.2em, itemsep=2pt, topsep=3pt]
\item The recall and the precision of $F$ are
\[
\mathrm{recall}=1-e_{\mathcal N},
\qquad
\mathrm{precision}=\frac{\hat\eta\,(1-e_{\mathcal N})}{\hat\eta\,(1-e_{\mathcal N})+(1-\hat\eta)\,q_{V}} .
\]
\item Under the uniform noise model with rate $\eta$, if $h$ has overall error $e$ and is conditionally independent of the given label, then $q_{V}=e/(M-1)$ and the flagged-set purity
\[
\pi=\frac{\eta\,(1-e)}{\eta\,(1-e)+(1-\eta)\,e/(M-1)}
\]
exceeds the base rate $\eta$ whenever $e\le1-1/M$.
\item Training the CE branch on $F$ satisfies, with probability at least $1-\delta$,
\[
R_{\mathrm{true}}(\widehat g)-R_{\mathrm{true}}(g^\star)\le 2B\,(1-\mathrm{precision})+\Delta_{\mathrm{gen}}(|F|,\delta),
\]
by inserting the precision of (i) into Theorem~\ref{thm:recycle_truelabel}.
\end{enumerate}}

\begin{proof}[\textbf{Proof of Theorem \ref{thm:detect}}]
We express the flagging rule through the posterior classifier $h$, count the flagged samples on the noise set and on the valid set, specialize to uniform complementary labels, and insert the precision into Theorem~\ref{thm:recycle_truelabel}. Let $V$ denote the complement of $\mathcal N$.

\textbf{Step 1. Flagging and the classifier.}
By Theorem~\ref{thm:margin}(iii) and uniqueness of the maximizer,
\begin{equation}
i\in F\iff h_i=\bar y_i .
\label{eq:app_flag}
\end{equation}

\textbf{Step 2. Count on the noise set.}
For $i\in \mathcal N$ we have $\bar y_i=y_i$, so by Eq.~\eqref{eq:app_flag} the sample is flagged exactly when $h_i=y_i$. Hence $|F\cap \mathcal N|=|\mathcal N|\,(1-e_{\mathcal N})$, and the recall is $1-e_{\mathcal N}$.

\textbf{Step 3. Count on the valid set.}
For $i\in V$, Eq.~\eqref{eq:app_flag} reads $h_i=\bar y_i$, so $|F\cap V|=|V|\,q_{V}$. Since $\bar y_i\neq y_i$ on $V$, such a sample is an error of $h$.

\textbf{Step 4. Precision.}
Dividing the two counts by $N$,
\[
\mathrm{precision}=\frac{|F\cap \mathcal N|}{|F\cap \mathcal N|+|F\cap V|}
=\frac{\hat\eta(1-e_{\mathcal N})}{\hat\eta(1-e_{\mathcal N})+(1-\hat\eta)\,q_{V}},
\]
which completes (i).

\textbf{Step 5. Uniform complementary labels.}
Given $(\xx_i,y_i)$ and $i\in V$, the label $\bar y_i$ is uniform on the $M-1$ labels other than $y_i$ and independent of $h_i$, so
\[
\mathbb{P}\bigl(h_i=\bar y_i\mid i\in V,\,\xx_i,\,y_i\bigr)=\frac{\mathbf{1}[h_i\neq y_i]}{M-1}.
\]
Averaging over $(\xx_i,y_i)$ gives $q_{V}=e/(M-1)$. Given $i\in \mathcal N$, the Bernoulli draw is independent of $h_i$, so $\mathbb{P}(i\in F\mid i\in \mathcal N)=\mathbb{P}(h_i=y_i)=1-e$. The Bayes rule with prior $\eta$ and these two conditional probabilities gives the purity $\pi$ of (ii).

\textbf{Step 6. Enrichment condition.}
$\pi\ge\eta$ holds if and only if
$1-e\ge\eta(1-e)+(1-\eta)\,e/(M-1)$, that is,
$(1-\eta)(1-e)\ge(1-\eta)\,e/(M-1)$. Dividing by $(1-\eta)>0$ and rearranging gives $e\le1-1/M$, which completes (ii).

\textbf{Step 7. Recycling.}
Theorem~\ref{thm:recycle_truelabel} bounds the excess risk of the empirical minimizer on any recycled set of size $m$ and precision $\pi_{\mathrm{det}}$ by $2B(1-\pi_{\mathrm{det}})+\Delta_{\mathrm{gen}}(m,\delta)$. Taking the set to be $F$ and inserting the precision of (i) gives (iii).
\end{proof}

\begin{remark}[Scope]
\label{rem:refmargin_scope}
Theorem~\ref{thm:flip}(ii) covers mean-embedding divergences (squared distance between class means, or squared MMD). The pipeline uses a Sinkhorn divergence for the class-distance variant $W^{\mathrm{cl}}_{D}$, for which we observe the same sign change with a measured proportionality factor of about $0.32\,\alpha(\eta)$; we state the exact law for mean embeddings and report the Sinkhorn case numerically. Theorems~\ref{thm:margin} and~\ref{thm:detect} are exact for the default $0/1$ complementary label cost, which performed within $0.01$ AUC of the class-distance variant in every test.
\end{remark}

\subsection{Recycling Risk Bound}
\label{app:recycle}

This subsection proves the recycling bound used by Theorem~\ref{thm:detect}(iii). It shows that the benefit of recycling is governed by a single quantity, the \emph{precision} of the flagged set.

\paragraph{Diagonal-noise model and detector.}
Under the diagonal-noise setting, a training sample is either
\emph{clean} ($\bar y_i \neq y_i$, a valid exclusion) or \emph{noisy}
($\bar y_i=y_i$, a violated exclusion). CAVA flags a recycled set
$\widehat{\mathcal N}=I^{\mathrm{rec}}$ (Section~\ref{sec:cava_recycling}) and trains it with
cross-entropy treating $\bar y$ as an ordinary label. Write
$m=|\widehat{\mathcal N}|$ and define the detector \emph{precision}
\begin{equation}
\pi_{\mathrm{det}} \;:=\;\frac{|\{i\in\widehat{\mathcal N}:\bar y_i=y_i\}|}{m}.
\end{equation}
For a correctly flagged (noisy) sample $\bar y_i=y_i$, so the CE term is supervised by the \emph{true} label; for a falsely flagged (clean) sample $\bar y_i\neq y_i$, the CE term is supervised by a wrong label. Thus the recycled branch is exactly an ordinary-label dataset corrupted at rate $(1-\pi_{\mathrm{det}})$.

\begin{assumption}[Bounded loss and capacity]
\label{ass:bdd}
The per-example cross-entropy is bounded, $\mathcal{L}_{\mathrm{CE}}(y,g(\xx))\in[0,B]$ (e.g.\ via logit clipping, temperature scaling, or label smoothing), and the hypothesis class $\mathcal G$ has Rademacher complexity $\mathfrak R_m(\mathcal G)$ on $m$ samples.
\end{assumption}

\begin{theorem}[Recycling excess risk]
\label{thm:recycle_truelabel}
\textit{Let $\widehat g\in\arg\min_{g\in\mathcal G}\widehat R_{\mathrm{rec}}(g)$ minimize the \emph{observed} recycled risk
$\widehat R_{\mathrm{rec}}(g)=\tfrac1m\sum_{i\in\widehat{\mathcal N}}\mathcal{L}_{\mathrm{CE}}\big(\bar y_i,g(\xx_i)\big)$ over the recycled set $\widehat{\mathcal N}=I^{\mathrm{rec}}$ of size $m$ and precision $\pi_{\mathrm{det}}$, and let
$g^{\star}\in\arg\min_{g\in\mathcal G}R_{\mathrm{true}}(g)$ minimize the true-label risk $R_{\mathrm{true}}(g)=\mathbb E_{(\xx,y)}[\mathcal{L}_{\mathrm{CE}}(y,g(\xx))]$. Under Assumption~\ref{ass:bdd}, with probability at least $1-\delta$,}
\begin{equation}
\label{eq:recycle_truelabel}
R_{\mathrm{true}}(\widehat g)-R_{\mathrm{true}}(g^{\star})
\le
\underbrace{2(1-\pi_{\mathrm{det}})\,B}_{\text{noise bias (irreducible)}}
+
\underbrace{4B\,\mathfrak R_m(\mathcal G)+2B\sqrt{\tfrac{\ln(2/\delta)}{2m}}}_{\Delta_{\mathrm{gen}}(m,\delta)\;\to\;0}.
\end{equation}
\end{theorem}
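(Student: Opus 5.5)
The plan is the standard three-term decomposition of excess risk, with the noisy labels entering only through a bias term. Introduce the \emph{clean recycled risk} $\widehat R_{\mathrm{clean}}(g)=\tfrac1m\sum_{i\in\widehat{\mathcal N}}\mathcal{L}_{\mathrm{CE}}(y_i,g(\xx_i))$, which uses the true labels on the same recycled inputs. Then split
\[
R_{\mathrm{true}}(\widehat g)-R_{\mathrm{true}}(g^\star)
=\bigl[R_{\mathrm{true}}(\widehat g)-\widehat R_{\mathrm{clean}}(\widehat g)\bigr]
+\bigl[\widehat R_{\mathrm{clean}}(\widehat g)-\widehat R_{\mathrm{clean}}(g^\star)\bigr]
+\bigl[\widehat R_{\mathrm{clean}}(g^\star)-R_{\mathrm{true}}(g^\star)\bigr].
\]

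The first step is a pointwise bias bound. For every $g\in\mathcal G$, the observed and clean recycled risks differ only on the falsely flagged samples, which are a $(1-\pi_{\mathrm{det}})$ fraction of $\widehat{\mathcal N}$. On each of them the two losses both lie in $[0,B]$ by Assumption~\ref{ass:bdd}, so they differ by at most $B$. Hence $\sup_g|\widehat R_{\mathrm{rec}}(g)-\widehat R_{\mathrm{clean}}(g)|\le(1-\pi_{\mathrm{det}})B$.

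The middle bracket then follows from optimality of $\widehat g$ for $\widehat R_{\mathrm{rec}}$:
\[
\widehat R_{\mathrm{clean}}(\widehat g)\le\widehat R_{\mathrm{rec}}(\widehat g)+(1-\pi_{\mathrm{det}})B\le\widehat R_{\mathrm{rec}}(g^\star)+(1-\pi_{\mathrm{det}})B\le\widehat R_{\mathrm{clean}}(g^\star)+2(1-\pi_{\mathrm{det}})B.
\]
This gives exactly the irreducible term $2(1-\pi_{\mathrm{det}})B$.

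The outer two brackets are uniform deviations between the true risk and an empirical risk on $m$ samples. I would bound them with the standard Rademacher result for losses in $[0,B]$. The first applies symmetrization and McDiarmid to $\sup_g(R_{\mathrm{true}}-\widehat R_{\mathrm{clean}})$, giving $2B\mathfrak R_m(\mathcal G)+B\sqrt{\ln(2/\delta)/(2m)}$ with probability $1-\delta/2$. Here the loss class complexity is bounded by $B\mathfrak R_m(\mathcal G)$ through the normalization convention for $\mathcal G$. The same bound applied to the other tail handles the third bracket. A union bound over the two events yields $4B\mathfrak R_m+2B\sqrt{\ln(2/\delta)/(2m)}$ with probability $1-\delta$.

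The main obstacle is justifying that the recycled inputs with true labels behave as an i.i.d.\ sample from the distribution defining $R_{\mathrm{true}}$. The set $\widehat{\mathcal N}$ is data-selected, since flagging depends on $h_i$. I would state the concentration step conditionally on the selection, treating the recycled pairs as i.i.d.\ from the flagged-set distribution. The gap between that distribution and the true-label population is exactly the shift term $2B|\pi-(1-e)|$ that Theorem~\ref{thm:detect} adds separately. So within this theorem I would make the conditional i.i.d.\ assumption explicit, as in Remark~\ref{rem:selection}, rather than try to remove it.
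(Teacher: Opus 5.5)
Your proposal is correct and is essentially the paper's own proof: the same clean recycled risk $\widehat R_{\mathrm{clean}}$, the same pointwise bias bound $(1-\pi_{\mathrm{det}})B$, the same chain through the optimality of $\widehat g$, and two-sided uniform convergence for the outer terms. Your $\delta/2$-per-tail accounting reproduces the stated constant $4B\,\mathfrak R_m(\mathcal G)+2B\sqrt{\ln(2/\delta)/(2m)}$ more transparently than the paper, which reuses $\Delta_{\mathrm{gen}}$ for the one-sided deviation and ends with ``$2\Delta_{\mathrm{gen}}$''. On i.i.d.\ sampling, the bound exactly as stated (with $R_{\mathrm{true}}$ and no shift term) needs the recycled true-label pairs to be i.i.d.\ from the population law defining $R_{\mathrm{true}}$, which is the same implicit assumption as the paper's Step~2; i.i.d.\ sampling from the flagged-set law, as in your last paragraph, only controls the flagged-set risk, and the shift term would then have to be added.
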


\begin{proof}
[\textbf{Proof of Theorem \ref{thm:recycle_truelabel}}]
All statements below hold under Assumption~\ref{ass:bdd}, with probability at least $1-\delta$.

\textbf{Step 1. The observed risk tracks the clean risk.}
Let $\widehat R_{\mathrm{clean}}(g)=\tfrac1m\sum_{i\in\widehat{\mathcal N}}
\mathcal{L}_{\mathrm{CE}}(y_i,g(\xx_i))$ denote the empirical risk under the (unobserved) true labels. For a correctly flagged sample, $\bar y_i=y_i$, so the corresponding summands of $\widehat R_{\mathrm{rec}}$ and $\widehat R_{\mathrm{clean}}$ agree. For each of the $(1-\pi_{\mathrm{det}})m$ falsely flagged samples, the two summands differ by at most $B$, since both losses lie in $[0,B]$. Averaging over the $m$ samples gives, for all $g$,
\begin{equation}
\Big|\widehat R_{\mathrm{rec}}(g)-\widehat R_{\mathrm{clean}}(g)\Big|
\le (1-\pi_{\mathrm{det}})B.
\label{eq:obsclean}
\end{equation}

\textbf{Step 2. Uniform convergence on the clean risk.}
Standard uniform convergence \citep{mohri2018foundations} gives, with probability at least $1-\delta$,
\begin{equation}
\sup_{g\in\mathcal G}\Big(R_{\mathrm{true}}(g)-\widehat R_{\mathrm{clean}}(g)\Big)
\le \Delta_{\mathrm{gen}}(m,\delta),
\end{equation}
together with the analogous lower-tail bound.

\textbf{Step 3. Chaining the two estimates.}
Combining Steps 1 and 2,
\begin{align*}
R_{\mathrm{true}}(\widehat g)
&\le \widehat R_{\mathrm{clean}}(\widehat g)+\Delta_{\mathrm{gen}}
 \le \widehat R_{\mathrm{rec}}(\widehat g)+(1-\pi_{\mathrm{det}})B+\Delta_{\mathrm{gen}}
 \le \widehat R_{\mathrm{rec}}(g^{\star})+(1-\pi_{\mathrm{det}})B+\Delta_{\mathrm{gen}}\\
&\le \widehat R_{\mathrm{clean}}(g^{\star})+2(1-\pi_{\mathrm{det}})B+\Delta_{\mathrm{gen}}
 \le R_{\mathrm{true}}(g^{\star})+2(1-\pi_{\mathrm{det}})B+2\Delta_{\mathrm{gen}},
\end{align*}
where the first and fifth inequalities use Step~2, the second and fourth use Eq.~\eqref{eq:obsclean}, and the third uses that $\widehat g$ minimizes $\widehat R_{\mathrm{rec}}$. Rearranging completes the proof.
\end{proof}

Theorem~\ref{thm:recycle_truelabel} isolates the recycled branch by analyzing the minimizer of $\widehat R_{\mathrm{rec}}$ alone. The next corollary extends the same argument to the mixed objective actually optimized in Eq.~\eqref{eq:cava_mixed_objective}.

\begin{corollary}[Recycling within the mixed objective]
\label{cor:mixed}
Extend Assumption~\ref{ass:bdd} by $\mathcal{L}_{\mathrm{comp}}(\bar y,g(\xx))\in[0,B]$. Let $\widehat R_{\mathrm{keep}}$ denote the empirical complementary risk on the $n_k=|I^{\mathrm{keep}}|$ retained samples, let
$\widehat g\in\arg\min_{g\in\mathcal G}\;\kappa\,\widehat R_{\mathrm{rec}}(g)+(1-\kappa)\,\widehat R_{\mathrm{keep}}(g)$
minimize the empirical mixed objective, and define the population mixed risk
$R_{\mathrm{mix}}(g)=\kappa\,R_{\mathrm{true}}(g)+(1-\kappa)\,R_{\mathrm{comp}}(g)$ with
$R_{\mathrm{comp}}(g)=\mathbb E_{\xx_t\sim\rho_t}[\mathcal{L}_{\mathrm{comp}}(\bar y_t,g(\xx_t))]$, together with
$\Delta_{\mathrm{mix}}=\kappa\,\Delta_{\mathrm{gen}}(m,\delta/2)+(1-\kappa)\,\Delta_{\mathrm{gen}}^{\mathrm{comp}}(n_k,\delta/2)$,
where $\Delta_{\mathrm{gen}}^{\mathrm{comp}}$ is the analogous uniform-convergence term for the complementary loss class. Then, with probability at least $1-\delta$, for every $g^{\star}\in\mathcal G$,
\begin{equation}
R_{\mathrm{mix}}(\widehat g)-R_{\mathrm{mix}}(g^{\star})
\;\le\;
2\kappa\,(1-\pi_{\mathrm{det}})\,B
\;+\;
2\,\Delta_{\mathrm{mix}}.
\end{equation}
\end{corollary}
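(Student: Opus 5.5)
The plan is to run the argument of Theorem~\ref{thm:recycle_truelabel} on the mixed objective, splitting every quantity into its recycled part (weight $\kappa$) and its retained part (weight $1-\kappa$).

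\textbf{Step 1: observed versus clean risk on the recycled set.} Define the clean recycled risk $\widehat R_{\mathrm{clean}}$ as in Theorem~\ref{thm:recycle_truelabel}. Eq.~\eqref{eq:obsclean} holds for all $g$ and gives
\[
\bigl|\kappa\widehat R_{\mathrm{rec}}(g)-\kappa\widehat R_{\mathrm{clean}}(g)\bigr|\le\kappa(1-\pi_{\mathrm{det}})B .
\]
The retained branch needs no such correction. It is already an empirical estimate of $R_{\mathrm{comp}}$ on samples with given labels $\bar y_t\sim\rho_t$.

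\textbf{Step 2: uniform convergence for each branch.} Apply two-sided uniform convergence to each branch with confidence $\delta/2$. For the CE class on $m$ samples,
\[
\sup_g\bigl|R_{\mathrm{true}}(g)-\widehat R_{\mathrm{clean}}(g)\bigr|\le\Delta_{\mathrm{gen}}(m,\delta/2).
\]
For the complementary loss class on $n_k$ samples,
\[
\sup_g\bigl|R_{\mathrm{comp}}(g)-\widehat R_{\mathrm{keep}}(g)\bigr|\le\Delta^{\mathrm{comp}}_{\mathrm{gen}}(n_k,\delta/2).
\]
A union bound makes both hold with probability at least $1-\delta$. Taking the $\kappa$-weighted combination, define
\[
\widehat R_{\mathrm{mix}}^{\mathrm{clean}}=\kappa\widehat R_{\mathrm{clean}}+(1-\kappa)\widehat R_{\mathrm{keep}},\qquad \widehat R_{\mathrm{mix}}=\kappa\widehat R_{\mathrm{rec}}+(1-\kappa)\widehat R_{\mathrm{keep}}.
\]
Then $|R_{\mathrm{mix}}-\widehat R_{\mathrm{mix}}^{\mathrm{clean}}|\le\Delta_{\mathrm{mix}}$ uniformly, and $|\widehat R_{\mathrm{mix}}-\widehat R_{\mathrm{mix}}^{\mathrm{clean}}|\le\kappa(1-\pi_{\mathrm{det}})B$.

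\textbf{Step 3: chain of five inequalities.} For any $g^\star\in\mathcal G$,
\[
R_{\mathrm{mix}}(\widehat g)\le\widehat R^{\mathrm{clean}}_{\mathrm{mix}}(\widehat g)+\Delta_{\mathrm{mix}}\le\widehat R_{\mathrm{mix}}(\widehat g)+\kappa(1-\pi_{\mathrm{det}})B+\Delta_{\mathrm{mix}} .
\]
Since $\widehat g$ minimizes $\widehat R_{\mathrm{mix}}$, the last expression is at most $\widehat R_{\mathrm{mix}}(g^\star)+\kappa(1-\pi_{\mathrm{det}})B+\Delta_{\mathrm{mix}}$. Reversing the first two steps at $g^\star$ bounds this by $R_{\mathrm{mix}}(g^\star)+2\kappa(1-\pi_{\mathrm{det}})B+2\Delta_{\mathrm{mix}}$, which is the claim. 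Because the uniform bounds are two-sided, the chain holds for every $g^\star$ simultaneously, not only the minimizer.

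\textbf{Main obstacle.} The delicate point is Step 2. Both samples are selected by the data-dependent margin, so the recycled and retained sets are not i.i.d.\ draws from the true-label distribution or from $\rho_t$. I would handle this as Theorem~\ref{thm:recycle_truelabel} implicitly does: condition on the selection and interpret $R_{\mathrm{true}}$ and $R_{\mathrm{comp}}$ (the latter via $\rho_t$) as the laws of the selected samples. Any distribution shift between these laws and the population is then deferred to the shift term discussed after Theorem~\ref{thm:detect}. With that convention, the two branch bounds are standard Rademacher bounds for losses in $[0,B]$.
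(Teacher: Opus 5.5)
Your proposal is correct and matches the paper's proof step for step. The bias from Eq.~\eqref{eq:obsclean} enters only through the recycled branch with weight $\kappa$, a union bound over two two-sided uniform-convergence statements at level $\delta/2$ controls the clean mixture against $R_{\mathrm{mix}}$, and the chain through the minimality of $\widehat g$ doubles both terms. Your handling of the data-dependent selection mirrors the paper too: it reads the bound conditionally on the realized selection and relies on a sample-splitting variant to make the i.i.d.\ uniform-convergence step rigorous (Remark~\ref{rem:selection}).
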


\begin{proof}
\textbf{Step 1. Bias of the observed mixture.}
For every $g$, Eq.~\eqref{eq:obsclean} gives
$\bigl|\widehat R_{\mathrm{mix}}(g)-\kappa\widehat R_{\mathrm{clean}}(g)-(1-\kappa)\widehat R_{\mathrm{keep}}(g)\bigr|
=\kappa\bigl|\widehat R_{\mathrm{rec}}(g)-\widehat R_{\mathrm{clean}}(g)\bigr|
\le\kappa(1-\pi_{\mathrm{det}})B$.

\textbf{Step 2. Uniform convergence of the unbiased mixture.}
A union bound over the two branches gives, with probability at least $1-\delta$,
$\sup_{g\in\mathcal G}\bigl|\kappa\widehat R_{\mathrm{clean}}(g)+(1-\kappa)\widehat R_{\mathrm{keep}}(g)-R_{\mathrm{mix}}(g)\bigr|\le\Delta_{\mathrm{mix}}$.

\textbf{Step 3. Chaining.}
Combining Steps 1 and 2 exactly as in the proof of Theorem~\ref{thm:recycle_truelabel},
\begin{equation*}
R_{\mathrm{mix}}(\widehat g)
\le\widehat R_{\mathrm{mix}}(\widehat g)+\kappa(1-\pi_{\mathrm{det}})B+\Delta_{\mathrm{mix}}
\le\widehat R_{\mathrm{mix}}(g^{\star})+\kappa(1-\pi_{\mathrm{det}})B+\Delta_{\mathrm{mix}}
\le R_{\mathrm{mix}}(g^{\star})+2\kappa(1-\pi_{\mathrm{det}})B+2\Delta_{\mathrm{mix}},
\end{equation*}
where the middle inequality uses that $\widehat g$ minimizes the empirical mixed objective.
\end{proof}

Setting $\kappa=1$ recovers Theorem~\ref{thm:recycle_truelabel}. In the mixed objective the noise bias enters only through the recycled branch, scaled by the weight $\kappa$.

\begin{remark}[Data-dependent selection]
\label{rem:selection}
The recycled set $I^{\mathrm{rec}}$ is obtained by thresholding scores computed on the same pool, so its elements are not an i.i.d.\ sample drawn independently of the selection, and Theorem~\ref{thm:recycle_truelabel} and Corollary~\ref{cor:mixed} should be read conditionally on the realized selection, with $\pi_{\mathrm{det}}$ its realized precision; this is exactly the quantity we measure post hoc and report (Appendix~\ref{post-hoc_analysis}). An unconditional statement follows from a standard sample-splitting variant: compute the scores and the cuts on one half of the pool and apply recycling only to the other half. Conditionally on the first half, the selection region is fixed, the selected samples of the second half form an i.i.d.\ draw from it, and both results apply verbatim with $m$ the split size. We verified this device empirically with the TW-LAVA configuration (noisy-group score, fixed cut) on synthetic CIFAR-10 ($\eta=25\%$, $N_v=2{,}000$, seed $1126$): scoring each half of the pool with the entropic $c$-transform of the dual potentials fitted on the other half gives detector AUC $0.962$ and recycled-branch precision $83.5\%$, compared with $0.971$ and $86.8\%$ for full-pool scoring, and end-to-end training on the split-fitted selection reaches $85.16\%$ against $86.33\%$, so the unconditional guarantee costs about one accuracy point in practice. Because the per-task rules are one-dimensional thresholds, letting a fitted threshold be chosen on the evaluation half adds only the complexity of threshold classes, a union of families of VC dimension one that inflates $\Delta_{\mathrm{gen}}$ by a logarithmic factor; the adaptive cut of CAVA fits no threshold, so it adds nothing. Both devices change constants, not the structure of the bounds.
\end{remark}

The bound separates two effects: the estimation term $\Delta_{\mathrm{gen}}$ shrinks as the recycled set grows, whereas false positives leave a bias floor $2(1-\pi_{\mathrm{det}})B$ that more data cannot remove. Recycling therefore helps only when the flagged set is pure, which is why CAVA recycles the adaptive-cut set, whose precision Theorem~\ref{thm:detect} gives in closed form, and why the selection-only objective is the default when that precision cannot be checked.

\section{Experimental Setup}
\label{Appendix_C}
This section gives the datasets, the training and valuation settings, the metrics used to evaluate selections, and the baselines.

\subsection{Datasets}
\label{clarifying_cifar10}

\paragraph{Synthetic benchmarks.} MNIST~\citep{mnist}, CIFAR-10, and CIFAR-20~\citep{cifar10} (CIFAR-100 grouped into its $20$ superclasses) receive synthetic diagonal noise: each training label equals the true class with probability $\eta$ and is otherwise drawn uniformly from the other $M-1$ classes (Assumption~\ref{ass:noise_model}); $\eta=25\%$ unless stated otherwise.

\paragraph{Human-annotated benchmarks.} CLCIFAR-10 and CLCIFAR-20~\citep{wang2024climage} pair the CIFAR training images with crowdsourced complementary labels; we use the first of the three labels per image, of which $3.98\%$ ($1{,}990$ images) and $2.32\%$ ($1{,}161$) name the true class. CLMIN-10 and CLMIN-20 from the same collection contain $5{,}000$ and $10{,}000$ ImageNet training images of size $64\times64$, resized to $32\times32$, with $4.80\%$ and $3.28\%$ such annotation errors. All four have error rates below $1/M$, the regime in which the label cost of TW-LAVA reverses (Theorem~\ref{thm:flip}).

\paragraph{Further domains.} Three benchmarks outside CIFAR and ImageNet receive the same synthetic diagonal noise ($\eta\in\{5,15,25\}\%$). FOOD15 uses the first $15$ of $20$ dishes we take from Food-101~\citep{bossard2014food} (hamburger, pizza, sushi, ramen, steak, ice cream, french fries, caesar salad, fried rice, chocolate cake, donuts, dumplings, pancakes, spaghetti bolognese, tacos) with the official split ($750$ training and $250$ test images per class); the training images of Food-101 were left uncleaned by its authors and may carry a small amount of label noise of their own, while the test images were cleaned. RESISC45-20 uses $20$ of the $45$ aerial scene classes of NWPU-RESISC45~\citep{cheng2017remote} (airport, beach, bridge, church, desert, forest, harbor, island, parking lot, stadium, circular farmland, cloud, freeway, golf course, lake, mountain, railway, roundabout, ship, storage tank); the dataset has no official split, so the $700$ images per class are split into $500$ training and $200$ test images with seed $1126$. Places365-10 uses $10$ scene classes of Places365-Standard~\citep{zhou2018places} (airfield, beach, bedroom, bridge, forest path, kitchen, mountain, skyscraper, football stadium, outdoor swimming pool), with $4{,}000$ training and $1{,}000$ test images per class drawn from the official training set with seed $1126$, since the official validation split holds only $100$ images per class. All images are center-cropped and resized to $32\times32$; at $64\times64$ and $128\times128$ the accuracy did not improve in a pilot. Each dataset has its own feature extractor of the same architecture, trained on the clean labels of its training split and taken after the final epoch. The validation set holds $50$ test images per class ($200$ for Places365-10), and accuracy is measured on the remaining test images.

\paragraph{Scale.} CIFAR-10-1M enlarges the CIFAR-10 training set to one million samples by label-preserving augmentation (Appendix~\ref{large-scale_dataset}).

\paragraph{Scope.} Benchmarks with at most $20$ classes are standard in CLL, and current CLL methods are workable only at this scale~\citep{ishida2017learning,mcl2020,scl2020,wang2024climage,mai2026cllsurvey}, because a complementary label excludes a single class and the supervision weakens as the label space grows; extending CLL to larger label spaces is beyond the scope of this work.

\subsection{Training and Valuation Settings}
\label{sec:full_exp_details}

For all image classification experiments, we use ResNet-18 as the backbone network. Models are trained for 200 epochs with batch size 512 using stochastic gradient descent with momentum 0.9 and weight decay $5\times10^{-4}$. The initial learning rate is set to 0.1 and decayed to 0.01 and 0.001 at epochs 100 and 150, respectively. We set random seeds to control data loading and training randomness. Table~\ref{tab:main_results} and every table or figure whose caption says so report the mean $\pm$ standard deviation over three independent runs with seeds $1126$, $42$, and $2023$, the same seeds for all compared methods; the other results are single runs with seed $1126$. Unless a caption says otherwise, every method in a table uses the same feature extractor, the same $2{,}000$ validation images, the same tasks, and AutoAugment, and accuracy is measured after the final epoch on the held-out test images. The recycling weight is $\kappa=0.5$ in all experiments. For computationally expensive scalability experiments such as CIFAR-10-1M, we report single-run results and explicitly mark them as such.

\paragraph{OT valuation settings.}
Following LAVA and SAVA~\citep{just2023lava,kesslersava}, the feature extractor $\phi$ is a fixed pretrained network shared by every valuation method and baseline, so no method gains an advantage from it. We use the embedder released with LAVA for CIFAR-10 (which CLCIFAR-10 reuses) and networks trained the same way for the other datasets: a supervised PreAct-ResNet-18 trained with the clean ordinary labels of the training split, which is the same image set as the complementary-label pool (the LAVA network reaches $99.87\%$ on the CIFAR-10 training split and $93.07\%$ on its test split; LAVA describes its embedder as trained on the validation set, which this released network is not). CLMIN and the further domains have their own such networks, and CIFAR-10-1M an augmentation-matched one. We use its class logits (the penultimate $512$-dimensional features for CIFAR-20 and CLCIFAR-20). In the complementary-label setting such an embedder is an upper bound on feature quality, since it has seen the true label of every pool image, and it is the only component of the pipeline that has: the selection, the thresholds and the training never use a true label. The detection quality of the label margin is bounded by how well $\phi$ separates the classes (Theorem~\ref{thm:detect}(i)), so weaker, label-free features lower the absolute numbers of every method without changing which method is better; Appendix~\ref{app:detect_baselines} reports all detectors on the same features.
Representations are L2-normalized and rescaled so that the median feature cost between pool and validation samples equals one. CAVA uses the ground cost of Eq.~\eqref{eq:cava_cost} with $\lambda_\xx=\lambda_y=1$, the $0/1$ complementary label cost, entropic regularization $\varepsilon_{\mathrm{OT}}=0.01$, uniform marginals, $T=5$ random tasks of equal size, and a fixed class-stratified validation set of $N_v=2{,}000$ test images (drawn once with seed 1126 and shared by all seeds and methods), used only to score the training samples and never for training. On CLMIN-10 and CLMIN-20 the test split holds only $50$ images per class, so $20$ per class form the validation set ($200$ and $400$ images) and the other $30$ per class the evaluation set. TW-LAVA uses the same feature extractor, $\lambda_\xx=\lambda_y=1$ and the noisy-group label cost of Eq.~\eqref{eq:cava_label_cost}; it is computed at the CAVA settings ($\varepsilon_{\mathrm{OT}}=0.01$, $N_v=2{,}000$). The small-noise experiments (Figure~\ref{fig:smallnoise} and Table~\ref{tab:smallnoise}) are the one exception: there TW-LAVA keeps the OT settings of its original implementation ($\varepsilon_{\mathrm{OT}}=0.1$, label costs given by the OT distance between label groups), while still using the same $2{,}000$ validation images as every other method. The direct score comparisons of Figures~\ref{fig:transfer_vs_cava} and~\ref{fig:flip} recompute the noisy-group score at the CAVA settings (same normalized features, $\varepsilon_{\mathrm{OT}}=0.01$, $N_v=2{,}000$), so that only the label cost and the score differ.

\paragraph{Transition matrix for FWD and CPE.}
Forward correction and CPE take a class transition matrix $Q$. As in their original papers and in the CLImage benchmark~\citep{fwd2018,cpe2023,wang2024climage}, $Q$ is the empirical transition matrix of the training pool (the known-$Q$ protocol); it is the same for every method in a column, so it affects the objective and not the selection. The empirical $Q$ of a noisy pool contains the diagonal noise rates, so these two objectives receive information that SCL-NL and SCL-EXP do not; our main results use SCL-NL, which needs no transition matrix, and CAVA improves it over TW-LAVA by $3.9$ to $43.1$ points on the CLCIFAR datasets, against $7.3$ to $46.6$ points for FWD (Table~\ref{tab:main_results}, Figure~\ref{fig:realworld_compare}).

\subsection{Branch-Quality Metrics}
\label{app:branch_metrics}
Several tables evaluate the two branches of Eq.~\eqref{eq:cava_mixed_objective} with the true labels, which are available for the synthetic benchmarks and for the annotated CLCIFAR datasets. In the actual CLL training setup, the true label $y_i$ is not observed, and CAVA only uses $(\xx_i,\bar y_i)$ together with the clean validation set. 
However, in the synthetic benchmark, the original labels are known to the evaluator because diagonal complementary-label noise is artificially generated. 
We therefore use $y_i$ only after CAVA has completed selection and recycling, and only to evaluate the quality of the recycled and retained subsets. 
The true labels are never used during training, OT valuation, threshold estimation, pruning, or recycling, apart from the fixed feature extractor described in Appendix~\ref{sec:full_exp_details}.

The two selected subsets affect different branches of Eq.~(\ref{eq:cava_mixed_objective}). 
The recycled set $I^{\mathrm{rec}}$ is trained with the CE loss by treating the observed complementary label $\bar y_i$ as an ordinary label. 
If a recycled sample satisfies $\bar y_i=y_i$, then the CE branch receives correct ordinary-label supervision. 
If a recycled sample satisfies $\bar y_i\neq y_i$, then the CE branch receives an incorrect ordinary label. 
Similarly, the retained set $I^{\mathrm{keep}}$ is trained with SCL-NL as the complementary-label branch, so any diagonal-noise sample remaining in $I^{\mathrm{keep}}$ still imposes a contradictory exclusion constraint. 
Therefore, the quality of $I^{\mathrm{rec}}$ affects the CE branch, while the quality of $I^{\mathrm{keep}}$ affects the SCL-NL branch.

Let $I^{\mathrm{noise}}=\{i:\bar y_i=y_i\}$ be the true diagonal-noise set, which is used only for this post-hoc evaluation. $I^{\mathrm{rec}}$ is the recycled set sent to the CE branch (the flagged tail for the fixed cut, or the adaptive-cut set $\{i:s_i\le0\}$ of CAVA), and $I^{\mathrm{keep}}$ is the retained set sent to the complementary-label branch. Table~\ref{tab:branch_metrics} lists the metrics; the first three describe the CE branch as a detector of diagonal noise, and the last two give the share of wrong supervision in each branch.

\begin{table}[htb]
\centering
\caption{Branch-quality metrics. Higher is better for precision, recall, and F1; lower is better for the two contamination rates.}
\label{tab:branch_metrics}
\small
\renewcommand{\arraystretch}{1.35}
\begin{tabular}{@{}lll@{}}
\toprule
\textbf{Metric} & \textbf{Definition} & \textbf{What it measures} \\
\midrule
CE Precision & $|I^{\mathrm{rec}}\cap I^{\mathrm{noise}}|\,/\,|I^{\mathrm{rec}}|$ & recycled labels that are correct ordinary labels \\
CE Recall & $|I^{\mathrm{rec}}\cap I^{\mathrm{noise}}|\,/\,|I^{\mathrm{noise}}|$ & diagonal noise recovered into the CE branch \\
F1 & harmonic mean of the two & balance between precision and recall \\
CE Contam. & $1-\text{CE Precision}$ & wrong ordinary labels in the CE branch \\
CLL Contam. & $|I^{\mathrm{keep}}\cap I^{\mathrm{noise}}|\,/\,|I^{\mathrm{keep}}|$ & diagonal noise left in the complementary-label branch \\
\bottomrule
\end{tabular}
\end{table}

\subsection{Detection Baselines}
\label{app:detect_baselines}
Table~\ref{tab:detect_baselines} gives the values plotted in Figure~\ref{fig:cava_outperform}, which shows the synthetic datasets at $\eta=5\%$ and $\eta=15\%$, and adds $\eta=25\%$. At $\eta=5\%$, below the crossover $1/M$ of CIFAR-10 and at the crossover of CIFAR-20, TW-LAVA is reversed and at chance, respectively; above both crossovers, Data-OOB and small-loss selection are reversed instead. All methods are evaluated on the same pool and five tasks (seed $1126$), all valuation methods use the same feature extractor and the same $2{,}000$ validation images, and each score is evaluated in its natural direction, with valid samples expected to score high. KNN-Shapley~\citep{jia2019efficient} uses the exact recursion with the $k$-nearest-neighbor accuracy ($k=10$) on the validation set as utility and $\bar y_i$ as the training label. Data-OOB~\citep{kwon2023dataoob} bags $50$ logistic regressions fit on $(\xx_i,\bar y_i)$ and scores each sample by the out-of-bag probability of $\bar y_i$. Small-loss selection~\citep{han2018co} needs no clean data; it warms up a ResNet-18 from scratch with SCL-NL on the full pool for $20$ epochs and scores each sample by its negative loss averaged over the last $5$ epochs. TW-LAVA is the calibrated score of Eq.~\eqref{eq:cava_score} at the CAVA OT settings. Every score in this table is measured on the first task; over all five tasks, TW-LAVA gives $0.996$, $0.945$, $0.076$, and $0.149$ at $\eta=25\%$ on CIFAR-10 and CIFAR-20 and on CLCIFAR-10 and CLCIFAR-20, the values used in the rest of the appendix.

\begin{figure}[htb]
\centering
\includegraphics{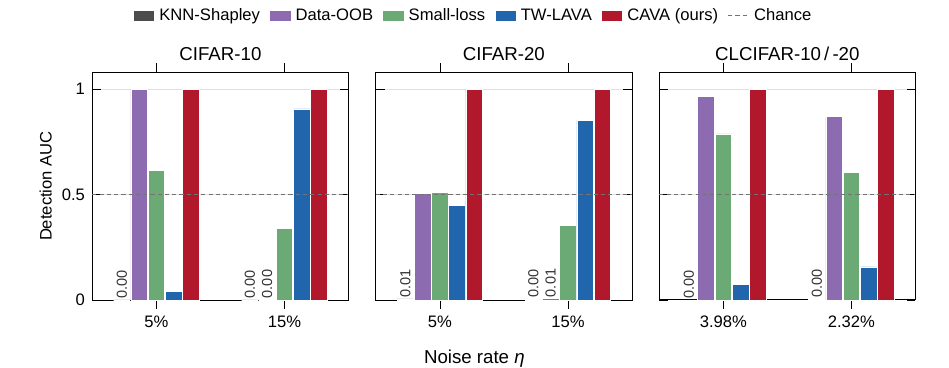}
\caption{\textbf{Detection of diagonal noise.} Detection AUC of each score in its natural direction; the synthetic datasets are shown at $\eta=5\%$ and $15\%$, around the crossover $1/M$ ($10\%$ for CIFAR-10, $5\%$ for CIFAR-20). The gray dashed line marks chance ($0.5$): a bar below it ranks the noise above valid data, and bars with an AUC near $0$ show their value.}
\label{fig:cava_outperform}
\end{figure}

\begin{table}[htb]
\centering
\caption{Detection AUC of diagonal noise in the natural direction of each score (valid samples high; $0.5$ is chance, values near $0$ rank the noise as the most valuable data). Same pool and tasks for every method (seed $1126$).}
\label{tab:detect_baselines}
\small
\begin{tabular}{@{}lcccccccc@{}}
\toprule
& \multicolumn{3}{c}{CIFAR-10} & \multicolumn{3}{c}{CIFAR-20} & CLCIFAR-10 & CLCIFAR-20 \\
\cmidrule(lr){2-4}\cmidrule(lr){5-7}
Noise rate $\eta$ & $25\%$ & $15\%$ & $5\%$ & $25\%$ & $15\%$ & $5\%$ & $3.98\%$ & $2.32\%$ \\
\midrule
KNN-Shapley & 0.000 & 0.000 & 0.000 & 0.002 & 0.003 & 0.005 & 0.000 & 0.004 \\
Data-OOB & 0.000 & 0.001 & 1.000 & 0.002 & 0.006 & 0.506 & 0.965 & 0.869 \\
Small-loss & 0.217 & 0.338 & 0.614 & 0.313 & 0.354 & 0.509 & 0.787 & 0.603 \\
TW-LAVA & 0.995 & 0.906 & 0.041 & 0.940 & 0.852 & 0.447 & 0.072 & 0.156 \\
\midrule
\rowcolor{green!12} CAVA (ours) & 1.000 & 1.000 & 1.000 & 0.997 & 0.998 & 0.999 & 1.000 & 0.999 \\
\bottomrule
\end{tabular}
\end{table}

\subsection{Target-Label-Free Representations}
\label{app:representations}
The supervised embedder of the main tables has seen the true label of every pool image (Appendix~\ref{sec:full_exp_details}), so it bounds the quality any representation can give the valuation. This subsection repeats the detection experiments with two extractors that have seen no label of the pool, under the protocol of Table~\ref{tab:main_results}: \emph{CLIP}, the image encoder of CLIP ViT-B/32~\citep{radford2021clip} applied to the images upsampled to $224\times224$, and \emph{validation only}, a PreAct-ResNet-18 trained from scratch on the $2{,}000$ validation images ($200$ or $400$ on CLMIN) and nothing else, the protocol LAVA describes for its own embedder, with penultimate features and no model selection. Both are L2-normalized and median-rescaled as before, and all methods in a row share the representation, the validation set and the five tasks. Seeds $42$ and $2023$ change the AUC of CAVA by at most $0.005$ and that of TW-LAVA by at most $0.011$ on every dataset of Table~\ref{tab:representations}, so the table reports seed $1126$.

\paragraph{What the transport contributes.} Theorem~\ref{thm:margin} holds for any validation potential, including $w=0$, where $p_i$ is a kernel classifier over the validation set with bandwidth $\varepsilon_{\mathrm{OT}}$. The column ``$w=0$'' is therefore the exact ablation of the OT solve. Table~\ref{tab:representations} shows that it is within $0.006$ AUC of CAVA on both representations and every dataset where we computed it: the potentials, which enforce that every validation image receives the same mass, do not improve detection. Two further settings test the one effect the marginal constraint can have. With $N_v\in\{50,100,200,500\}$ on CIFAR-10 (CLIP), CAVA reaches AUC $0.978$, $0.984$, $0.987$ and $0.990$, and $w=0$ is within $0.001$ of it at $N_v\ge100$ and higher at $N_v=50$ ($0.981$). When the class marginals of the pool and of the validation set differ, the constraint forces mass onto the wrong classes: thinning five validation classes to $20\%$ lowers CAVA to $0.920$ while $w=0$ stays at $0.985$, and thinning five pool classes lowers CAVA to $0.983$ against $0.994$ for $w=0$. The transport is thus the framework in which the complementary label cost and the margin are defined, and with mismatched class marginals the margin without the marginal constraint ($w=0$) is preferable.

\begin{table*}[htb]
\centering
\caption{Detection of diagonal noise on two representations that have never seen a label of the pool (seed $1126$, all five tasks). AUC treats valid samples as positive; ``adaptive cut'' gives the precision and recall of $\{s_i\le0\}$, and ``$w=0$'' is the CAVA margin without the validation potential. The supervised embedder of the main tables is shown for reference; ``--'': not run.}
\label{tab:representations}
\small
\setlength{\tabcolsep}{4.5pt}
\begin{tabular}{@{}llcccc@{}}
\toprule
& & \multicolumn{3}{c}{\textbf{CAVA (ours)}} & \\
\cmidrule(lr){3-5}
\textbf{Features} & \textbf{Dataset} & AUC & adaptive cut P / R & $w=0$ & \textbf{TW-LAVA} \\
\midrule
\multirow{7}{*}{CLIP}
& MNIST & 0.990 & 96.7 / 88.5 & -- & 0.771 \\
& CIFAR-10 & 0.994 & 96.6 / 88.1 & 0.992 & 0.773 \\
& CIFAR-20 & 0.971 & 94.5 / 66.3 & 0.969 & 0.711 \\
& CLCIFAR-10 & 0.996 & 87.1 / 86.4 & 0.995 & 0.758 \\
& CLCIFAR-20 & 0.981 & 75.4 / 59.6 & 0.976 & 0.621 \\
& CLMIN-10 & 0.968 & 73.0 / 70.8 & 0.965 & 0.540 \\
& CLMIN-20 & 0.949 & 66.7 / 55.5 & -- & 0.514 \\
\midrule
\multirow{7}{*}{validation only}
& MNIST & 0.999 & 99.4 / 98.1 & -- & 0.985 \\
& CIFAR-10 & 0.940 & 86.7 / 68.5 & 0.941 & 0.751 \\
& CIFAR-20 & 0.853 & 81.6 / 37.8 & 0.855 & 0.624 \\
& CLCIFAR-10 & 0.943 & 52.9 / 66.6 & 0.942 & 0.742 \\
& CLCIFAR-20 & 0.861 & 28.7 / 32.6 & 0.857 & 0.595 \\
& CLMIN-10 & 0.797 & 25.0 / 36.7 & 0.794 & 0.585 \\
& CLMIN-20 & 0.754 & 20.4 / 22.6 & -- & 0.499 \\
\midrule
\multirow{4}{*}{supervised (oracle)}
& CIFAR-10 & 1.000 & 99.98 / 99.8 & -- & 0.996 \\
& CIFAR-20 & 0.997 & 98.8 / 87.2 & -- & 0.945 \\
& CLCIFAR-10 & 1.000 & 99.7 / 99.7 & -- & 0.076 \\
& CLCIFAR-20 & 0.999 & 96.0 / 87.9 & -- & 0.149 \\
\bottomrule
\end{tabular}
\end{table*}

\paragraph{The flip law and the two components on label-free features.} Table~\ref{tab:cost_score_ablation} repeats the sweep of Figure~\ref{fig:flip} on CLIP features for the four combinations of label cost and score. The noisy-group cost flips with either score: with the calibrated potential (TW-LAVA) the AUC rises from $0.34$ at $\eta=2\%$ through chance at $\eta=1/M$ to $0.77$ at $25\%$, and with the margin from $0.04$ to $0.98$. The complementary label cost does not flip with either score; the calibrated potential under it stays at $0.97$ to $0.98$ and the margin at $0.994$ to $0.995$, the difference being the typicality term $T_i$ that the margin cancels (Theorem~\ref{thm:margin}(i)). The cost therefore removes the flip and the margin adds the last points; CIFAR-20 behaves the same way with a crossover at $5\%$.

\begin{table}[htb]
\centering
\caption{Detection AUC of the four combinations of label cost and score on CLIP features (seed $1126$, $N_v=2{,}000$) against the noise rate; the crossover is $\eta=1/M$. The supervised embedder gives the same pattern (Figure~\ref{fig:flip}).}
\label{tab:cost_score_ablation}
\small
\setlength{\tabcolsep}{4pt}
\begin{tabular}{@{}llcccc@{}}
\toprule
& & \multicolumn{2}{c}{\textbf{Noisy-group cost}} & \multicolumn{2}{c}{\textbf{Complementary cost}} \\
\cmidrule(lr){3-4} \cmidrule(l){5-6}
\textbf{Dataset} & $\eta$ & calibrated (TW-LAVA) & margin & calibrated & margin (CAVA) \\
\midrule
\multirow{5}{*}{CIFAR-10} & 2\% & 0.342 & 0.041 & 0.983 & 0.995 \\
& 5\% & 0.400 & 0.063 & 0.981 & 0.995 \\
& 10\% & 0.503 & 0.508 & 0.979 & 0.994 \\
& 15\% & 0.610 & 0.952 & 0.977 & 0.994 \\
& 25\% & 0.773 & 0.982 & 0.974 & 0.994 \\
\midrule
\multirow{4}{*}{CIFAR-20} & 2.5\% & 0.457 & 0.250 & 0.946 & 0.974 \\
& 5\% & 0.489 & 0.488 & 0.940 & 0.972 \\
& 15\% & 0.613 & 0.926 & 0.939 & 0.974 \\
& 25\% & 0.711 & 0.946 & 0.933 & 0.971 \\
\bottomrule
\end{tabular}
\end{table}

\paragraph{Estimating the precision without true labels.} Theorem~\ref{thm:detect}(ii) needs the error $e$ of the posterior classifier and the noise rate $\eta$. On CLIP features we estimate $e$ by leave-one-out prediction of $h$ on the validation set and $\eta$ from the flagged fraction (Appendix~\ref{app:refmargin_proofs}). Table~\ref{tab:precision_estimate} compares the estimate with the precision measured post hoc. On the synthetic benchmarks, which satisfy Assumption~\ref{ass:noise_model}, the estimate is within $1.5$ points. On the human-annotated benchmarks it underestimates the precision by $15$ to $57$ points, because their valid labels are not uniform and their errors concentrate on ambiguous images, so the uniform model overstates how often a valid sample is flagged; the error is on the safe side for gating recycling. The estimate is not available for the validation-only embedder, which fits the validation images and makes the leave-one-out error vanish.

\begin{table}[htb]
\centering
\caption{Estimating the precision of the adaptive cut without true labels (CLIP features, seed $1126$): leave-one-out error $\hat e$ of the posterior classifier, estimated noise rate $\hat\eta$, and estimated precision $\hat\pi$ (Eq.~\eqref{eq:detect_precision}), next to the measured precision and recall.}
\label{tab:precision_estimate}
\small
\setlength{\tabcolsep}{5pt}
\begin{tabular}{@{}lccccccc@{}}
\toprule
\textbf{Dataset} & $\hat e$ & $\hat\eta$ & $\eta$ & $\hat\pi$ (\%) & measured (\%) & recall (\%) \\
\midrule
MNIST & 0.093 & 0.244 & 0.250 & 96.6 & 96.7 & 88.5 \\
CIFAR-10 & 0.107 & 0.245 & 0.250 & 96.1 & 96.6 & 88.1 \\
CIFAR-20 & 0.304 & 0.234 & 0.250 & 93.0 & 94.5 & 66.3 \\
CLCIFAR-10 & 0.104 & 0.032 & 0.040 & 71.7 & 87.1 & 86.4 \\
CLCIFAR-20 & 0.285 & 0.005 & 0.023 & 18.6 & 75.4 & 59.6 \\
CLMIN-10 & 0.248 & 0.026 & 0.048 & 42.4 & 73.0 & 70.8 \\
CLMIN-20 & 0.349 & 0.014 & 0.033 & 33.8 & 66.7 & 55.5 \\
\bottomrule
\end{tabular}
\end{table}

\paragraph{Training on label-free features.} Table~\ref{tab:representations_train} reports Table~\ref{tab:main_results} on CLIP features and on the validation-only features, with three seeds each. On both representations and with both objectives, CAVA is above TW-LAVA on all seven datasets; with recycling on the validation-only features it reaches $75.22\%$ on CIFAR-10 and $49.21\%$ on CIFAR-20, against $50.61\%$ and $22.27\%$ for TW-LAVA.
\begin{table*}[htb]
\centering
\caption{Test accuracy (\%) of Table~\ref{tab:main_results} on CLIP features and on the validation-only features (mean $\pm$ std over seeds 1126, 42, 2023). Best per column, block and representation in \textbf{bold}.}
\label{tab:representations_train}
\small
\setlength{\tabcolsep}{2.9pt}
\begin{tabular}{@{}lll ccc cccc@{}}
\toprule
& & & MNIST & CIFAR-10 & CIFAR-20 & CLCIFAR-10 & CLCIFAR-20 & CLMIN-10 & CLMIN-20 \\
\midrule
\multicolumn{10}{@{}l}{\textit{(a) Selection only: SCL-NL on the retained pool}} \\
\midrule
CLIP & TW-LAVA & Fixed $p$ & 26.25{\scriptsize$\pm$1.31} & 8.41{\scriptsize$\pm$1.86} & 3.10{\scriptsize$\pm$0.31} & 42.79{\scriptsize$\pm$1.79} & 7.69{\scriptsize$\pm$0.08} & 15.22{\scriptsize$\pm$2.17} & 4.50{\scriptsize$\pm$0.17} \\
CLIP & CAVA (ours) & Fixed $p$ & \textbf{90.59}{\scriptsize$\pm$0.21} & \textbf{74.11}{\scriptsize$\pm$0.32} & \textbf{12.05}{\scriptsize$\pm$1.05} & \textbf{79.06}{\scriptsize$\pm$0.67} & \textbf{10.64}{\scriptsize$\pm$0.89} & \textbf{27.11}{\scriptsize$\pm$1.90} & \textbf{10.66}{\scriptsize$\pm$0.76} \\
CLIP & CAVA (ours) & Adaptive cut & 84.58{\scriptsize$\pm$0.35} & 64.19{\scriptsize$\pm$3.36} & 7.01{\scriptsize$\pm$1.74} & 77.94{\scriptsize$\pm$0.43} & 8.48{\scriptsize$\pm$0.11} & 25.11{\scriptsize$\pm$4.43} & 7.83{\scriptsize$\pm$2.92} \\
validation only & TW-LAVA & Fixed $p$ & 97.91{\scriptsize$\pm$0.10} & 29.90{\scriptsize$\pm$2.29} & 2.71{\scriptsize$\pm$0.12} & 43.43{\scriptsize$\pm$0.72} & 7.68{\scriptsize$\pm$0.15} & 16.22{\scriptsize$\pm$3.02} & 4.39{\scriptsize$\pm$0.59} \\
validation only & CAVA (ours) & Fixed $p$ & \textbf{98.83}{\scriptsize$\pm$0.06} & \textbf{66.25}{\scriptsize$\pm$0.31} & \textbf{18.57}{\scriptsize$\pm$1.40} & 73.33{\scriptsize$\pm$0.08} & \textbf{10.41}{\scriptsize$\pm$1.77} & 21.56{\scriptsize$\pm$2.52} & \textbf{12.83}{\scriptsize$\pm$0.67} \\
validation only & CAVA (ours) & Adaptive cut & 98.74{\scriptsize$\pm$0.09} & 56.61{\scriptsize$\pm$0.76} & 4.18{\scriptsize$\pm$2.27} & \textbf{73.90}{\scriptsize$\pm$0.34} & 8.57{\scriptsize$\pm$0.34} & \textbf{22.78}{\scriptsize$\pm$3.56} & 9.78{\scriptsize$\pm$0.51} \\
\midrule
\multicolumn{10}{@{}l}{\textit{(b) Selection with recycling: CE+SCL-NL}} \\
\midrule
CLIP & TW-LAVA & Fixed $p$ & 56.42{\scriptsize$\pm$2.59} & 46.68{\scriptsize$\pm$0.58} & 28.85{\scriptsize$\pm$0.96} & 28.18{\scriptsize$\pm$2.10} & 5.24{\scriptsize$\pm$0.29} & 16.11{\scriptsize$\pm$1.65} & 6.45{\scriptsize$\pm$1.11} \\
CLIP & CAVA (ours) & Fixed $p$ & \textbf{94.93}{\scriptsize$\pm$0.13} & \textbf{87.91}{\scriptsize$\pm$0.55} & \textbf{67.37}{\scriptsize$\pm$0.24} & 80.59{\scriptsize$\pm$0.23} & \textbf{42.12}{\scriptsize$\pm$0.77} & \textbf{50.44}{\scriptsize$\pm$1.26} & \textbf{29.55}{\scriptsize$\pm$0.86} \\
CLIP & CAVA (ours) & Adaptive cut & 94.31{\scriptsize$\pm$0.03} & 87.24{\scriptsize$\pm$0.28} & 62.67{\scriptsize$\pm$0.34} & \textbf{81.84}{\scriptsize$\pm$0.29} & 41.66{\scriptsize$\pm$0.90} & 49.89{\scriptsize$\pm$0.70} & 29.39{\scriptsize$\pm$0.26} \\
validation only & TW-LAVA & Fixed $p$ & 97.78{\scriptsize$\pm$0.47} & 50.61{\scriptsize$\pm$1.32} & 22.27{\scriptsize$\pm$0.56} & 34.47{\scriptsize$\pm$1.22} & 5.85{\scriptsize$\pm$0.60} & 19.11{\scriptsize$\pm$1.17} & 4.78{\scriptsize$\pm$1.77} \\
validation only & CAVA (ours) & Fixed $p$ & 99.29{\scriptsize$\pm$0.05} & \textbf{75.22}{\scriptsize$\pm$0.67} & \textbf{49.21}{\scriptsize$\pm$0.85} & \textbf{72.53}{\scriptsize$\pm$0.14} & \textbf{35.07}{\scriptsize$\pm$0.47} & 40.11{\scriptsize$\pm$1.07} & 22.22{\scriptsize$\pm$1.27} \\
validation only & CAVA (ours) & Adaptive cut & \textbf{99.33}{\scriptsize$\pm$0.05} & 73.50{\scriptsize$\pm$0.31} & 41.81{\scriptsize$\pm$0.48} & 72.46{\scriptsize$\pm$0.13} & 34.06{\scriptsize$\pm$0.32} & \textbf{43.89}{\scriptsize$\pm$0.70} & \textbf{22.55}{\scriptsize$\pm$0.95} \\
\bottomrule
\end{tabular}
\end{table*}

\paragraph{Training without the transport solve.} Table~\ref{tab:repr_train_detectors} trains on the selections of Table~\ref{tab:representations} with CLIP features. The accuracies follow the detection results: with recycling, CAVA without the OT solve ($w=0$) is within about one point of CAVA on every dataset except CLCIFAR-20 ($3.0$ points), and both are far above TW-LAVA. Table~\ref{tab:repr_train_components} separates the two components of CAVA in training. With the noisy-group label cost, the margin helps on CIFAR-10, above the crossover, but not on CLCIFAR-10, below it ($41.21\%$ and $31.08\%$ against $79.06\%$ and $80.59\%$ for CAVA). The complementary label cost alone removes the reversal on CLCIFAR-10 ($71.42\%$ and $73.26\%$), and the margin adds $7.1$ to $15.3$ points on top of it. With fewer validation images (Table~\ref{tab:repr_train_nv}), CAVA reaches $82.92\%$ and $84.36\%$ at $N_v=50$ and $100$, against $87.91\%$ at $N_v=2{,}000$.

\begin{table*}[htb]
\centering
\caption{Training on the selections of Table~\ref{tab:representations} with CLIP features (cuts of Table~\ref{tab:main_results}; mean $\pm$ std over three seeds). ``$w=0$'' is the CAVA margin without the validation potential. Best per column and block in \textbf{bold}.}
\label{tab:repr_train_detectors}
\small
\setlength{\tabcolsep}{3.5pt}
\begin{tabular}{@{}ll ccccc@{}}
\toprule
\textbf{Selection} & & CIFAR-10 & CIFAR-20 & CLCIFAR-10 & CLCIFAR-20 & CLMIN-10 \\
\midrule
\multicolumn{7}{@{}l}{\textit{(a) Selection only: SCL-NL on the retained pool}} \\
\midrule
TW-LAVA & Fixed $p$ & 8.41{\scriptsize$\pm$1.86} & 3.10{\scriptsize$\pm$0.31} & 42.79{\scriptsize$\pm$1.79} & 7.69{\scriptsize$\pm$0.08} & 15.22{\scriptsize$\pm$2.17} \\
\cellcolor{green!12}CAVA (ours) & \cellcolor{green!12}Fixed $p$ & \cellcolor{green!12}\textbf{74.11}{\scriptsize$\pm$0.32} & \cellcolor{green!12}12.05{\scriptsize$\pm$1.05} & \cellcolor{green!12}\textbf{79.06}{\scriptsize$\pm$0.67} & \cellcolor{green!12}\textbf{10.64}{\scriptsize$\pm$0.89} & \cellcolor{green!12}27.11{\scriptsize$\pm$1.90} \\
\cellcolor{green!12}CAVA (ours) & \cellcolor{green!12}Adaptive cut & \cellcolor{green!12}64.19{\scriptsize$\pm$3.36} & \cellcolor{green!12}7.01{\scriptsize$\pm$1.74} & \cellcolor{green!12}77.94{\scriptsize$\pm$0.43} & \cellcolor{green!12}8.48{\scriptsize$\pm$0.11} & \cellcolor{green!12}25.11{\scriptsize$\pm$4.43} \\
CAVA, $w=0$ & Fixed $p$ & 73.20{\scriptsize$\pm$0.06} & \textbf{13.98}{\scriptsize$\pm$0.81} & 78.27{\scriptsize$\pm$0.37} & 8.75{\scriptsize$\pm$0.16} & 25.44{\scriptsize$\pm$2.14} \\
CAVA, $w=0$ & Adaptive cut & 64.29{\scriptsize$\pm$0.65} & 8.77{\scriptsize$\pm$0.51} & 76.96{\scriptsize$\pm$0.88} & 9.39{\scriptsize$\pm$1.01} & \textbf{29.00}{\scriptsize$\pm$1.76} \\
\midrule
\multicolumn{7}{@{}l}{\textit{(b) Selection with recycling: CE+SCL-NL}} \\
\midrule
TW-LAVA & Fixed $p$ & 46.68{\scriptsize$\pm$0.58} & 28.85{\scriptsize$\pm$0.96} & 28.18{\scriptsize$\pm$2.10} & 5.24{\scriptsize$\pm$0.29} & 16.11{\scriptsize$\pm$1.65} \\
\cellcolor{green!12}CAVA (ours) & \cellcolor{green!12}Fixed $p$ & \cellcolor{green!12}\textbf{87.91}{\scriptsize$\pm$0.55} & \cellcolor{green!12}\textbf{67.37}{\scriptsize$\pm$0.24} & \cellcolor{green!12}80.59{\scriptsize$\pm$0.23} & \cellcolor{green!12}\textbf{42.12}{\scriptsize$\pm$0.77} & \cellcolor{green!12}50.44{\scriptsize$\pm$1.26} \\
\cellcolor{green!12}CAVA (ours) & \cellcolor{green!12}Adaptive cut & \cellcolor{green!12}87.24{\scriptsize$\pm$0.28} & \cellcolor{green!12}62.67{\scriptsize$\pm$0.34} & \cellcolor{green!12}\textbf{81.84}{\scriptsize$\pm$0.29} & \cellcolor{green!12}41.66{\scriptsize$\pm$0.90} & \cellcolor{green!12}49.89{\scriptsize$\pm$0.70} \\
CAVA, $w=0$ & Fixed $p$ & 87.38{\scriptsize$\pm$0.44} & 66.71{\scriptsize$\pm$0.28} & 80.06{\scriptsize$\pm$0.20} & 39.08{\scriptsize$\pm$0.86} & \textbf{51.56}{\scriptsize$\pm$1.17} \\
CAVA, $w=0$ & Adaptive cut & 87.01{\scriptsize$\pm$0.38} & 63.15{\scriptsize$\pm$0.61} & 80.99{\scriptsize$\pm$0.13} & 40.58{\scriptsize$\pm$0.63} & 50.00{\scriptsize$\pm$0.88} \\
\bottomrule
\end{tabular}
\end{table*}

\begin{table}[htb]
\centering
\caption{The two components of CAVA in training (CLIP features, fixed cut; mean $\pm$ std over three seeds). Table~\ref{tab:cost_score_ablation} gives the detection AUC of the same four combinations on CIFAR-10. Best per column in \textbf{bold}.}
\label{tab:repr_train_components}
\small
\setlength{\tabcolsep}{3.5pt}
\begin{tabular}{@{}ll cc cc@{}}
\toprule
& & \multicolumn{2}{c}{CIFAR-10 ($\eta=25\%$)} & \multicolumn{2}{c}{CLCIFAR-10 ($\eta=3.98\%$)} \\
\cmidrule(lr){3-4} \cmidrule(l){5-6}
\textbf{Label cost} & \textbf{Score} & SCL-NL & CE+SCL-NL & SCL-NL & CE+SCL-NL \\
\midrule
noisy-group & calibrated (TW-LAVA) & 8.41{\scriptsize$\pm$1.86} & 46.68{\scriptsize$\pm$0.58} & 42.79{\scriptsize$\pm$1.79} & 28.18{\scriptsize$\pm$2.10} \\
noisy-group & margin & 67.57{\scriptsize$\pm$1.31} & 85.46{\scriptsize$\pm$0.08} & 41.21{\scriptsize$\pm$4.00} & 31.08{\scriptsize$\pm$1.89} \\
complementary & calibrated & 58.79{\scriptsize$\pm$2.12} & 80.81{\scriptsize$\pm$0.32} & 71.42{\scriptsize$\pm$0.16} & 73.26{\scriptsize$\pm$0.49} \\
\cellcolor{green!12}complementary & \cellcolor{green!12}margin (CAVA) & \cellcolor{green!12}\textbf{74.11}{\scriptsize$\pm$0.32} & \cellcolor{green!12}\textbf{87.91}{\scriptsize$\pm$0.55} & \cellcolor{green!12}\textbf{79.06}{\scriptsize$\pm$0.67} & \cellcolor{green!12}\textbf{80.59}{\scriptsize$\pm$0.23} \\
\bottomrule
\end{tabular}
\end{table}

\begin{table}[htb]
\centering
\caption{Training with small validation sets (CLIP features, CIFAR-10, CE+SCL-NL, fixed cut $p=25\%$, mean $\pm$ std over three seeds, final-epoch accuracy on the test images outside the validation set). Best per column in \textbf{bold}.}
\label{tab:repr_train_nv}
\small
\setlength{\tabcolsep}{3.5pt}
\begin{tabular}{@{}l ccccc@{}}
\toprule
$N_v$ & 50 & 100 & 200 & 500 & 2{,}000 \\
\midrule
\cellcolor{green!12}CAVA (ours) & \cellcolor{green!12}\textbf{82.92}{\scriptsize$\pm$0.17} & \cellcolor{green!12}\textbf{84.36}{\scriptsize$\pm$0.06} & \cellcolor{green!12}\textbf{85.47}{\scriptsize$\pm$0.10} & \cellcolor{green!12}\textbf{86.61}{\scriptsize$\pm$0.15} & \cellcolor{green!12}\textbf{87.91}{\scriptsize$\pm$0.55} \\
CAVA, $w=0$ & 82.76{\scriptsize$\pm$0.21} & 83.74{\scriptsize$\pm$0.11} & 85.37{\scriptsize$\pm$0.41} & 86.08{\scriptsize$\pm$0.34} & 87.38{\scriptsize$\pm$0.44} \\
\bottomrule
\end{tabular}
\end{table}

\section{Additional Results}
\label{app:additional_results}

\subsection{Synthetic Benchmarks: Branch Composition and Complementary-Label Bias}
\label{sec:ablation-mnist}
\label{sec:ablation-cifar20}
This subsection analyzes the fixed cuts on MNIST, CIFAR-10, and CIFAR-20 with $\eta=25\%$ (AutoAugment). For every cut, Tables~\ref{tab:mnist-ablation}, \ref{tab:detector-compare}, and~\ref{tab:cifar20-ablation} report the composition of the two branches of Eq.~\eqref{eq:cava_mixed_objective}, measured by CE Contam.\ and CLL Contam.\ (Table~\ref{tab:branch_metrics}), and the test accuracy with and without recycling. CAVA rows follow the protocol of the main paper; TW-LAVA rows use the original TW-LAVA protocol with $10{,}000$ validation images.

\paragraph{Pruning is a prerequisite for complementary losses.}
Without selection, SCL-NL collapses to $1.66\%$ on MNIST, $5.10\%$ on CIFAR-10, and $2.25\%$ on CIFAR-20 (Table~\ref{tab:main_results}): the $25\%$ diagonal-noise samples teach ``not the true class'', which a loss that treats every label as an exclusion cannot absorb. Every pruned configuration improves on this baseline by an order of magnitude.

\paragraph{CAVA cuts at the noise rate.}
Because the label margin ranks the noise at the low end of every task (Figure~\ref{fig:ot-scores}), the cut $p=\eta$ gives an almost oracle split on all three datasets: the recycled branch contains $0.70\%$, $0.54\%$, and $4.44\%$ valid labels on MNIST, CIFAR-10, and CIFAR-20, the retained pool at most $1.48\%$ diagonal noise, and $p=\eta$ is the best cut for the recycling objective CE+SCL-NL (Table~\ref{tab:detector-compare}). The adaptive cut flags nearly the same set without knowing $\eta$ (last CAVA row of Tables~\ref{tab:mnist-ablation} and~\ref{tab:cifar20-ablation}). TW-LAVA degrades with its detector: as its AUC falls from $0.9994$ to $0.9833$ and $0.9087$, the valid labels in its recycled branch at $p=\eta$ rise from $1.24\%$ to $9.84\%$ and $24.34\%$, which is why its best cut moves away from $\eta$.

\paragraph{Cutting beyond the noise rate biases the complementary labels.}
A cut larger than $\eta$ can only move valid labels into the CE branch, close to the floor $(p-\eta)/p$ of $28.6\%$ and $44.4\%$ at $p=35\%$ and $45\%$. It also changes the labels that remain: the lowest-ranked valid samples are those whose given label is a plausible class for the image, so over-pruning removes confusable pairs such as $4{\leftrightarrow}9$ and $1{\leftrightarrow}7$ and leaves a transition matrix with near-empty cells ($14/90$ cells at $p=45\%$ on MNIST versus $0/90$ at $p=25\%$, and $38/380$ versus $5/380$ on CIFAR-20; Figures~\ref{fig:cll-bias} and~\ref{fig:cll-bias-cifar20}). SCL-NL assumes uniform transitions and degrades accordingly (CAVA: $99.35\%\to79.50\%$ on MNIST and $81.67\%\to55.13\%$ on CIFAR-10 from $p=25\%$ to $45\%$), whereas FWD, which estimates the transition matrix explicitly, is affected less. With TW-LAVA on CIFAR-20 the ordering is reversed, because the contamination left by its weaker detector outweighs this bias, so larger cuts help its selection-only training.

\begin{table}[htb]
\centering
\caption{Detector quality and branch composition across datasets under the identical synthetic protocol ($\eta=25\%$, AutoAug, combined CE+SCL-NL loss), for the baseline TW-LAVA and for CAVA with the same feature extractor. The detection AUC is $0.9994$, $0.9833$, $0.9087$ for TW-LAVA and $1.0000$, $1.0000$, $0.9971$ for CAVA on MNIST, CIFAR-10, CIFAR-20. CE Contam. denotes the fraction of the recycled CE branch that consists of valid complementary samples $(\bar y_i\neq y_i)$, which become false-positive ordinary labels when recycled. CLL Contam. denotes the fraction of the retained complementary-label pool that still consists of diagonal-noise samples $(\bar y_i=y_i)$. Accuracy denotes the CE+SCL-NL test accuracy: a single run (seed 1126) for TW-LAVA, which uses all 10{,}000 test images as the validation set, and the mean over seeds 1126, 42, and 2023 for CAVA, which uses a fixed validation set of 2{,}000 test images the selection statistics are from seed 1126. For $p>\eta$ the CE branch necessarily contains valid samples, so its contamination is at least $1-\eta/p$ for any detector.
}
\label{tab:detector-compare}
\footnotesize
\setlength{\tabcolsep}{1.8pt}
\begin{tabular}{ll|rrc|rrc|rrc}
\toprule
& & \multicolumn{3}{c|}{MNIST}
& \multicolumn{3}{c|}{CIFAR-10}
& \multicolumn{3}{c}{CIFAR-20} \\
\textbf{Method} & \textbf{Cut} & \textbf{CE Con.} & \textbf{CLL Con.} & \textbf{Acc.} & \textbf{CE Con.} & \textbf{CLL Con.} & \textbf{Acc.} & \textbf{CE Con.} & \textbf{CLL Con.} & \textbf{Acc.} \\
\midrule
TW-LAVA & $p{=}0.25$ & 1.24\% & 0.41\% & 99.53 & 9.84\% & 3.28\% & 86.34 & 24.34\% & 8.11\% & 64.10 \\
 & $p{=}0.35$ & 28.73\% & 0.09\% & 99.15 & 30.03\% & 0.78\% & 85.31 & 36.81\% & 4.44\% & 63.78 \\
 & $p{=}0.45$ & 44.54\% & 0.08\% & 99.20 & 44.99\% & 0.44\% & 84.13 & 47.70\% & 2.66\% & 61.87 \\
\midrule
CAVA (ours) & $p{=}0.25$ & 0.70\% & 0.23\% & \textbf{99.57} & 0.54\% & 0.18\% & \textbf{92.45} & 4.44\% & 1.48\% & \textbf{76.45} \\
 & $p{=}0.35$ & 28.57\% & 0.00\% & 98.47 & 28.57\% & 0.00\% & 85.18 & 28.69\% & 0.06\% & 70.44 \\
 & $p{=}0.45$ & 44.44\% & 0.00\% & 98.60 & 44.44\% & 0.00\% & 82.32 & 44.47\% & 0.02\% & 64.15 \\
\bottomrule
\end{tabular}
\end{table}

\begin{table}[htb]
\centering
\caption{
MNIST ablation under the synthetic diagonal-noise protocol with noise ratio $\eta=25\%$ and AutoAug. 
CLL Contam. denotes the fraction of the retained complementary-label pool that still consists of diagonal-noise samples $(\bar y_i=y_i)$. CE Contam. denotes the fraction of the recycled CE branch that consists of valid complementary samples $(\bar y_i\neq y_i)$, for which treating $\bar y_i$ as an ordinary label is incorrect. FWD uses the transition matrix computed from the synthetic corruption record, which includes the diagonal. The upper block is the baseline TW-LAVA and the lower block is CAVA with the same feature extractor. Static $p$ denotes the fixed cut. For CAVA, the adaptive cut $s_i\le0$ needs no ratio ($^\ast$: seed 1126). TW-LAVA entries are single runs (seed 1126) that use all 10{,}000 test images as the validation set; CAVA entries are means over seeds 1126, 42, and 2023 with a fixed validation set of 2{,}000 test images (largest standard deviation 7.40, FWD at $p=0.35$; at most 1.20 elsewhere except FWD at $p=0.45$, 4.55), and its selection statistics are from seed 1126;  the best accuracy in each column is in \textbf{bold}.
}
\label{tab:mnist-ablation}
\small
\begin{tabular}{l|ccc|c|cc}
\toprule
 & \multicolumn{3}{c|}{Branch Composition}  & {Combined}  & \multicolumn{2}{c}{CLL-only} \\
\textbf{Setting} & \textbf{CLL Size} & \textbf{CLL Contam.} & \textbf{CE Contam.} & \textbf{CE+SCL-NL} & \textbf{SCL-NL} & \textbf{FWD} \\
\midrule
\multicolumn{7}{l}{\emph{TW-LAVA (noisy-group label cost)}} \\
Static $p{=}0.25$     & 45{,}000     & 0.41\%     & 1.24\%      & 99.53     & 99.32     & 99.22 \\
Static $p{=}0.35$     & 39{,}000     & 0.09\%     & 28.73\%     & 99.15     & 96.39     & 98.14 \\
Static $p{=}0.45$     & 33{,}000     & 0.08\%     & 44.54\%     & 99.20     & 55.90     & 88.99 \\
\midrule
\multicolumn{7}{l}{\emph{CAVA (ours; complementary label cost, label margin)}} \\
Static $p{=}0.25$  & 45{,}000  & 0.23\%  & 0.70\%  & 99.57  & 99.35  & 99.41 \\
Static $p{=}0.35$  & 39{,}000  & 0.00\%  & 28.57\%  & 98.47  & 97.54  & 89.36 \\
Static $p{=}0.45$  & 33{,}000  & 0.00\%  & 44.44\%  & 98.60  & 79.50  & 93.33 \\
Adaptive cut $s_i\le0$  & 45{,}015  & 0.05\%  & 0.05\%  & \textbf{99.90}$^\ast$  & \textbf{99.71}$^\ast$  & \textbf{99.61}$^\ast$ \\
\bottomrule
\end{tabular}
\end{table}

\begin{table}[hth]
\centering
\caption{
CIFAR-20 ablation under noise ratio $\eta=25\%$ and AutoAug. Column semantics follow Table~\ref{tab:mnist-ablation}. CLL Contam. denotes the fraction of the retained complementary-label pool that still consists of diagonal-noise samples $(\bar y_i=y_i)$, while CE Contam. denotes the fraction of the recycled CE branch that consists of valid complementary samples $(\bar y_i\neq y_i)$. The corruption record is used only for post-hoc evaluation of branch composition and is never used by either method during OT valuation, thresholding, pruning, recycling, or training. The upper block is the baseline TW-LAVA and the lower block is CAVA with the same feature extractor. Static $p$ denotes the fixed cut. For CAVA, the adaptive cut $s_i\le0$ needs no ratio ($^\ast$: seed 1126). TW-LAVA entries are single runs (seed 1126) that use all 10{,}000 test images as the validation set; CAVA entries are means over seeds 1126, 42, and 2023 with a fixed validation set of 2{,}000 test images (largest standard deviation 2.36), and its selection statistics are from seed 1126;  the best accuracy in each column is in \textbf{bold}.
}
\label{tab:cifar20-ablation}
\small
\begin{tabular}{l|rrr|c|cc}
\toprule
 & \multicolumn{3}{c|}{Branch Composition}  & Combined  & \multicolumn{2}{c}{CLL-only} \\
\textbf{Setting} & \textbf{CLL Size} & \textbf{CLL Contam.} & \textbf{CE Contam.} & \textbf{CE+SCL-NL} & \textbf{SCL-NL} & \textbf{FWD} \\
\midrule
\multicolumn{7}{l}{\emph{TW-LAVA (noisy-group label cost)}} \\
Static $p{=}0.25$     & 37{,}500     & 8.11\%     & 24.34\%     & 64.10     & 12.74     & 15.53 \\
Static $p{=}0.35$     & 32{,}500     & 4.44\%     & 36.81\%     & 63.78     & 16.31     & 19.34 \\
Static $p{=}0.45$     & 27{,}500     & 2.66\%     & 47.70\%     & 61.87     & 17.77     & 27.16 \\
\midrule
\multicolumn{7}{l}{\emph{CAVA (ours; complementary label cost, label margin)}} \\
Static $p{=}0.25$  & 37{,}500  & 1.48\%  & 4.44\%  & \textbf{76.45}  & 17.23  & 18.34 \\
Static $p{=}0.35$  & 32{,}500  & 0.06\%  & 28.69\%  & 70.44  & \textbf{28.11}  & 35.57 \\
Static $p{=}0.45$  & 27{,}500  & 0.02\%  & 44.47\%  & 64.15  & 23.24  & \textbf{40.17} \\
Adaptive cut $s_i\le0$  & 38{,}975  & 4.12\%  & 1.19\%  & 73.63$^\ast$  & 14.70$^\ast$  & 15.82$^\ast$ \\
\bottomrule
\end{tabular}
\end{table}

\begin{figure}[htb]
\centering
\includegraphics{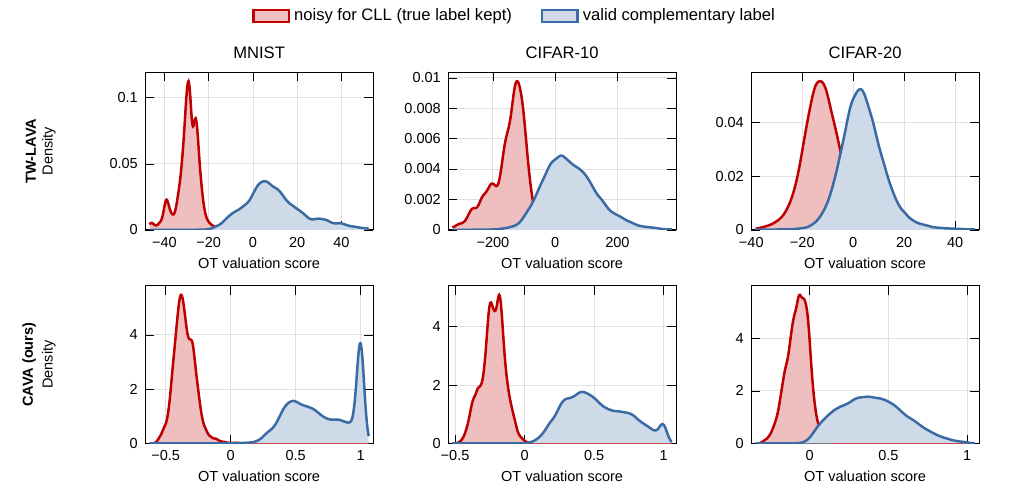}
\caption{OT valuation score distributions for diagonal-noise samples ($\bar y_i=y_i$) and valid complementary samples ($\bar y_i\neq y_i$) on MNIST, CIFAR-10, and CIFAR-20 under noise ratio $\eta=25\%$ and seed 1126, with the same feature extractor in both rows (validation set: 10{,}000 test images for TW-LAVA, a fixed subset of 2{,}000 for CAVA). \emph{Top:} TW-LAVA, the baseline with the noisy-group label cost. The two groups become less separable as the dataset becomes harder, with AUC decreasing from $0.9994$ on MNIST to $0.9833$ on CIFAR-10 and $0.9087$ on CIFAR-20, which explains the increasing branch contamination reported in Table~\ref{tab:detector-compare}. \emph{Bottom:} CAVA with the complementary label cost and the label margin. Diagonal noise stays on the low side and the separation is close to complete on all three datasets (AUC $1.000$, $1.000$, and $0.998$).
}
\label{fig:ot-scores}
\end{figure}

\begin{figure}[htb]
\centering
\includegraphics{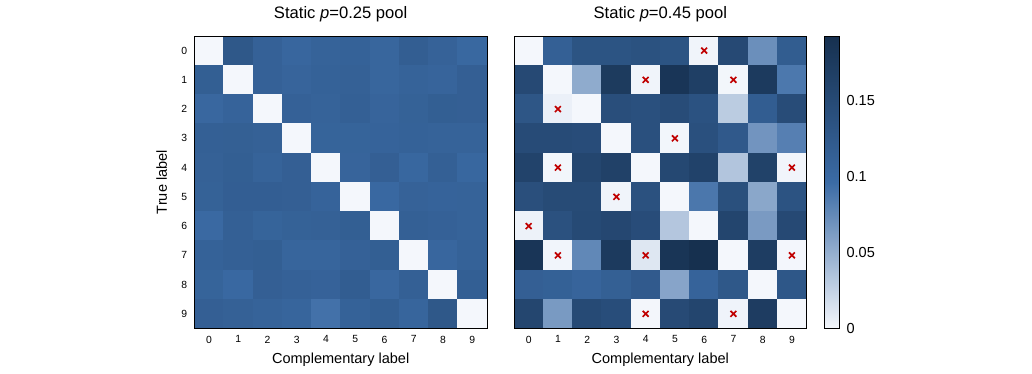}
\caption{Pruning-induced complementary-label bias (MNIST, $\eta = 25\%$; pools of the original TW-LAVA protocol). Row-normalized transition matrices $P(\bar{Y}\mid Y)$ of the CLL pool at $p{=}0.25$
(left, near-uniform) and $p{=}0.45$ (right); {\color{red}{$\times$}} marks near-empty transitions ($<2\%$ row mass). Aggressive pruning empties the transitions of visually confusable pairs, violating the uniform assumption of SCL-NL while the estimated $Q$ of FWD absorbs the bias.}
\label{fig:cll-bias}
\end{figure}

\begin{figure}[htb]
\centering
\includegraphics{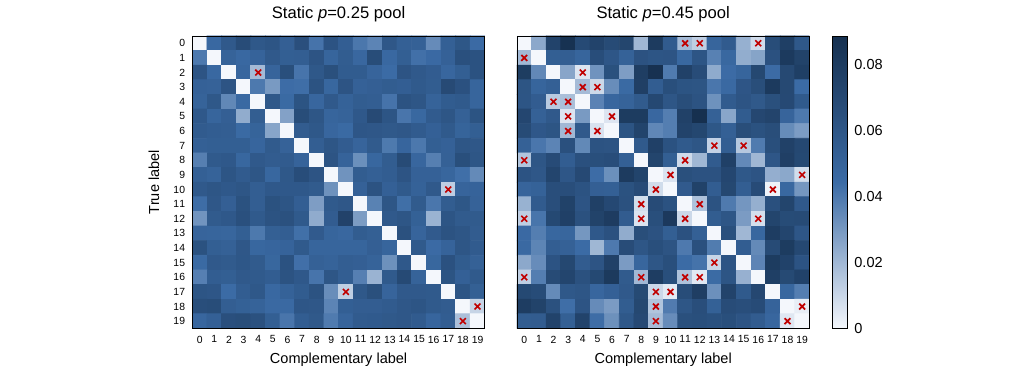}
\caption{Pruning-induced complementary-label bias on CIFAR-20 ($\eta=25\%$), as in Figure~\ref{fig:cll-bias}: row-normalized transitions $P(\bar{Y}\mid Y)$ of the CLL pool at $p{=}0.25$ (left) and $p{=}0.45$ (right); {\color{red}{$\times$}} marks near-empty transitions ($<2\%$ row mass; $5/380$ versus $38/380$ cells).}
\label{fig:cll-bias-cifar20}
\end{figure}

\subsection{Real-world Complementary Labels: CLCIFAR-10 and CLCIFAR-20}
\label{sec:ablation-clcifar}

CLCIFAR-10 and CLCIFAR-20 replace our synthetic corruption with \emph{human-annotated} complementary labels: each of the 50{,}000 training images carries one crowdsourced complementary label, of which $3.98\%$ (CLCIFAR-10; $1{,}990$ samples) and $2.32\%$ (CLCIFAR-20; $1{,}161$ samples) are annotation errors, in which the worker selected the \emph{true} class of the image as complementary. Two properties distinguish this regime from the synthetic sections. First, both error rates lie below $1/M$, so under the noisy-group label cost of TW-LAVA the effective label weight $\lambda_y\alpha(\eta)$ is negative (Theorem~\ref{thm:flip}) and annotation errors sit on the high-score side; the TW-LAVA rules therefore flag the high-score tail here, the reverse of the synthetic benchmarks, where diagonal noise concentrates at the validation-consistent low end. Under the label margin of CAVA the errors sit on the low side on every dataset (Theorem~\ref{thm:margin}), and the fixed cut flags the low tail as usual. Second, for TW-LAVA the errors are far harder to detect than synthetic flips (error-detection AUC $0.8790$ and $0.7243$ versus $0.9087$--$0.9994$ in the synthetic regimes), because human mistakes concentrate on genuinely ambiguous images, whereas the margin separates them almost completely (AUC $1.000$ and $0.999$, Figure~\ref{fig:realworld_compare}); the pruning ratios are matched to the small error rates ($p \in \{3\%, 5\%, 8\%, 10\%\}$).

For the real-world CLImage datasets, we report two CE-branch diagnostics.  Err. Prec. is the fraction of the recycled CE branch that consists of true annotation errors $(\bar y_i=y_i)$; it measures whether recycling is reliable.  Err. Rec. is the fraction of all annotation errors captured by the CE branch; it measures how much annotation noise is removed from the retained CLL pool. CLL Contam. is the fraction of annotation errors remaining in the retained complementary-label pool.

\begin{table*}[htb]
\centering
\caption{
CLCIFAR-10 ablation with real-world human complementary labels under AutoAug, for the baseline TW-LAVA and for CAVA with the same feature extractor. Static $p$ denotes the fixed cut; for CAVA, the adaptive cut $s_i\le0$ needs no ratio ($^\ast$: seed 1126). TW-LAVA rows use the original TW-LAVA protocol with $10{,}000$ validation images. Err. Prec. is the fraction of the recycled CE branch that consists of true annotation errors $(\bar y_i=y_i)$, while Err. Rec. is the fraction of all annotation errors captured by the CE branch. CLL Contam. is the fraction of annotation errors remaining in the retained complementary-label pool. The annotation record is used only for post-hoc evaluation of branch composition and, where indicated, to construct known-$Q$ FWD; neither method uses true labels for selection. TW-LAVA entries are single runs (seed 1126) that use all 10{,}000 test images as the validation set; CAVA entries are means over seeds 1126, 42, and 2023 with a fixed validation set of 2{,}000 test images (largest standard deviation 2.04), and its selection statistics are from seed 1126; the best accuracy in each column is in \textbf{bold}; ``--'': not applicable.
}
\label{tab:clcifar10-ablation}
\small
\setlength{\tabcolsep}{3.6pt}
\renewcommand{\arraystretch}{0.96}
\begin{tabular}{@{}lrrr|rr|cc|cc@{}}
\toprule
\multirow{2}{*}{Setting} 
& \multicolumn{3}{c}{CE branch} 
& \multicolumn{2}{c}{CLL pool} 
& \multicolumn{2}{c}{Combined} 
& \multicolumn{2}{c}{CLL-only} \\
\cmidrule(lr){2-4} \cmidrule(lr){5-6} \cmidrule(lr){7-8} \cmidrule(l){9-10}
& \textbf{Size} & \textbf{Err. Prec.} & \textbf{Err. Rec.} 
& \textbf{Size} & \textbf{Contam.} 
& \textbf{CE+SCL-NL} & \textbf{CE+FWD} 
& \textbf{SCL-NL} & \textbf{FWD} \\
\midrule
No pruning $(p=0)$   & --   & --   & --   & 50,000   & 3.98\%   & --   & --   & 50.68   & 51.76 \\
\midrule
\multicolumn{10}{@{}l}{\emph{TW-LAVA (noisy-group label cost)}} \\
Static $p=0.03$     & 1,500     & 32.1\%     & 24.2\%     & 48,500     & 3.11\%     & 46.78     & 46.19     & 53.25     & 59.38 \\
Static $p=0.05$     & 2,500     & 26.7\%     & 33.5\%     & 47,500     & 2.79\%     & 46.48     & 52.44     & 53.81     & 63.81 \\
Static $p=0.08$     & 4,000     & 21.7\%     & 43.7\%     & 46,000     & 2.44\%     & 43.65     & 46.78     & 50.68     & 67.29 \\
Static $p=0.10$     & 5,000     & 19.9\%     & 49.9\%     & 45,000     & 2.22\%     & 41.21     & 45.41     & 49.41     & 67.21 \\
\midrule
\multicolumn{10}{@{}l}{\emph{CAVA (ours; complementary label cost, label margin)}} \\
Static $p=0.03$  & 1,500  & 100.0\%  & 75.4\%  & 48,500  & 1.01\%  & 82.42  & 82.71  & 74.28  & 74.05 \\
Static $p=0.05$  & 2,500  & 79.6\%  & 100.0\%  & 47,500  & 0.00\%  & 83.46  & 83.20  & 80.45  & \textbf{75.99} \\
Static $p=0.08$  & 4,000  & 49.8\%  & 100.0\%  & 46,000  & 0.00\%  & 71.16  & 72.37  & 75.68  & 73.57 \\
Static $p=0.10$  & 5,000  & 39.8\%  & 100.0\%  & 45,000  & 0.00\%  & 63.54  & 65.46  & 65.76  & 67.14 \\
Adaptive cut $s_i\le0$  & 1,990  & 99.8\%  & 99.8\%  & 48,010  & 0.01\%  & \textbf{87.11}$^\ast$  & \textbf{85.25}$^\ast$  & \textbf{81.45}$^\ast$  & 75.10$^\ast$ \\
\bottomrule
\end{tabular}
\end{table*}


\begin{table*}[htb]
\centering
\caption{
CLCIFAR-20 ablation with real-world human complementary labels under AutoAug, for the baseline TW-LAVA and for CAVA with the same feature extractor. Static $p$ denotes the fixed cut; for CAVA, the adaptive cut $s_i\le0$ needs no ratio ($^\ast$: seed 1126). TW-LAVA rows use the original TW-LAVA protocol with $10{,}000$ validation images. Err. Prec. is the fraction of the recycled CE branch that consists of true annotation errors $(\bar y_i=y_i)$, while Err. Rec. is the fraction of all annotation errors captured by the CE branch. CLL Contam. is the fraction of annotation errors remaining in the retained complementary-label pool. The annotation record is used only for post-hoc evaluation of branch composition and, where indicated, to construct known-$Q$ FWD; neither method uses true labels for selection. TW-LAVA entries are single runs (seed 1126) that use all 10{,}000 test images as the validation set; CAVA entries are means over seeds 1126, 42, and 2023 with a fixed validation set of 2{,}000 test images (largest standard deviation 2.38), and its selection statistics are from seed 1126; the best accuracy in each column is in \textbf{bold}; ``--'': not applicable.
}
\label{tab:clcifar20-ablation}
\small
\setlength{\tabcolsep}{3.6pt}
\renewcommand{\arraystretch}{0.96}
\begin{tabular}{@{}lrrr|rr|cc|cc@{}}
\toprule
\multirow{2}{*}{Setting} 
& \multicolumn{3}{c}{CE branch} 
& \multicolumn{2}{c}{CLL pool} 
& \multicolumn{2}{c}{Combined} 
& \multicolumn{2}{c}{CLL-only} \\
\cmidrule(lr){2-4} \cmidrule(lr){5-6} \cmidrule(lr){7-8} \cmidrule(l){9-10}
& \textbf{Size} & \textbf{Err. Prec.} & \textbf{Err. Rec.} 
& \textbf{Size} & \textbf{Contam.} 
& \textbf{CE+SCL-NL} & \textbf{CE+FWD} 
& \textbf{SCL-NL} & \textbf{FWD} \\
\midrule
No pruning $(p=0)$   & --   & --   & --   & 50,000   & 2.32\%   & --   & --   & 8.98   & 23.73 \\
\midrule
\multicolumn{10}{@{}l}{\emph{TW-LAVA (noisy-group label cost)}} \\
Static $p=0.03$     & 1,500     & 7.8\%     & 10.1\%     & 48,500     & 2.15\%     & 9.77     & 16.75     & 8.98     & 25.15 \\
Static $p=0.05$     & 2,500     & 7.2\%     & 15.5\%     & 47,500     & 2.07\%     & 9.08     & 14.10     & 9.08     & 24.95 \\
Static $p=0.08$     & 4,000     & 7.7\%     & 26.5\%     & 46,000     & 1.85\%     & 10.30     & 16.02     & 9.38     & 25.44 \\
Static $p=0.10$     & 5,000     & 7.6\%     & 32.9\%     & 45,000     & 1.73\%     & 11.07     & 15.62     & 9.65     & 26.29 \\
\midrule
\multicolumn{10}{@{}l}{\emph{CAVA (ours; complementary label cost, label margin)}} \\
Static $p=0.03$  & 1,500  & 75.9\%  & 98.0\%  & 48,500  & 0.05\%  & 51.28  & \textbf{53.74}  & 12.11  & 32.88 \\
Static $p=0.05$  & 2,500  & 46.3\%  & 99.7\%  & 47,500  & 0.01\%  & 48.09  & 52.65  & 13.41  & 34.92 \\
Static $p=0.08$  & 4,000  & 29.0\%  & 99.9\%  & 46,000  & 0.00\%  & 44.50  & 48.88  & 14.95  & 37.71 \\
Static $p=0.10$  & 5,000  & 23.2\%  & 99.9\%  & 45,000  & 0.00\%  & 42.25  & 46.29  & \textbf{15.79}  & \textbf{41.78} \\
Adaptive cut $s_i\le0$  & 1,062  & 96.0\%  & 87.9\%  & 48,938  & 0.29\%  & \textbf{51.66}$^\ast$  & 53.56$^\ast$  & 11.72$^\ast$  & 30.54$^\ast$ \\
\bottomrule
\end{tabular}
\end{table*}

\paragraph{Detection quality drives everything downstream.}
Figure~\ref{fig:realworld_compare}(a) in the main text shows the score distributions. Under TW-LAVA, annotation errors ($\bar y_i=y_i$) lie on the \emph{high} side, the reverse of the synthetic setting (Figure~\ref{fig:ot-scores}), and overlap strongly with valid complementary labels (detection AUC $0.879$ on CLCIFAR-10 and $0.724$ on CLCIFAR-20); under CAVA they lie on the low side and are almost disjoint from the valid labels (AUC $1.000$ and $0.999$). For TW-LAVA, the two datasets place the detector at different operating points. On CLCIFAR-10, of the $1{,}500$ highest-scoring samples (the top $3\%$ of the OT ranking, i.e.\ the CE branch at $p{=}0.03$), $481$ samples ($32.1\%$) are actual annotation errors. Since errors make up only $3.98\%$ of the dataset, a random cut of the same size would contain ${\sim}60$; the detector concentrates them $8.1\times$ above the base rate, so even shallow static cuts remove a meaningful fraction of the errors (residual $3.98 \to 2.22\%$ at $p{=}0.10$) and every pruned FWD run beats the unpruned baseline ($51.76 \to 59.4$--$67.3$). On CLCIFAR-20 the detector is far weaker: the same $1{,}500$-sample cut contains only $117$ errors ($7.8\%$, a $3.4\times$ enrichment over the $2.32\%$ base rate), the residual barely moves ($2.32 \to 1.73\%$), and static pruning yields only $+1.4$ to $+2.6$ points over the baseline.

\paragraph{Recycling fails when detection precision is low.}
On synthetic data, the combined objective performs well because the recycled CE branch has sufficiently high precision. On real-world complementary labels, however, the base rate of annotation errors is only 2--4\%, so even a detector with useful recall can produce a low-precision recycled branch. Tables~\ref{tab:clcifar10-ablation} and~\ref{tab:clcifar20-ablation} show this clearly. For CLCIFAR-10, TW-LAVA at $p=10\%$ captures $49.9\%$ of all annotation errors, but only $19.9\%$ of its CE branch consists of true errors. For CLCIFAR-20 at the same cut, it captures $32.9\%$ of the errors with a CE precision of only $7.6\%$. Most of its recycled samples are thus valid complementary labels falsely treated as ordinary labels. Therefore, with the noisy-group score, real-world complementary labels call for discarding flagged samples rather than recycling them. The label margin changes this picture: at $p=3\%$ its recycled branch consists of $100\%$ true errors on CLCIFAR-10 and $75.9\%$ on CLCIFAR-20 (Figure~\ref{fig:realworld_compare}), which is why the recycling objectives gain the most in Table~\ref{tab:main_results}.

\paragraph{SCL-NL degrades with annotator bias.} 
The uniform-transition assumption of SCL-NL is violated here: the human complementary-label marginal is non-uniform: per-class counts range over $4{,}097$--$6{,}445$ (uniform $5{,}000$) on CLCIFAR-10 but $1{,}676$--$3{,}538$ (uniform $2{,}500$) on CLCIFAR-20. The milder CLCIFAR-10 bias leaves SCL-NL functional ($50.7\%$ unpruned, $53.8\%$ at the best static cut) while the stronger CLCIFAR-20 bias breaks it outright (${\sim}9\%$ at chance $5\%$, in every configuration). FWD, whose $Q$ models the biased transition explicitly, is unaffected by this violation and is the
only complementary loss that remains usable across both datasets.
\FloatBarrier

\subsection{Further Domains}
\label{app:newds}
Table~\ref{tab:newds} lists the results plotted in Figure~\ref{fig:newds}; the datasets are described in Appendix~\ref{clarifying_cifar10}.

\begin{table*}[htb]
\centering
\caption{Test accuracy (\%) on three further domains with synthetic diagonal noise (Figure~\ref{fig:newds}; datasets in Appendix~\ref{clarifying_cifar10}). The validation set holds $50$ test images per class ($200$ for Places365-10), and the fixed cut is $p=\eta$. Seed 1126; best per column and block in \textbf{bold}.}
\label{tab:newds}
\small
\setlength{\tabcolsep}{4.2pt}
\begin{tabular}{@{}ll ccc ccc ccc@{}}
\toprule
& & \multicolumn{3}{c}{\textbf{FOOD15}} & \multicolumn{3}{c}{\textbf{RESISC45-20}} & \multicolumn{3}{c}{\textbf{Places365-10}} \\
\cmidrule(lr){3-5} \cmidrule(lr){6-8} \cmidrule(l){9-11}
\textbf{Method} & \textbf{Selection} & $5\%$ & $15\%$ & $25\%$ & $5\%$ & $15\%$ & $25\%$ & $5\%$ & $15\%$ & $25\%$ \\
\midrule
\multicolumn{11}{@{}l}{\textit{(a) Selection only: SCL-NL on the retained pool}} \\
\midrule
No pruning & -- & 8.33 & 3.17 & 1.90 & 2.77 & 0.70 & 0.17 & 44.73 & 3.09 & 0.84 \\
TW-LAVA & Fixed $p$ & 7.60 & 7.20 & 13.00 & 3.23 & 13.57 & 24.33 & 48.39 & 46.04 & 56.94 \\
\rowcolor{green!12} CAVA (ours) & Fixed $p$ & \textbf{22.50} & \textbf{21.10} & \textbf{18.90} & 23.17 & 23.43 & \textbf{24.70} & \textbf{75.08} & \textbf{74.03} & \textbf{73.90} \\
\rowcolor{green!12} CAVA (ours) & Adaptive cut & 18.67 & 17.57 & 11.60 & \textbf{25.17} & \textbf{26.23} & 22.93 & 71.60 & 73.86 & 71.22 \\
\midrule
\multicolumn{11}{@{}l}{\textit{(b) Selection with recycling: CE+SCL-NL}} \\
\midrule
TW-LAVA & Fixed $p$ & 8.27 & 22.93 & 42.17 & 3.00 & 44.43 & 69.10 & 38.95 & 53.35 & 70.40 \\
\rowcolor{green!12} CAVA (ours) & Fixed $p$ & \textbf{47.33} & \textbf{60.43} & \textbf{63.83} & \textbf{71.93} & \textbf{82.93} & \textbf{87.30} & 80.39 & \textbf{82.16} & \textbf{84.14} \\
\rowcolor{green!12} CAVA (ours) & Adaptive cut & 44.17 & 58.20 & 60.20 & 68.70 & 82.83 & 86.03 & \textbf{80.49} & \textbf{82.16} & 83.45 \\
\bottomrule
\end{tabular}
\end{table*}

\subsection{Other Noise Rates}
\label{sec:noise_ratio_robustness}

The flip law makes the noise rate the decisive variable for TW-LAVA, so we vary it on CIFAR-10. Figure~\ref{fig:noise_ratio} reports CIFAR-10 results under $\eta\in\{0.15,0.35,0.45\}$ using AutoAugment, against the fixed cut $p$. 
Comparing the best cut of each method, CAVA is higher for both objectives at every noise ratio, by $2.5$ to $10.2$ points. At $\eta=35\%$ and $45\%$ the best cut of both methods is $p=\eta$; at $\eta=15\%$ it is $p=25\%$, larger than $\eta$. Table~\ref{tab:smallnoise} reports the complementary regime $\eta\le10\%$, where TW-LAVA is reversed or blind and CAVA is best at every noise ratio.

\begin{figure*}[htb]
\centering
\includegraphics{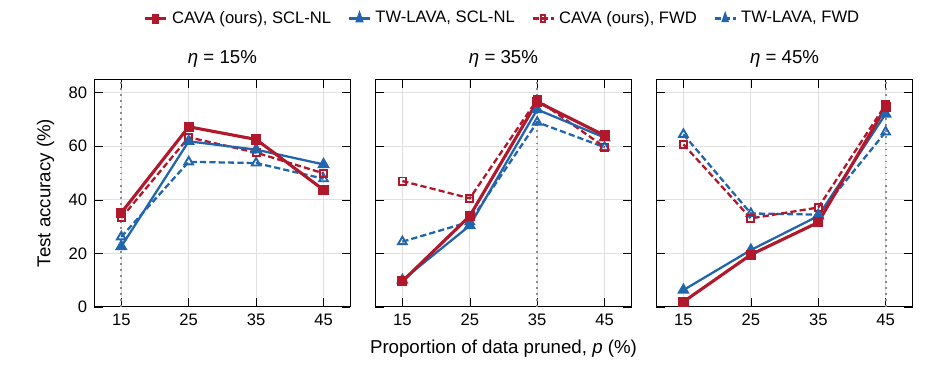}
\caption{\textbf{Other noise ratios.} CIFAR-10 test accuracy (\%) of TW-LAVA and CAVA against the fixed cut $p\in\{15\%,25\%,35\%,45\%\}$ at diagonal-noise ratios $\eta\in\{15\%,35\%,45\%\}$, above the crossover $1/M$, with the selection-only objective (SCL-NL solid, FWD dashed). Same feature extractor and validation set for both methods; AutoAugment; single runs (seed 1126) under the protocol of Table~\ref{tab:main_results}, accuracy after the final epoch on the held-out test images. The dotted line marks $p=\eta$.}
\label{fig:noise_ratio}
\end{figure*}

\begin{table*}[htb]
\centering
\caption{CIFAR-10 test accuracy (\%) at small noise rates, $\eta\le1/M=10\%$, where the label cost of TW-LAVA is reversed or blind (Theorem~\ref{thm:flip}). Both methods use the same fixed cuts; TW-LAVA keeps its original OT settings here (Appendix~\ref{sec:full_exp_details}). SCL-NL and No pruning: mean$\pm$std over three seeds; other objectives: seed 1126. Best per noise rate and objective in \textbf{bold}; ``--'': no recycled branch without pruning.}
\label{tab:smallnoise}
\small
\setlength{\tabcolsep}{4.2pt}
\begin{tabular}{@{}cc|cccc|cccc@{}}
\toprule
& & \multicolumn{4}{c|}{TW-LAVA (noisy-group label cost)} & \multicolumn{4}{c}{CAVA (ours, complementary label cost)} \\
$\eta$ & Cut & CE+SCL-NL & CE+FWD & SCL-NL & FWD & CE+SCL-NL & CE+FWD & SCL-NL & FWD \\
\midrule
\multirow{5}{*}{2\%} & none & -- & -- & \ms{68.53}{2.28} & 69.30 & -- & -- & \ms{68.53}{2.28} & 69.30 \\
& $p=2\%$ & 48.38 & 29.57 & \ms{67.22}{3.19} & 68.36 & \textbf{84.51} & \textbf{84.41} & \ms{82.26}{0.25} & \textbf{82.16} \\
& $p=5\%$ & 24.57 & 22.64 & \ms{62.38}{1.80} & 66.03 & 77.31 & 75.89 & \textbf{\ms{82.64}{0.11}} & 81.58 \\
& $p=10\%$ & 9.95 & 16.25 & \ms{46.66}{1.27} & 64.78 & 56.74 & 53.64 & \ms{70.78}{1.37} & 67.61 \\
& $p=15\%$ & 7.70 & 14.18 & \ms{11.25}{1.30} & 31.19 & 42.59 & 40.69 & \ms{47.84}{9.72} & 44.61 \\
\midrule
\multirow{5}{*}{5\%} & none & -- & -- & \ms{38.69}{0.85} & 38.94 & -- & -- & \ms{38.69}{0.85} & 38.94 \\
& $p=2\%$ & 18.71 & 2.84 & \ms{31.65}{1.67} & 33.51 & 66.12 & 56.27 & \ms{61.02}{1.02} & 55.23 \\
& $p=5\%$ & 6.33 & 1.86 & \ms{30.10}{1.38} & 31.91 & \textbf{86.46} & \textbf{86.25} & \textbf{\ms{82.15}{0.23}} & \textbf{82.05} \\
& $p=10\%$ & 2.95 & 2.51 & \ms{23.16}{3.82} & 37.92 & 72.53 & 70.53 & \ms{81.11}{0.19} & 71.50 \\
& $p=15\%$ & 3.41 & 2.59 & \ms{6.84}{0.72} & 18.15 & 58.26 & 54.84 & \ms{38.63}{5.02} & 54.52 \\
\midrule
\multirow{5}{*}{10\%} & none & -- & -- & \ms{7.85}{1.66} & 10.00 & -- & -- & \ms{7.85}{1.66} & 10.00 \\
& $p=2\%$ & 18.32 & 18.85 & \ms{11.20}{2.04} & 10.00 & 30.52 & 31.11 & \ms{25.39}{0.62} & 27.10 \\
& $p=5\%$ & 21.65 & 18.36 & \ms{10.72}{0.74} & 10.00 & 59.31 & 57.55 & \ms{47.27}{1.63} & 44.46 \\
& $p=10\%$ & 14.89 & 14.28 & \ms{12.42}{4.39} & 18.11 & \textbf{89.04} & \textbf{88.42} & \textbf{\ms{81.35}{0.48}} & \textbf{80.85} \\
& $p=15\%$ & 12.51 & 13.32 & \ms{11.40}{1.88} & 10.00 & 77.31 & 76.42 & \ms{36.52}{2.18} & 35.25 \\
\bottomrule
\end{tabular}
\end{table*}

\subsection{Other CLL Objectives}
\label{sec:cll_algorithms}
We examine whether the benefit of CAVA is tied to a specific CLL objective or carries over as a plug-in data-selection module. To isolate selection, recycling is disabled: each objective is trained only on the retained complementary-labeled pool (Eq.~\eqref{eq:cava_selection_objective}). We use two settings that cover both regimes of the flip law. On synthetic CIFAR-10 with $\eta=25\%$, above the crossover $1/M$, CAVA uses the adaptive cut $s_i\le0$, which needs no hyperparameter, and TW-LAVA the fixed cut $p=25\%$. On human-annotated CLCIFAR-10 with $\eta=3.98\%$, below the crossover, both TW-LAVA and CAVA use the fixed cut $p=5\%$, the smallest grid cut that covers the error rate. Both methods are scored with the same feature extractor and the same $2{,}000$ validation images, and No selection trains on the full pool.

\begin{figure*}[htb]
\centering
\includegraphics{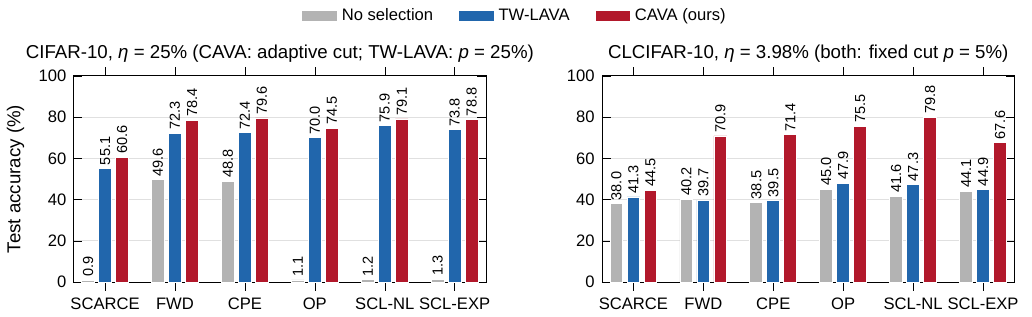}
\caption{\textbf{CAVA as a plug-in selection module across CLL objectives.} Test accuracy (\%) of six objectives without selection, with TW-LAVA, and with CAVA (selection only, no recycling; AutoAugment; seed 1126; final-epoch accuracy on the held-out test images, protocol of Table~\ref{tab:main_results}). All objectives use SGD as in the main experiments except OP, which uses Adam with learning rate $10^{-4}$ as in its official implementation. On synthetic CIFAR-10 ($\eta=25\%$), CAVA uses the adaptive cut and TW-LAVA, which has no adaptive cut, the fixed cut $p=25\%$; on human-annotated CLCIFAR-10 ($\eta=3.98\%$), both methods use the fixed cut $p=5\%$. TW-LAVA and CAVA use the same $2{,}000$ validation images.}
\label{fig:plugin}
\end{figure*}

Figure~\ref{fig:plugin} reports the test accuracy of each objective without selection, after TW-LAVA selection, and after CAVA selection. On CIFAR-10, every objective needs selection: without it, SCL-NL, SCL-EXP, OP, and SCARCE collapse ($1.18\%$, $1.26\%$, $1.06\%$, and $0.90\%$), and FWD and CPE reach only $49.64\%$ and $48.81\%$. With the adaptive cut, CAVA lifts SCL-NL, SCL-EXP, OP, and SCARCE to $79.09\%$, $78.81\%$, $74.54\%$, and $60.58\%$, and FWD and CPE to $78.56\%$ and $79.59\%$. TW-LAVA also helps here, where $\eta$ lies above the crossover and its score has the correct direction, but CAVA is higher for all six objectives by $3.2$ to $7.2$ points, without choosing a ratio. On CLCIFAR-10, below the crossover, TW-LAVA stays within $6$ points of No selection, whereas CAVA improves FWD, CPE, OP, SCL-NL, and SCL-EXP by $23.5$ to $38.2$ points and SCARCE by $6.5$ points.
Because recycling depends on detector precision, this analysis uses selection-only training; Appendix~\ref{combined_loss_function} studies when removed samples can be reused as ordinary-labeled data.

\subsection{Data Augmentation}
\label{app:augmentation}
AutoAugment is the default augmentation in all experiments. Figure~\ref{fig:aug} repeats the CIFAR-10 setting ($\eta=25\%$) with Cutout and RandAugment for TW-LAVA and for CAVA under the protocol of Table~\ref{tab:main_results}. The conclusions do not depend on the augmentation policy: among the fixed cuts both methods are best at $p=\eta$ for every objective and policy, and the adaptive cut of CAVA is within one point of that cut or above it; with recycling, CAVA reaches $91.4$ to $92.3\%$ under the three policies (best setting per policy), $3.3$ to $4.2$ points above the best TW-LAVA setting under the same policy and objective.

\begin{figure*}[htb]
\centering
\includegraphics{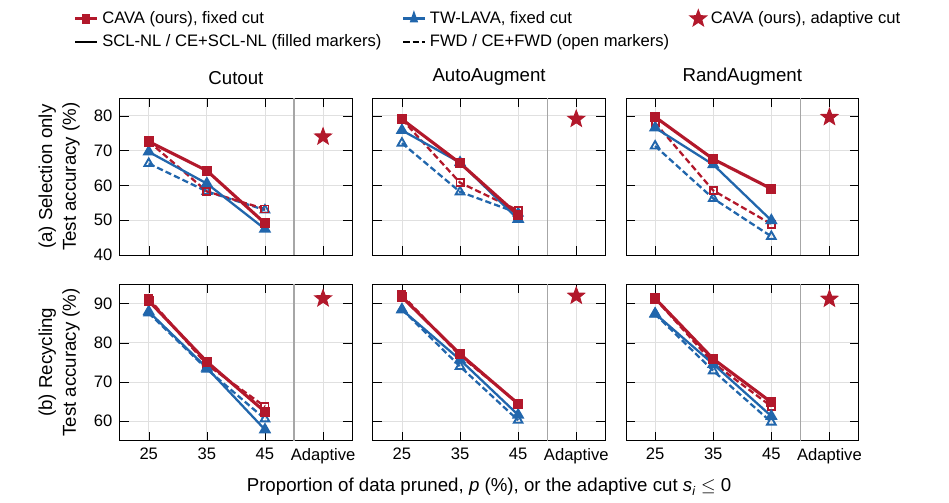}
\caption{\textbf{Data augmentation.} CIFAR-10 test accuracy (\%) at $\eta=25\%$ of TW-LAVA and CAVA against the fixed cut $p\in\{25\%,35\%,45\%\}$, and of CAVA with the adaptive cut $s_i\le0$ (stars right of the gray line, with SCL-NL and CE+SCL-NL), under Cutout, AutoAugment, and RandAugment. Row~(a) uses the selection-only objective in Eq.~\eqref{eq:cava_selection_objective} with SCL-NL (solid, filled markers) and FWD (dashed, open markers); row~(b) the recycling objective in Eq.~\eqref{eq:cava_mixed_objective} with CE+SCL-NL and CE+FWD. Same feature extractor, validation set and OT settings for both methods; single runs (seed 1126), accuracy after the final epoch on the held-out test images, protocol of Table~\ref{tab:main_results}.}
\label{fig:aug}
\end{figure*}

\subsection{Recycling and Branch Quality}
\label{combined_loss_function}
\label{post-hoc_analysis}
Recycling helps only when the flagged set is pure (Theorem~\ref{thm:recycle_truelabel}). This subsection measures that purity with the metrics of Table~\ref{tab:branch_metrics}.

\paragraph{CAVA.} Wherever the cut does not exceed the noise rate, the recycled branch of CAVA is nearly pure: it contains $0.54\%$ valid labels on CIFAR-10 at $p=\eta$ (Table~\ref{tab:detector-compare}), and the adaptive cut reaches $99.7\%$ and $96.0\%$ precision on CLCIFAR-10 and CLCIFAR-20 (Tables~\ref{tab:clcifar10-ablation} and~\ref{tab:clcifar20-ablation}). Recycling then turns the detected noise into supervision: on CIFAR-10 it raises SCL-NL from $79.26\%$ to $91.57\%$, and on CLCIFAR-20 from $10.78\%$ to $47.71\%$ with the fixed cut (Table~\ref{tab:main_results}).

\paragraph{TW-LAVA.} The baseline shows the failure mode that the bound predicts. On CIFAR-10 (Table~\ref{tab:branch_quality}), fixed cuts trade precision for recall: at $p=25\%$ the recycled branch is $95.4\%$ pure, and larger cuts recover more noise but recycle more valid labels, so accuracy falls from $88.46\%$ to $61.58\%$. On human labels its recycled branches contain only $9.3$ to $43.5\%$ true errors, and recycling lowers its accuracy on CLCIFAR-10 by $6$ to $13$ points; on CLCIFAR-20, where SCL-NL alone is near chance, it adds $3$ to $5$ points (Appendix~\ref{sec:ablation-clcifar}).

\paragraph{Pruning and recycling tolerate errors differently.} An extreme score is not proof of noise: atypical valid images, distribution shift between pool and validation set, or limited coverage of rare modes can also produce one. The two uses of a flagged sample are asymmetric. For pruning, a false positive costs one weak exclusion out of tens of thousands, whereas a retained diagonal label teaches a wrong exclusion, so pruning remains safe under imperfect detection. For recycling, a false positive injects a wrong ordinary label. Recycling is therefore an option gated by the measured precision, and selection only is the default.

\begin{table}[htb]
\centering
\caption{Branch quality of recycling with TW-LAVA on synthetic CIFAR-10 ($\eta=25\%$, CE+SCL-NL, seed 1126). The metrics of Table~\ref{tab:branch_metrics} use the true labels for evaluation only; the last column is the test accuracy.}
\label{tab:branch_quality}
\small
\begin{tabular}{l|cccccc|c}
\toprule
\textbf{Setting}
& \textbf{CE Precision}
& \textbf{CE Recall}
& \textbf{F1}
& $\mathbf{|I^{\mathrm{rec}}|}$
& \textbf{CE Contam.}
& \textbf{CLL Contam.}
& \textbf{Test Accuracy} \\
\midrule
Fixed cut $p=25\%$ 
& 95.37\% & 95.37\% & 95.37\% & 12,500 & 4.63\% & 1.54\% & 88.46\% \\
Fixed cut $p=35\%$ 
& 71.12\% & 99.57\% & 82.97\% & 17,500 & 28.88\% & 0.17\% & 75.67\% \\
Fixed cut $p=45\%$ 
& 55.46\% & 99.82\% & 71.30\% & 22,500 & 44.54\% & 0.08\% & 61.58\% \\
\bottomrule
\end{tabular}
\end{table}

\subsection{The Validation Set}
\label{app:reference_assumption}
This subsection examines the clean validation set from two sides: how much its labels are worth when spent on direct supervision instead, and how small it can be.

\paragraph{Valuation versus direct supervision.}

CAVA follows the standard protocol of OT-based data valuation~\citep{just2023lava,kesslersava}: a small clean validation set defines the target distribution against which training samples are valued. Every valuation baseline in our comparison (LAVA, SAVA) assumes the same validation set, so this assumption grants no advantage specific to CAVA. Moreover, the validation set enters only the OT scores; it is never used to train the classifier. Its cost is also one-time and independent of the pool size: the CIFAR-10-1M study in Appendix~\ref{large-scale_dataset} keeps the same $2{,}000$ validation examples while the training pool grows twentyfold.

A natural question remains: would the $2{,}000$ clean labels be more valuable if they were spent directly on supervised training rather than on valuation? To quantify this, we train a classifier with the standard CE loss on $n$ clean labels, stratified per class, for budgets $n\in\{500, 1{,}000, 2{,}000, 5{,}000, 10{,}000\}$ on CIFAR-10 and $n\in\{500, 1{,}000, 2{,}000, 4{,}000, 7{,}000\}$ on CIFAR-20. The protocol matches the main experiments exactly (ResNet-18, AutoAugment, SGD with the same schedule and weight decay, $200$ epochs, accuracy after the final epoch on the held-out test images, three runs per budget with seeds $1126$, $42$, and $2023$), except that the batch size is $\min(128, n)$ so that small budgets receive a comparable number of updates. This baseline faces no label noise at all, whereas CAVA operates under $25\%$ complementary-label noise.

\begin{table}[htb]
\centering
\caption{What $2{,}000$ clean labels are worth ($\eta=25\%$; mean $\pm$ std over three seeds). CE from scratch trains on these labels alone. CAVA uses them only as its validation set, together with the $50{,}000$ noisy complementary labels (CE+SCL-NL, $p=25\%$), on three fixed representations.}
\label{tab:reference_budget}
\small
\setlength{\tabcolsep}{5.0pt}
\begin{tabular}{@{}llcc@{}}
\toprule
\textbf{Representation} & \textbf{Method with $2{,}000$ clean labels} & CIFAR-10 & CIFAR-20 \\
\midrule
none & CE from scratch & \ms{71.52}{0.66} & \ms{43.08}{2.24} \\
\midrule
supervised embedder & CAVA (ours), recycling & \ms{91.57}{0.14} & \ms{75.34}{0.17} \\
CLIP ViT-B/32 & CAVA (ours), recycling & \ms{87.91}{0.55} & \ms{67.37}{0.24} \\
validation images only & CAVA (ours), recycling & \ms{75.22}{0.67} & \ms{49.21}{0.85} \\
\bottomrule
\end{tabular}
\end{table}

Table~\ref{tab:reference_budget} and Figure~\ref{fig:scale_budget}(b) give the answer. Against training from scratch, the validation set is worth more as an anchor: with $2{,}000$ labels direct supervision reaches $71.52\%$ on CIFAR-10 and $43.08\%$ on CIFAR-20, CAVA $91.57\%$ and $75.34\%$, and direct supervision does not reach this level even with $10{,}000$ and $7{,}000$ labels ($88.79\%$ and $68.09\%$). The table also checks that CAVA does not depend on one particular fixed representation. The supervised embedder of the main tables has itself been trained on the $50{,}000$ clean training labels (Appendix~\ref{sec:full_exp_details}); with CLIP features, which have seen no label of the pool, CAVA reaches $87.91\%$ and $67.37\%$, still far above training from scratch, and with the network trained on the $2{,}000$ validation images only, the weakest of the three representations, it reaches $75.22\%$ and $49.21\%$, still above training from scratch on the same labels (Appendix~\ref{app:representations}). Table~\ref{tab:nv_coverage} adds that, with the supervised embedder, CAVA keeps its level with as few as $200$ validation images; on label-free features the detection AUC on CIFAR-10 falls from $0.994$ at $N_v=2{,}000$ to $0.978$ at $N_v=50$ (Appendix~\ref{app:representations}).

\paragraph{Scope and fairness of the comparison.}
CAVA is a valuation and denoising method for CLL: the question it answers is which samples of a given noisy complementary-label pool should be kept, recycled, or removed. Every compared method answers this same question under the same validation set, so the comparison is information-equal across the table. Holding out a small clean set for scores, thresholds, and hyperparameter selection, without using it for training, is standard practice throughout the learning-with-noisy-labels literature~\citep{han2018co,li2020dividemix,frenay2013classification} and is inherited from the data valuation line~\citep{ghorbani2019data,just2023lava,kesslersava}; every baseline with tunable hyperparameters implicitly relies on such a set. Methods that instead consume the validation set as training data, whether by direct supervision or by semi-supervised learning over the pool images, solve a different problem and discard the complementary labels that are the object of study; the budget study above therefore measures a fixed clean-annotation budget rather than information equivalence, and Table~\ref{tab:reference_budget} bounds what direct supervised use achieves at the same budget. Such methods are pipeline alternatives rather than competing valuation methods: any consumer of the validation set can be composed with the CAVA output, which supplies a cleaned complementary-label pool and a recycled ordinary-label set on top, and the validation set itself can also be added to the CE branch of the recycling objective in Eq.~\eqref{eq:cava_mixed_objective}.

\paragraph{Validation-set size.}
We shrink the validation set on both synthetic benchmarks. For each $N_v\in\{200, 500, 1{,}000, 1{,}500, 2{,}000\}$ we draw a class-stratified subset of the test split ($N_v/10$ per class on CIFAR-10, $N_v/20$ per class on CIFAR-20, seed $1126$), recompute the task-wise OT scores against it, and rerun the full fixed-cut pipeline ($p=0.25$, combined CE+SCL-NL with $\kappa=0.5$, AutoAugment; three full-pipeline seeds per setting). Table~\ref{tab:nv_coverage} reports the detector AUC, the precision of the recycled branch, and the final test accuracy for TW-LAVA and CAVA. The two methods react differently. TW-LAVA degrades as the validation set shrinks: from $N_v=2{,}000$ to $200$ the precision of its recycled branch falls by $4$ points on CIFAR-10 and $6$ points on CIFAR-20, and its accuracy by $3.8$ and $6.7$ points. CAVA is almost insensitive to $N_v$: its precision stays above $99\%$ and $93\%$, and its accuracy within $0.2$ points on CIFAR-10 and $2.1$ points on CIFAR-20 of the $N_v=2{,}000$ result. With only $200$ validation images, CAVA reaches $91.47\%$ on CIFAR-10 and $73.21\%$ on CIFAR-20, above TW-LAVA with ten times more validation data ($88.75\%$ and $59.03\%$). These numbers use the supervised embedder of the main tables; on CLIP features the detection AUC of CAVA falls from $0.994$ at $N_v=2{,}000$ to $0.978$ at $N_v=50$ (Appendix~\ref{app:representations}). The label margin needs only enough validation images to place each class, not to estimate the feature distribution of every noisy label group.

\begin{table}[htb]
\centering
\caption{Effect of the validation-set size $N_v$ on CIFAR-10 and CIFAR-20 ($\eta=25\%$, fixed cut $p=25\%$, CE+SCL-NL). Prec.\ is the share of true diagonal noise in the recycled branch. Mean $\pm$ std over three seeds; best per row in \textbf{bold}, ties broken by the unrounded value.}
\label{tab:nv_coverage}
\small
\setlength{\tabcolsep}{5pt}
\begin{tabular}{@{}lllccccc@{}}
\toprule
Dataset & Method & $N_v$ & 200 & 500 & 1{,}000 & 1{,}500 & 2{,}000 \\
\midrule
\multirow{6}{*}{CIFAR-10} & \multirow{3}{*}{TW-LAVA} & Detector AUC & \ms{0.98}{0.00} & \ms{0.99}{0.00} & \ms{0.99}{0.00} & \ms{1.00}{0.00} & \msbest{1.00}{0.00} \\
 & & CE-branch Prec.\ (\%) & \ms{91.52}{0.58} & \ms{92.99}{0.29} & \ms{94.21}{0.23} & \ms{95.27}{0.18} & \msbest{95.46}{0.09} \\
 & & Accuracy (\%) & \ms{84.94}{0.06} & \ms{86.29}{0.31} & \ms{87.82}{0.25} & \ms{88.64}{0.07} & \msbest{88.75}{0.40} \\
\cmidrule(l){2-8}
 & \multirow{3}{*}{CAVA (ours)} & Detector AUC & \ms{1.00}{0.00} & \ms{1.00}{0.00} & \ms{1.00}{0.00} & \ms{1.00}{0.00} & \msbest{1.00}{0.00} \\
 & & CE-branch Prec.\ (\%) & \ms{99.23}{0.11} & \ms{99.29}{0.14} & \ms{99.32}{0.13} & \msbest{99.34}{0.14} & \ms{99.33}{0.15} \\
 & & Accuracy (\%) & \ms{91.47}{0.16} & \msbest{91.69}{0.17} & \ms{91.51}{0.17} & \ms{91.69}{0.46} & \ms{91.57}{0.14} \\
\multirow{6}{*}{CIFAR-20} & \multirow{3}{*}{TW-LAVA} & Detector AUC & \ms{0.90}{0.00} & \ms{0.92}{0.00} & \ms{0.93}{0.00} & \ms{0.94}{0.00} & \msbest{0.94}{0.00} \\
 & & CE-branch Prec.\ (\%) & \ms{72.32}{0.31} & \ms{75.04}{0.18} & \ms{76.77}{0.22} & \ms{78.00}{0.04} & \msbest{78.54}{0.18} \\
 & & Accuracy (\%) & \ms{52.29}{0.56} & \ms{55.67}{1.07} & \ms{57.49}{0.75} & \ms{58.18}{0.79} & \msbest{59.03}{0.77} \\
\cmidrule(l){2-8}
 & \multirow{3}{*}{CAVA (ours)} & Detector AUC & \ms{0.99}{0.00} & \ms{1.00}{0.00} & \ms{1.00}{0.00} & \ms{1.00}{0.00} & \msbest{1.00}{0.00} \\
 & & CE-branch Prec.\ (\%) & \ms{93.11}{0.04} & \ms{94.30}{0.17} & \ms{94.62}{0.14} & \ms{95.31}{0.09} & \msbest{95.52}{0.15} \\
 & & Accuracy (\%) & \ms{73.21}{0.49} & \ms{74.13}{0.56} & \ms{74.32}{0.54} & \ms{74.79}{0.34} & \msbest{75.34}{0.17} \\
\bottomrule
\end{tabular}
\end{table}


\subsection{Scaling to One Million Samples}
\label{large-scale_dataset}

To further evaluate the scalability of CAVA with respect to the number of training samples, we conduct an additional large-sample ablation study on CIFAR-10 under $25\%$ diagonal complementary-label noise ($\eta=25\%$). Specifically, we expand the original CIFAR-10 training set from 50{,}000 images to 1{,}000{,}000 training samples using label-preserving data augmentation, denoted as CIFAR-10-1M.\footnote{CIFAR-10-1M is an augmented benchmark generated from CIFAR-10 using label-preserving augmentation, increasing the training set from 50K to 1M samples. Therefore, it evaluates scalability with respect to training-set size rather than naturally collected semantic diversity.} After augmentation, complementary labels are generated following the same diagonal-noise protocol as in the main experiments, with noise ratio $\eta=25\%$. The original CIFAR-10 test set is kept unchanged for evaluation. This design allows us to assess whether CAVA remains effective under a substantially enlarged noisy complementary-label training regime without introducing test-set leakage or changing the evaluation protocol.

Figure~\ref{fig:scale_budget}(a) in the main text visualizes the accuracy--efficiency trade-off on CIFAR-10-1M, and Table~\ref{tab:cifar10_1m_scalability} reports the corresponding numerical results. Without pruning, SCL-NL fails on the enlarged noisy pool and reaches only $0.12\%$. TW-LAVA peaks at the cut $p=35\%$, above the noise rate, with $71.05\%$ after 200 epochs ($68.26\%$ at $p=25\%$ and $70.28\%$ at $p=45\%$) and $67.76\%$ under the 20-epoch schedule of CAVA ($61.01\%$ and $55.95\%$ at $p=25\%$ and $45\%$), so the gap to CAVA does not come from the training budget. CAVA ranks the noise at the low end of each task (detection AUC $0.996$) and peaks at the cut $p=25\%$ that equals the noise rate, where it reaches $86.04\%$ with SCL-NL after only 20 epochs; deeper cuts remove valid samples and lower accuracy ($78.05\%$ and $67.30\%$ at $p=35\%$ and $45\%$ after 20 epochs, $84.95\%$ and $73.25\%$ after 200 epochs, against $90.51\%$ at $p=25\%$). Cuts below the noise rate leave diagonal noise in the retained pool and fail for both methods: at $p=15\%$ TW-LAVA reaches $27.79\%$ and $42.90\%$ after 20 and 200 epochs and CAVA $24.99\%$ after 20 epochs, while its 200-epoch run collapses to a single class ($10.00\%$) from epoch 38 on and does not recover.

The computational results show that the task-wise decomposition keeps both methods feasible at this scale, and that the label margin makes valuation nearly free. The valuation is decomposed into five tasks of 200{,}000 samples each: TW-LAVA needs about 30 minutes per task, about 2.5 hours in total, whereas CAVA scores all one million samples in about 4 minutes, mainly because the complementary label cost needs no label-to-label transport between noisy groups. Training is the dominant cost, and it depends only on the size of the retained pool: on one dedicated GPU an epoch takes 566 seconds without pruning and 436 and 432 seconds at $p=25\%$ for TW-LAVA and CAVA (Table~\ref{tab:cifar10_1m_scalability}). These are the times of the second epoch, without the one-time start-up of about 20 seconds; the two methods agree within $4\%$ at the same cut, and the CAVA times at $p=15\%$ come from a linear fit in the pool size whose residuals at the measured cuts are below $2\%$. TW-LAVA at its best cut ($p=35\%$) needs $20.7$ hours for 200 epochs, whereas CAVA reaches a higher accuracy, $86.04\%$, with the 20-epoch schedule in $2.4$ hours, a tenfold smaller training budget; with 200 epochs CAVA reaches $90.51\%$ ($24.0$ hours).

\begin{table}[htb]
\centering
\caption{
Scaling to CIFAR-10-1M ($\eta=25\%$, SCL-NL, single run). Training time is the measured time of one epoch on a dedicated NVIDIA RTX A6000, multiplied by 20 or 200; it depends only on the size of the retained pool. Valuation adds about $2.5$ hours for TW-LAVA and $4$ minutes for CAVA. The CAVA times at $p=15\%$ are estimated by a linear fit in the pool size. The best-epoch columns are for reference only; the text uses the final epoch. Best per column in \textbf{bold}; ``--'': not run.
}
\label{tab:cifar10_1m_scalability}
\small
\setlength{\tabcolsep}{5pt}
\begin{tabular}{@{}lc cc cc cc@{}}
\toprule
& & \multicolumn{2}{c}{Training time} & \multicolumn{2}{c}{Acc. (\%), final epoch} & \multicolumn{2}{c}{Acc. (\%), best epoch} \\
\cmidrule(lr){3-4} \cmidrule(lr){5-6} \cmidrule(l){7-8}
\textbf{Method} & $p$ (\%) & 20 ep. & 200 ep. & 20 ep. & 200 ep. & 20 ep. & 200 ep. \\
\midrule
No pruning & 0 & 3.1 h & 31.5 h & 0.12 & -- & 2.52 & -- \\
\midrule
\multirow{4}{*}{TW-LAVA} & 15 & 2.7 h & 27.4 h & 27.79 & 42.90 & 27.96 & 43.66 \\
 & 25 & 2.4 h & 24.2 h & 61.01 & 68.26 & 61.01 & 69.64 \\
 & 35 & 2.1 h & 20.7 h & 67.76 & 71.05 & 67.80 & 72.40 \\
 & 45 & 1.8 h & 17.7 h & 55.95 & 70.28 & 56.00 & 71.14 \\
\midrule
\multirow{4}{*}{CAVA (ours)} & \cellcolor{green!12}15 & \cellcolor{green!12}2.7 h & \cellcolor{green!12}27.2 h & \cellcolor{green!12}24.99 & \cellcolor{green!12}10.00 & \cellcolor{green!12}24.99 & \cellcolor{green!12}21.02 \\
 & \cellcolor{green!12}25 & \cellcolor{green!12}2.4 h & \cellcolor{green!12}24.0 h & \cellcolor{green!12}\textbf{86.04} & \cellcolor{green!12}\textbf{90.51} & \cellcolor{green!12}\textbf{86.04} & \cellcolor{green!12}\textbf{91.19} \\
 & \cellcolor{green!12}35 & \cellcolor{green!12}2.0 h & \cellcolor{green!12}20.0 h & \cellcolor{green!12}78.05 & \cellcolor{green!12}84.95 & \cellcolor{green!12}78.09 & \cellcolor{green!12}86.70 \\
 & \cellcolor{green!12}45 & \cellcolor{green!12}1.7 h & \cellcolor{green!12}17.2 h & \cellcolor{green!12}67.30 & \cellcolor{green!12}73.25 & \cellcolor{green!12}67.30 & \cellcolor{green!12}73.42 \\
\bottomrule
\end{tabular}
\end{table}

Overall, these results indicate that CAVA scales favorably with increased training-set size through task-wise decomposition. However, because CIFAR-10-1M is generated through label-preserving augmentation rather than collected as a naturally large and semantically diverse dataset, we view evaluation on naturally large-scale CLL datasets as an important direction for future work.

\subsection{Computational Efficiency of Task-Wise Decomposition}
\label{app:computational_efficiency}

The Sinkhorn algorithm can become computationally expensive when OT is computed over the full training and validation sets. To examine whether the proposed task-wise decomposition improves computational feasibility, we conduct an additional efficiency comparison among LAVA, SAVA, and CAVA on CIFAR-10. Specifically, we use 10K training samples and 10K validation samples and report the wall-clock runtime required for one OT-based data-selection step.

\begin{table}[htb]
\centering
\caption{Runtime of one OT-based selection step on CIFAR-10 with 10K training and 10K validation samples. The CAVA time is the mean of three runs on one GPU, with features computed beforehand.}
\label{tab:runtime_comparison}
\small
\resizebox{\textwidth}{!}{
\begin{tabular}{lcccl}
\toprule
\textbf{Method} 
& \textbf{Training Size} 
& \textbf{Validation Size} 
& \textbf{Data-Selection Runtime} 
& \textbf{Remark} \\
\midrule
LAVA 
& 10K 
& 10K 
& 45 seconds 
& Full-set OT computation \\

SAVA 
& 10K 
& 10K 
& 300 seconds 
& Batch-wise OT computation \\

TW-LAVA (task-wise) 
& 10K 
& 10K 
& 45 seconds 
& One task-wise OT selection step \\

CAVA (ours) 
& 10K 
& 10K 
& 0.2 seconds 
& OT solve and label margin on precomputed features \\
\bottomrule
\end{tabular}
}
\end{table}

As shown in Table~\ref{tab:runtime_comparison}, LAVA and task-wise selection with one solve per task require a comparable runtime in the 10K-sample setting because both solve an OT problem of similar size. CAVA solves a problem of the same size in $0.2$ seconds on precomputed features; its label cost is a fixed $0/1$ table and needs no OT between label groups. However, the key difference lies in their scalability. LAVA performs OT over the full data selection problem, and therefore its computational and memory costs increase rapidly as the training set becomes larger. In contrast, CAVA keeps the OT problem size fixed by decomposing the enlarged training set into task-level subproblems. We further compare the peak memory usage of different methods as the training-set size increases. The results are reported in Table~\ref{tab:memory_comparison}.

\begin{table}[htb]
\centering
\caption{Peak GPU memory of OT-based selection as the training set grows. LAVA, SAVA, and TW-LAVA run their original pipelines with $10{,}000$ validation images (TW-LAVA with tasks of 10K samples). For CAVA we report its valuation step, without feature extraction, with the $2{,}000$ validation images of the paper and with $10{,}000$ for a direct comparison. OOM: out of memory; ``--'': not measured.}
\small
\label{tab:memory_comparison}
\setlength{\tabcolsep}{4pt}
\begin{tabular}{@{}lccccc@{}}
\toprule
& & & & \multicolumn{2}{c}{\textbf{CAVA (ours)}} \\
\cmidrule(l){5-6}
\textbf{Training set} & \textbf{LAVA} & \textbf{SAVA} & \textbf{TW-LAVA} & $N_v=2{,}000$ & $N_v=10{,}000$ \\
\midrule
10K & 21.65 GB & 3.00 GB & 21.65 GB & \textbf{0.10 GB} & 0.35 GB \\
20K & 43.00 GB & 3.00 GB & 21.65 GB & \textbf{0.17 GB} & 0.65 GB \\
30K & OOM      & 3.00 GB & 21.65 GB & \textbf{0.23 GB} & 0.95 GB \\
40K & OOM      & 3.00 GB & 21.65 GB & \textbf{0.30 GB} & 1.25 GB \\
50K & OOM      & 3.00 GB & 21.65 GB & \textbf{0.35 GB} & 1.55 GB \\
1M  & OOM      & --      & --       & \textbf{0.42 GB} & 1.86 GB \\
\bottomrule
\end{tabular}
\end{table}

Table~\ref{tab:memory_comparison} shows that LAVA becomes increasingly memory-intensive as the training-set size grows. In particular, its memory usage reaches 43.00 GB at 20K training samples and runs out of memory from 30K samples onward in our environment. This behavior is expected because full-set OT requires constructing and optimizing over a large transport problem whose size grows with the number of training and validation samples.

SAVA, by contrast, maintains low memory usage of approximately 3.00 GB because it performs OT at the batch level. However, this memory advantage comes with substantially longer runtime in our experiment, as shown in Table~\ref{tab:runtime_comparison}. Moreover, batch-wise selection may restrict the comparison scope of OT to small local batches.

TW-LAVA, the task-wise transfer of LAVA, avoids the out-of-memory failure by fixing the task size, but its original pipeline still needs about 21.65 GB. CAVA needs far less: its complementary label cost is a lookup table, so the valuation step stores only the task-by-validation cost matrix and a few vectors. At the paper protocol of $2{,}000$ validation images it uses $0.10$ to $0.35$ GB for 10K to 50K training samples, and $0.42$ GB for one million samples, where pools larger than $12{,}000$ samples are scored in chunks; even with $10{,}000$ validation images it stays below $2$ GB. Memory therefore grows with the validation set and the chunk size, and with the training set only until the pools are scored in chunks, and the task-wise subproblems remain independent, so they can be run in parallel across devices without changing the method.

\FloatBarrier
\section{Visualizing the Label Margin}
\label{app:margin_viz}
Figures~\ref{fig:margin_samples_clcifar10} and~\ref{fig:margin_samples_places10} show how CAVA decides on individual training samples, in the style of the retrieval visualization of~\citet{long2022retrieval}. Each figure shows eight samples drawn at random from the first task, half of them annotation errors ($\bar y_i=y_i$) and half valid complementary labels, with the OT solution of the main experiments. Each row gives the image (red border: annotation error), the labels of the six validation images with the largest transport weight from this sample, with these weights (green: the true class), the label potentials $f_i(c)$ relative to their minimum for the six cheapest classes (the given label $\bar y_i$ in red), the margins of all samples in the task with this sample marked and the threshold $s_i=0$ of the adaptive cut drawn as a dashed line, and the decision of the adaptive cut.

The rows make the mechanism of Theorem~\ref{thm:margin} visible. The validation images that receive the transport mass of a sample almost always carry its true class. For an annotation error, the given label is that class, so $f_i(\bar y_i)$ is the smallest potential, the margin $s_i=f_i(\bar y_i)-\min_{c\ne\bar y_i}f_i(c)$ is negative, and the sample is flagged. For a valid complementary label, the given label names another class, its potential lies above the minimum, and the sample is kept. The examples are not selected for success: on Places365-10 (Figure~\ref{fig:margin_samples_places10}), one annotation error labeled ``bridge'' shows mostly a forest stream, its transport mass goes to forest-path images, and the adaptive cut keeps it. Such failures are rare, as the margin histograms show: the two groups overlap only in a thin band around the threshold $s_i=0$.

\begin{figure*}[p]
\centering
\includegraphics[width=\textwidth, height=0.88\textheight, keepaspectratio]{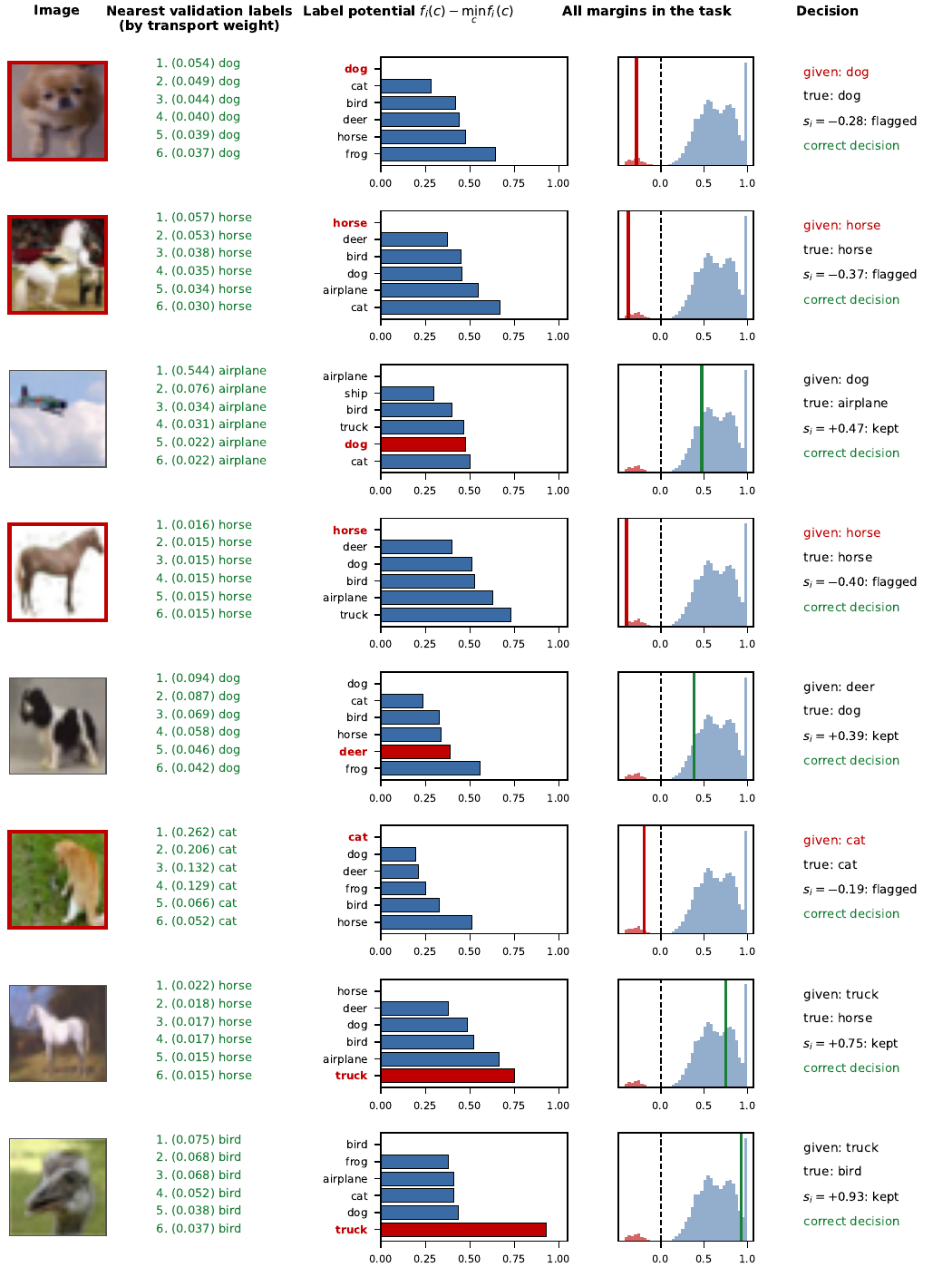}
\caption{Label-margin visualization for randomly drawn samples from CLCIFAR-10 (human complementary labels, $3.98\%$ annotation errors; first task, $2{,}000$ validation images). Columns: image (red border: annotation error), labels of the validation images with the largest transport weight, with the weights (green: true class), label potentials $f_i(c)-\min_c f_i(c)$ with the given label in red, margins of all samples in the task (blue: valid, red: annotation error; dashed: adaptive-cut threshold $s_i=0$) with this sample marked, and the decision of the adaptive cut.}
\label{fig:margin_samples_clcifar10}
\end{figure*}

\begin{figure*}[p]
\centering
\includegraphics[width=\textwidth, height=0.88\textheight, keepaspectratio]{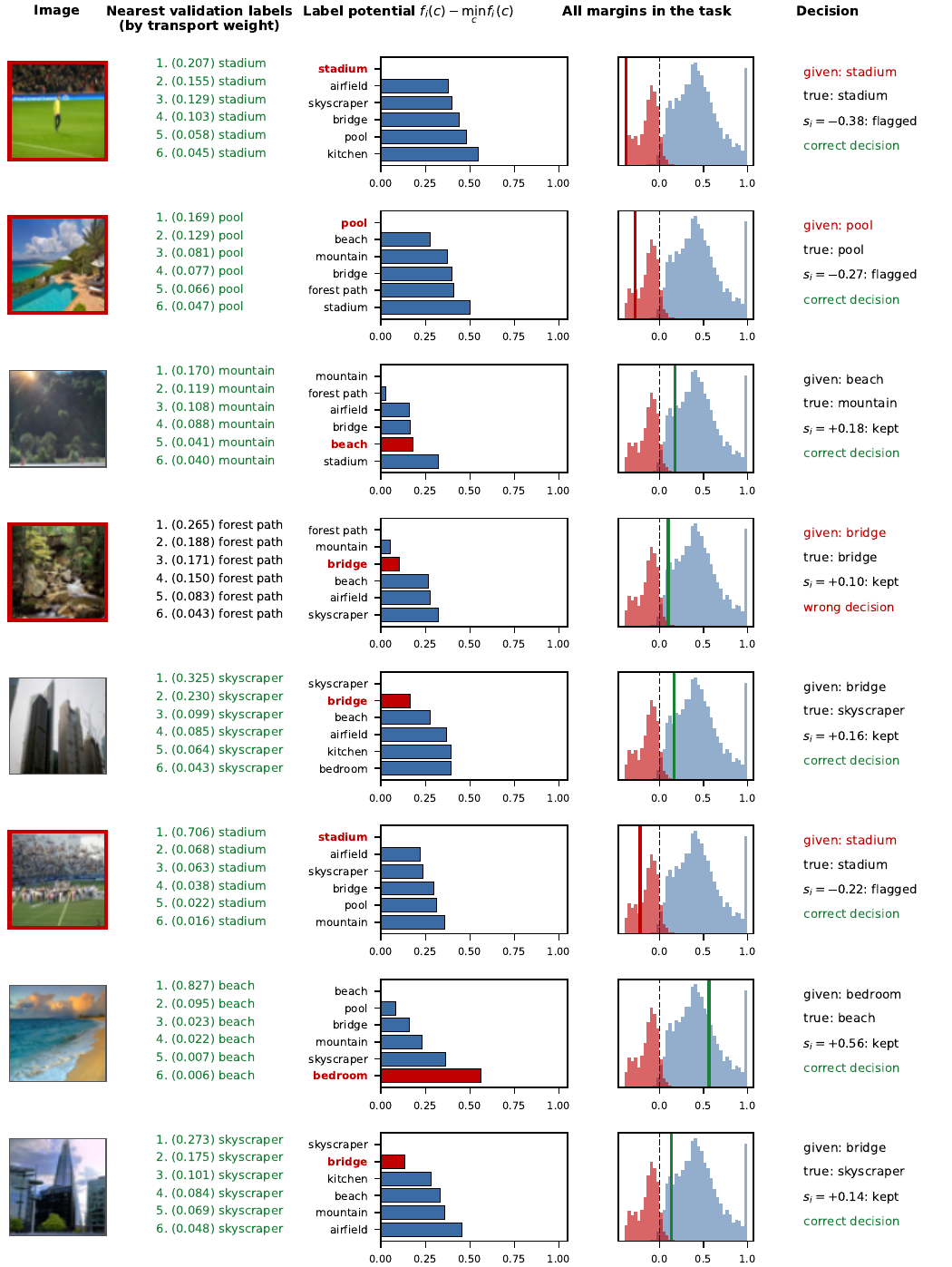}
\caption{Label-margin visualization for randomly drawn samples from Places365-10, ten scene classes of Places365~\citep{zhou2018places} with synthetic diagonal noise ($\eta=25\%$; first task, $2{,}000$ validation images). Columns as in Figure~\ref{fig:margin_samples_clcifar10}. The fourth row is a failure: an annotation error whose image resembles a forest path is kept.}
\label{fig:margin_samples_places10}
\end{figure*}

\end{document}